\theoremstyle{plain}
\newtheorem{theorem}{Theorem}[section]
\newtheorem{proposition}[theorem]{Proposition}
\newtheorem{lemma}[theorem]{Lemma}
\theoremstyle{definition}
\newtheorem{definition}[theorem]{Definition}
\newtheorem{assumption}[theorem]{Assumption}
\theoremstyle{remark}
\newtheorem{remark}[theorem]{Remark}
\icmltitlerunning{Probabilistic forecasting with stochastic interpolants and F\"ollmer Processes}
\begin{document}

\twocolumn[
\icmltitle{Probabilistic Forecasting with Stochastic Interpolants and F\"ollmer Processes}

\icmlsetsymbol{equal}{*}

\begin{icmlauthorlist}
\icmlauthor{Yifan Chen}{equal,nyu}
\icmlauthor{Mark Goldstein}{equal,nyu}
\icmlauthor{Mengjian Hua}{equal,nyu}\\
\icmlauthor{Michael S. Albergo}{nyu}
\icmlauthor{Nicholas M. Boffi}{nyu}\\
\icmlauthor{Eric Vanden-Eijnden}{nyu}
\end{icmlauthorlist}

\icmlaffiliation{nyu}{Courant Institute of Mathematical Sciences, New York University, New York, NY, USA}

\icmlcorrespondingauthor{Eric Vanden-Eijnden}{eve2@nyu.edu}

\icmlkeywords{Machine Learning, ICML}

\vskip 0.3in
]

\printAffiliationsAndNotice{\icmlEqualContribution} 

\begin{abstract}
We propose a framework for probabilistic forecasting of dynamical systems based on generative modeling.
Given observations of the system state over time, we formulate the forecasting problem as sampling from the conditional distribution of the future system state given its current state.
To this end, we leverage the framework of stochastic interpolants, which facilitates the construction of a generative model between an arbitrary base distribution and the target. 
We design a fictitious, non-physical stochastic dynamics that takes as initial condition the current system state and produces as output a sample from the target conditional distribution in finite time and without bias. 
This process therefore maps a point mass centered at the current state onto a probabilistic ensemble of forecasts. 
We prove that the drift coefficient entering the stochastic differential equation (SDE) achieving this task is non-singular, and that it can be learned efficiently by square loss regression over the time-series data.
We show that the drift and the diffusion coefficients of this SDE can be adjusted after training, and that a specific choice that minimizes the impact of the estimation error gives a F\"ollmer process.
We highlight the utility of our approach on several complex, high-dimensional forecasting problems, including stochastically forced Navier-Stokes and video prediction on the KTH and CLEVRER datasets.
The code is available 
at 
\href{https://github.com/interpolants/forecasting}{https://github.com/interpolants/forecasting}.
\end{abstract}
%%%%%%%%%%%%%

\section{Introduction}
\label{sec:intro}
Forecasting the future state of a dynamical system given complete or partial information about the current state is a ubiquitous problem across science and engineering, with diverse applications in climate modeling~\cite{smagorinsky1963, palmer1992, Gneiting2005, pathak2022fourcastnet}, fluid dynamics~\cite{buaria_forecasting_2023}, video prediction~\cite{oprea_review_2022, finn_unsupervised_2016, lee_stochastic_2018}, and extrapolation of time series data~\cite{lim_time-series_2021, masini_machine_2023}.
Forecasting can be performed \textit{deterministically}, whereby the goal is to compute a single prediction for the future state~\cite{giannakis_learning_2023}, or \textit{probabilistically}, whereby the goal is to predict a distribution over future states consistent with the current information~\cite{gneiting2014}.
Probabilistic forecasting is the natural formulation when the underlying system dynamics are stochastic, when the full system state cannot be measured, or when measurements are corrupted by noise, as is the case for most real-world systems of interest.
Moreover, while deterministic forecasting appears to be a simpler problem, recent works have uncovered difficulties with deterministic forecasting methods when the underlying dynamics are chaotic~\cite{jiang2023training}.

Motivated by the recent success of generative models built upon dynamical transport of measure, such as score-based diffusion \cite{ho2020denoising, song2020score}, flow matching \cite{lipman2022flow}, and stochastic interpolants \cite{albergo2022building, albergo2023stochastic}, here we introduce a generative modeling approach for probabilistic forecasting. 
This approach maps the current state of the system onto the ensemble of possible future outcomes after a fixed time lag. From a transport perspective, this requires pushing a point mass measure onto a probability distribution with larger support.
In what follows, we show that the interpolant framework enables us to design an artificial dynamics that performs this task using a stochastic differential equation. 
In practice, the drift fields entering the SDEs we introduce can be learned via square loss regression.
We also show that the diffusion coefficient in these SDEs can be tuned \textit{a-posteriori} (i.e. without having to retrain a drift).
We show that a specific choice that minimizes the impact of the estimation error recovers a F\"ollmer process~\cite{follmer1986time}, a specific instantiation of the Schr\"odinger bridge problem~\cite{schrodinger1932theorie,leonard2014survey} in which the base distribution is a point mass measure. 

To demonstrate the utility and scalability of our approach, we consider several examples: an illustrative low-dimensional scenario based on a multi-modal jump diffusion process, a high-dimensional problem generated by a stochastic Navier-Stokes equation on the torus, and video generation on the KTH \cite{schuldt2004recognizing} and CLEVRER datasets \cite{yi2019clevrer}.
In the case of Navier-Stokes, we show that our probabilistic forecasting method is capable of reproducing quantitative metrics, such as the enstrophy spectrum of the dataset, using either high- or low-resolution measurement of the current system state.
In each case, we highlight that we are able to obtain diverse samples consistent with the conditional distribution of interest, and demonstrate the need for probabilistic, as opposed to deterministic, forecasting. 
For the video generation tasks, we show that our learned models are more effective than standard conditional generative modeling.
We also show that this forecasting procedure can be iterated autoregressively without retraining to compute a predicted trajectory.
Our \textbf{main contributions} can be summarized as follows:
\begin{itemize}[leftmargin=0.15in]
    \item We design new generative models for probabilistic forecasting based on stochastic differential equations (SDEs) that map a point mass measure to a distribution with full support by incorporating stochasticity in a principled way, enabling us to initialize the SDE directly at the measured system state.
    \item We prove that the drifts entering these SDEs can be learned via square loss regression over the data, and that the resulting loss has bounded variance.
    \item We show that the drift and noise terms in these SDEs can be adjusted post-training, and that the specific choice of noise that minimizes the Kullback-Leibler (KL) divergence between the path measures of the exact forecasting process and the estimated one is realizable. We show that this drift and diffusion pair gives a F\"ollmer process.
    \item We validate our theoretical results empirically on several challenging high-dimensional forecasting tasks, including the Navier-Stokes equation and video prediction.
\end{itemize}

\section{Related Work}
\label{sec:related}
There is a vast body of literature on forecasting. Methodologies can be broadly classified into two main categories: forecasting a single output, typically achieved through regression and operator theoretic approaches \cite{kutz2016dynamic,alexander2020operator,li2021learning}, and probabilistic forecasting \cite{gneiting2014}, based on stochastic and generative modeling. 

\vspace{-0.5em}
\paragraph{\textbf{Deterministic vs Probabilistic Forecasting.}}
For the class of methodologies focused on generating a single output,
a widely used approach is regression or supervised learning, which entails directly learning a map in the state space. 
One can also work in the space of probability densities or functionals on the state space, under the setting of the Frobenius-Perron or Koopman operator approach \cite{dellnitz1999approximation,kaiser2021data}; this leads to linear dynamics in an infinite-dimensional space, and forecasting reduces to finding a tractable finite-dimensional approximation.
In all formulations, the key is to identify and learn accurate representations of the dynamics, for example through nonparametric approaches such as diffusion maps \cite{berry2015nonparametric} and kernel regression \cite{alexander2020operator}, or parametric approaches such as linear regression, dynamical mode decomposition \cite{kutz2016dynamic}, neural networks \cite{li2021learning, gu2021efficiently} and operators \cite{lu2021learning,jiang2023training,li2020fourier}. Ultimately, these approaches produce a function that maps the current state to a single output which is the deterministic forecast. 
Many methods under this category train models with the MSE (mean square error) or RMSE (root mean square error) as the objective, but these losses may be poor signals for training forecasters for chaotic systems \cite{jiang2023training}. 

For dynamics that are inherently stochastic, or with incomplete information, a stochastic forecast is important to incorporate uncertainties. This stochasticity can be introduced by fitting a probabilistic model, such as a stochastic process or a graphical model, to data. They can also be approached by stochastic Koopman operators \cite{wanner2022robust,zhao2023data}.
Although many approaches have been proposed, most of them target low-dimensional problems or simple conditional statistics such as Gaussians. 

\vspace{-1em}
\paragraph{\textbf{Probabilistic Forecasting with Generative Models.}}
Recently, generative modeling techniques in machine learning have received increasing attention for handling high-dimensional, complex distributions. Probabilistic forecasting can be seen as a form of the conditional generation problem. 
Various conditional generative models, including conditional GAN \cite{mirza2014conditional}, VAE \cite{sohn2015learning}, and normalizing flows \cite{Kalman_Normalizing, kidger2021neural} have been developed. More recently, diffusion generative models \cite{ho2020denoising,song2020score} have gained popularity due to their state-of-the-art performance. The stochastic interpolant methodology, which is related to contemporary work like flow matching \cite{lipman2022flow}, is a general framework that encompasses diffusion models. 
In the literature, there has been some work to develop conditional models for time series \cite{rasul2021autoregressive, Lienen2023FromZT},
including diffusion 
\cite{ho2022imagen,blattmann2023stable}
and flow matching models for video prediction \cite{davtyan2023efficient}.
These approaches learn ODEs or SDEs that map a Gaussian base to the conditional distribution of interest. 
There are also works that forecast stochastic dynamics by adding noise to the neural network layers \cite{cachay2023dyffusion} using techniques such as dropout. 
In contrast, we incorporate stochasticity rigorously through interpolants and direct SDE modeling. 
Our construction of the stochastic generative model, which maps the current state to the distribution of the forecasted state, is new and can be seen as a direct stochastic extension of the deterministic map approach that is predominant in single output forecasting. 

\vspace{-0.5em}
\paragraph{\textbf{F\"ollmer processes.}} {The construction of SDEs that map a point mass to a target distribution dates back at least to the F\"ollmer process \cite{follmer1986time}, which is a particular solution of the Schr\"odinger bridge problem \cite{schrodinger1932theorie,leonard2014survey,chen2021stochastic} that minimizes the relative entropy with respect to the Wiener process. This approach has the desirable feature that it offers an entropy-regularized solution to the optimal transport problem.
For this reason, the concepts of the F\"ollmer process and the Schr\"odinger bridge have found many applications in sampling densities with unknown normalization constants \cite{zhang2021path,huang2021schrodinger, jiao2021convergence,vargas2023bayesian}, generative modeling \cite{tzen2019theoretical, wang2021deep,debortoli2021diffusion,liu20232,peluchetti2023non,shi2024diffusion} as well as stochastic analysis and functional inequalities \cite{lehec2013representation,eldan2018regularization,eldan2020stability}.
In this article, we show that the  stochastic interpolant framework offers a simple way to construct new types of F\"ollmer processes.
In addition, we give a new interpretation of F\"ollmer processes as minimizers of a KL divergence between the path measure of an SDE that forecasts exactly and the approximate, learned SDE. 
These results support the uses of F\"ollmer processes in probabilistic forecasting.}

\section{Setup and Main Results}
\label{sec:setup}

\subsection{Conditional PDF}
\label{sec:cond}
Assume that we are given a joint probability density function (PDF) $\rho(x_0,x_1)$ supported on $\R^d \times \R^d$ and strictly positive everywhere.
Our aim is to design a generative model to sample the conditional PDF of $x_1$ given $x_0$:
\begin{equation}
    \label{eq:pdf:c}
    \rho_c(x_1|x_0) = \frac{\rho(x_0,x_1)}{\rho_0(x_0)}>0, 
\end{equation}
where $\rho_0(x_0) = \int_{\R^d} \rho(x_0,x_1) dx_1>0$.\footnote{%
Our approach actually requires that $x_1|x_0$ has a positive density, but $(x_0,x_1)$ does not have to. In particular, we can target any density $\rho_1(x_1)>0$ by drawing $(x_0,x_1)$ from the joint distribution $\mu(dx_0,dx_1) = \mu_0(dx_0) \rho_1(x_1) dx_1$ with any $\mu_0$ (e.g. a point mass), since we then have $\rho_c(x_1|x_0)= \rho_1(x_1)$.}
Phrasing the question this general way allows us to consider several instantiations of interest by appropriately defining the joint PDF $\rho$.  In particular, we study the following three problem settings:

\paragraph{\textbf{Probabilistic forecasting.}} Suppose that we are given a discrete time-series $\{\ldots, x_{-\tau},x_0,x_\tau,\ldots\}= \{x_{k\tau}\}_{k\in \Z}$ with each $x_{k\tau}\in \R^d$  containing, for example, daily weather measurements or video frames, acquired every lag-time~$\tau>0$. Assume that this time-series is a stationary process\footnote{{This assumption can be relaxed; see Appendix \ref{appendix-discussion-stationarity}.}} and that the law of successive observations $(x_{k\tau},x_{(k+1)\tau})$ is captured by the joint PDF $\rho(x_{k\tau},x_{(k+1)\tau})$.
Then sampling from $\rho_c(\cdot|x_{k\tau})$ produces the ensemble of forecasts $x_{(k+1)\tau}$ given the observation~$x_{k\tau}$. 

\vspace{-0.5em}
\paragraph{\textbf{Signal recovery from corrupted data.}} Suppose that, given clean data $x_1\in \R^d$, we observe the corrupted signal $x_0 \in \R^d$ (e.g. a low-resolution or noisy image obtained from a high-resolution image). %
If we assume that the joint PDF of $(x_0,x_1)$ is $\rho(x_0,x_1)$, then sampling from $\rho_c(\cdot|x_0)$  produces the ensemble of clean data $x_1$ consistent with the corrupted signal $x_0$. 

\vspace{-0.5em}
\paragraph{\textbf{Probabilistic forecasting from noisy observations.}} We can combine the previous two setups if we assume that we are given a discrete-time-series $\{x_{k\tau},\tilde x_{k\tau}\}_{k\in \Z}$  with $x_{k\tau}$ the clean data and $\tilde x_{k\tau}$ the corrupted observation at time $k\tau$. If we assume that the joint PDF of $(\tilde x_{k\tau}, x_{(k+1)\tau})$ is $\rho(\tilde x_{k\tau}, x_{(k+1)\tau})$, then sampling from $\rho_c(\cdot|\tilde x_{k\tau})$ produces the ensemble of clean forecasts $x_{(k+1)\tau}$ given the noisy observation $\tilde x_{k\tau}$. 

\subsection{Generation with Stochastic Interpolants}
\label{sec:stoch:interp}

The generative models that we develop here are based on stochastic differential equations that map a fixed initial condition $X_{s=0} = x_0$ to samples from the conditional distribution $X_{s=1}\sim \rho_c(\cdot|x_0)$.
Towards the design of such SDEs, we first introduce the \textit{stochastic interpolant}
\begin{equation}
    \label{eq:stoch:int:def}
    I_s = \alpha_s  x_0 + \beta_s  x_1 + \sigma_s  W_s
\end{equation} 
where $(x_0,x_1)\sim \rho(x_0,x_1)$ and $W= (W_s)_{s\in[0,1]}$ is a Wiener process with $W\perp(x_0,x_1)$. %
In addition, we impose that $\alpha,\beta,\sigma\in C^1([0,1])$ satisfy the boundary conditions $\alpha_0=\beta_1=1$ and $\alpha_1=\beta_0=\sigma_1=0$.
To facilitate some calculations, we assume that $\dot\beta_s >0$ for all $s\in(0,1]$ and $\dot \sigma_s<0$ for all $s\in[0,1]$.
Here we will use $\alpha_s=\sigma_s = 1-s$, and $\beta_s=s$ or $\beta_s=s^2$ (see Appendix~\ref{app:specifics}).
This second choice for $\beta_s$ has some advantages that we discuss below. 

The boundary conditions on $\alpha,\beta,$ and $\sigma$ guarantee that $I_{s=0} = x_0$ and $I_{s=1} = x_1$, so that the probability distribution of $I_s|x_0$ bridges the point mass measure at $x_0$ to $\rho_c(\cdot|x_0)$ as $s$ varies from 0 to 1.
The following result shows that this probability distribution is also the law of the solution to a SDE that can be used as a generative model.
\begin{theorem}
\label{th:2:gen}
Let $b_s(x,x_0)$ be the unique minimizer over all $\hat b_s(x,x_0)$ of the objective
\begin{equation}
\label{eq:loss}
    L_{b}[\hat b_s] = \int_0^1 \E\big [ |\hat b_s(I_s ,x_0)  - R_s |^2] ds,
\end{equation}
where $\E$ denotes an expectation over $(x_0,x_1)\sim \rho$ and $W$ with $(x_0,x_1) \perp W$, $I_s$ is given in~\eqref{eq:stoch:int:def}, and we defined\footnote{%
Here and below the dot denotes derivative with respect to~$s$.}
\begin{equation}
    \label{eq:x:r}
    \begin{aligned}
     % x_s  &= \alpha_s  x_0 + \beta_s  x_1 + \sigma_s \sqrt{s} z,\\
     R_s  = \dot\alpha_s  x_0 + \dot\beta_s  x_1 +  \dot\sigma_s W_s,
     \end{aligned}
\end{equation}
Then the solutions to the SDE
\begin{equation}
    \label{eq:sde}
    dX_s= b_s(X_s,x_0) ds + \sigma_s  dW_s, \quad X_{s=0} = x_0,
\end{equation}
are such that $\text{Law}(X_s) = \text{Law}(I_s|x_0) $ for all $(s,x_0)\in [0,1]\times \R^d$.
In particular, $X_{s=1}\sim \rho_c(\cdot |x_0)$.
\end{theorem}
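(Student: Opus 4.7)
The plan is to identify the minimizer $b_s$ explicitly as a conditional expectation, then show that the conditional law of $I_s$ given $x_0$ satisfies the same Fokker--Planck equation as the SDE, and finally invoke uniqueness.

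First, I would characterize the minimizer. Since $L_b[\hat b]$ is a quadratic functional of $\hat b$ over the Hilbert space of square-integrable functions of $(I_s, x_0)$, the unique minimizer is the orthogonal projection of $R_s$ onto this space, i.e.
\begin{equation*}
b_s(x,x_0) = \E[R_s \mid I_s = x,\, x_0] = \dot\alpha_s x_0 + \dot\beta_s \E[x_1\mid I_s=x, x_0] + \dot\sigma_s \E[W_s\mid I_s=x, x_0].
\end{equation*}
The assumptions $\dot\beta_s>0$ on $(0,1]$ and $\dot\sigma_s<0$ on $[0,1]$ will be used elsewhere in the paper to control non-singularity of $b_s$; for the identity in law result I only need existence of this conditional expectation, which is immediate under the assumption that $(x_0,x_1)$ admits a smooth strictly positive joint density and $W_s$ is Gaussian.

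Next, I would derive the evolution of the conditional law $\rho_s(\cdot\mid x_0) = \mathrm{Law}(I_s\mid x_0)$. Since $\alpha,\beta,\sigma$ are $C^1$ and deterministic, the stochastic interpolant satisfies $dI_s = R_s\, ds + \sigma_s\, dW_s$. For a smooth test function $\phi$, Itô's formula and conditioning on $x_0$ give
\begin{equation*}
\frac{d}{ds}\E[\phi(I_s)\mid x_0] = \E\bigl[\nabla\phi(I_s)\cdot R_s \mid x_0\bigr] + \tfrac{1}{2}\sigma_s^2 \E[\Delta\phi(I_s)\mid x_0].
\end{equation*}
By the tower property and the definition of $b_s$, the first term equals $\E[\nabla\phi(I_s)\cdot b_s(I_s,x_0)\mid x_0]$. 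This is the weak form of
\begin{equation*}
\partial_s \rho_s(x\mid x_0) = -\nabla\cdot\bigl(b_s(x,x_0)\rho_s(x\mid x_0)\bigr) + \tfrac{1}{2}\sigma_s^2 \Delta\rho_s(x\mid x_0),
\end{equation*}
with initial condition $\rho_0(\cdot\mid x_0) = \delta_{x_0}$ since $\alpha_0=1$ and $\beta_0=\sigma_0 \cdot 0 = 0$ (in particular $I_0 = x_0$ almost surely).

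On the other hand, the solution of the SDE \eqref{eq:sde} with $X_{s=0}=x_0$ has a law that satisfies exactly the same Fokker--Planck equation with the same initial condition $\delta_{x_0}$. Uniqueness of the solution (in the distributional sense) then forces $\mathrm{Law}(X_s) = \mathrm{Law}(I_s\mid x_0)$ for all $s\in[0,1]$, and setting $s=1$ and using $\alpha_1 = \sigma_1 = 0$, $\beta_1 = 1$ yields $X_1 \sim \rho_c(\cdot\mid x_0)$.

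The main obstacle I expect is justifying uniqueness of the Fokker--Planck equation and the regularity of $b_s$ near $s=0$, where the measure is a point mass and $b_s$ may a priori blow up. I would handle this by appealing to the paper's separate non-singularity result for $b_s$ (promised in the introduction and presumably proved alongside this theorem), which together with the smoothing by $\sigma_s^2 > 0$ on $[0,1)$ should give a classical solution on $(0,1]$ matched to $\delta_{x_0}$ at $s=0$ via a short-time Gaussian approximation coming from $\sigma_0 dW_s$. A clean alternative is to argue directly at the level of processes: construct the SDE solution on $[\varepsilon,1]$ with $X_\varepsilon \sim \mathrm{Law}(I_\varepsilon\mid x_0)$ using weak uniqueness, then let $\varepsilon\to 0$.
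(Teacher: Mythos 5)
Your proposal follows essentially the same route as the paper's proof in Appendix B.2: identify $b_s$ as the conditional expectation $\E[R_s\mid I_s=x,x_0]$ via the $L^2$ projection property, apply It\^o's formula to $\phi(I_s)$ and the tower property to derive the weak Fokker--Planck equation for $\mathrm{Law}(I_s\mid x_0)$, and match it with the equation satisfied by the law of the SDE solution. Your explicit attention to uniqueness of the Fokker--Planck solution and the behavior near $s=0$ is a point the paper handles only implicitly (it simply observes both laws satisfy the same equation, and defers regularity of $b_s$ at $s=0$ to a separate appendix), so your proposal is, if anything, slightly more careful on that step.
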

This theorem is proven in Appendix~\ref{app:proof:th1}. 
The result is formulated in a way that is tailored to practical approximation of the drift~$b_s$, since the objective~\eqref{eq:loss} can be estimated empirically by generating samples of $I_s$ and $R_s$ using sample pairs $(x_0,x_1)$ from $\rho$ and realizations of $W_s \stackrel{d}{=} \sqrt{s} z$ with $z\sim {\sf N}(0,\Id)$. 
That is, $b_s$ may be learned over neural networks by minimizing the simulation-free loss~\eqref{eq:loss} over the parameters.
It is easy to see that the minimizer is given by
\begin{equation}
    \label{eq:drift:cond}
    b_s(x,x_0) = \E^{x_0}[R_s | I_s = x],
\end{equation}
where $\E^{x_0}[\cdot|I_s =x]$ denotes an expectation over $x_1\sim \rho_c(\cdot|x_0)$ and $W$ with $x_1\perp W$ conditioned on the event $I_s =x$. 
The drift~\eqref{eq:drift:cond} is well-defined for all $(s,x,x_0)\in [0,1]\times\R^d\times \R^d$, and we show in Appendix~\ref{app:reg} that enforcing $\dot \beta_0 = 0$ offers additional control on the boundedness and Lipschitz constant of~$b$ at the initial time.
We observe empirically that this has computational advantages at both optimization and sampling time; see the experiments in Appendix~\ref{sec:detail:num}. 

\subsection{Generalizations with Tunable Diffusion}
\label{sec:gen:tunable}

We now show that learning the drift coefficient \eqref{eq:drift:cond} gives access to a broader set of SDEs to use as a generative model beyond just \eqref{eq:sde}, and that selecting from them has appealing theoretical motivation.

Let $\rho_s(x|x_0)$ be the PDF of $X_s \stackrel{d}{=} I_s|x_0$.
From~\eqref{eq:sde}, $\rho_s$ solves the Fokker-Planck equation
\begin{equation}
    \label{eq:FPE}
    \partial_s \rho_s + \nabla \cdot (b_s(x,x_0) \rho_s) = \tfrac12 \sigma^2_s \Delta \rho_s.
\end{equation}
Given a candidate diffusion coefficient $g_s$, we can use the identity $\tfrac12 \sigma^2_s \Delta \rho_s = \tfrac12 g^2_s \Delta \rho_s -\tfrac12 (g_s^2-\sigma_s^2) \nabla \cdot (\rho_s \nabla \log \rho_s)$ to trade diffusion for transport in~\eqref{eq:FPE}.
This construction leads to a family of SDEs with tunable diffusion 
\begin{theorem}
\label{th:1:gen}
Given any $g\in C^0([0,1])$ such that $\lim_{s\to0^+} s^{-1}[g_s^2-\sigma^2_s]$ and $\lim_{s\to1^-} g^2_s \sigma^{-1}_s$ exist, define
\begin{equation}
\label{eq:drift:bc}
    b^g_s(x,x_0) = b_s(x,x_0) + \tfrac12 (g^2_s -\sigma_s^2) \nabla \log \rho_s(x|x_0)
 \end{equation}
where  $b_s(x,x_0)$ is the  minimizer of~\eqref{eq:loss} given in~\eqref{eq:drift:cond} and $\rho_s(x|x_0)$ is the PDF of $X_s \stackrel{d}{=} I_s|x_0$.
Then the solutions to the SDE
\begin{equation}
    \label{eq:sde:g}
    dX^g_s= b^g_s(X^g_s,x_0) ds + g_s  dW_s, \quad X^g_{s=0} = x_0,
\end{equation}
are  such that $\text{Law}(X^g_s)= \text{Law}(X_s) = \text{Law}(I_s|x_0) $ for all $(s,x_0)\in [0,1]\times \R^d$. In particular $X^g_{s=1}\sim \rho_c(\cdot |x_0)$.
\end{theorem}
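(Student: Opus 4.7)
The plan is to verify that $\rho_s(\cdot|x_0)$ satisfies the Fokker--Planck equation (FPE) associated to the modified SDE~\eqref{eq:sde:g}, and then invoke uniqueness of solutions to the FPE with initial data $\delta_{x_0}$ to conclude $\text{Law}(X^g_s)=\rho_s(\cdot|x_0)=\text{Law}(I_s|x_0)$.

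First I would recall that, by Theorem~\ref{th:2:gen}, $\rho_s(x|x_0)$ solves the FPE~\eqref{eq:FPE}, namely $\partial_s \rho_s + \nabla\cdot(b_s\rho_s) = \tfrac12\sigma_s^2\Delta\rho_s$ with $\rho_0(\cdot|x_0)=\delta_{x_0}$. The FPE corresponding to~\eqref{eq:sde:g} reads
\begin{equation*}
\partial_s \tilde\rho_s + \nabla\cdot(b^g_s \tilde\rho_s) = \tfrac{1}{2} g_s^2 \Delta \tilde\rho_s,\qquad \tilde\rho_0(\cdot|x_0)=\delta_{x_0}.
\end{equation*}
Using the identity $\Delta\rho_s = \nabla\cdot(\rho_s\nabla\log\rho_s)$ already invoked before the theorem statement, one decomposes
\begin{equation*}
\tfrac{1}{2} g_s^2\Delta\rho_s = \tfrac{1}{2}\sigma_s^2\Delta\rho_s + \tfrac{1}{2}(g_s^2-\sigma_s^2)\nabla\cdot(\rho_s\nabla\log\rho_s),
\end{equation*}
and substitutes the FPE~\eqref{eq:FPE} to reduce the candidate FPE to the pointwise identity $\nabla\cdot((b^g_s-b_s)\rho_s) = \tfrac{1}{2}(g_s^2-\sigma_s^2)\nabla\cdot(\rho_s\nabla\log\rho_s)$. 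Since $g_s^2-\sigma_s^2$ depends only on $s$, this is satisfied precisely by the definition~\eqref{eq:drift:bc} of $b^g_s$. Hence $\rho_s(\cdot|x_0)$ solves the modified FPE with the same initial datum $\delta_{x_0}$.

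The main technical obstacle is well-posedness of~\eqref{eq:sde:g} near the two endpoints, because $\nabla\log\rho_s(x|x_0)$ is singular there. As $s\to 0^+$, $\rho_s$ concentrates toward $\delta_{x_0}$ (up to the Wiener noise of variance $\sigma_s^2$) and the score blows up at a rate of order $1/s$; the assumption that $\lim_{s\to 0^+}s^{-1}[g_s^2-\sigma_s^2]$ exists is exactly what is needed so that $\tfrac{1}{2}(g_s^2-\sigma_s^2)\nabla\log\rho_s$ remains bounded at $s=0$, keeping $b^g_s$ no more singular than $b_s$ there (whose regularity is already controlled via the choice $\dot\beta_0=0$ discussed in Appendix~\ref{app:reg}). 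As $s\to 1^-$, $\sigma_s\to 0$ and the noise term $g_s\,dW_s$ must be compatible with the vanishing spread of $\rho_s$; the assumed existence of $\lim_{s\to 1^-}g_s^2\sigma_s^{-1}$ controls the growth of $\tfrac{1}{2}(g_s^2-\sigma_s^2)\nabla\log\rho_s$ on the terminal side.

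Given these regularity inputs, the drift $b^g_s$ is admissible, the modified FPE has a unique solution with initial data $\delta_{x_0}$, and that solution is both the law of $X^g_s$ and $\rho_s(\cdot|x_0)$. Setting $s=1$ and using $\beta_1=1$, $\alpha_1=\sigma_1=0$ yields $X^g_{s=1}\sim\rho_c(\cdot|x_0)$, completing the argument. The bulk of the difficulty is thus not algebraic---it lies entirely in the endpoint regularity analysis, which the technical conditions on $g$ are engineered to handle.
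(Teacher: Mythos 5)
Your proposal is correct and follows essentially the same route as the paper's proof in Appendix~\ref{app:tunable}: both use the identity $\tfrac12 g_s^2\Delta\rho_s = \tfrac12\sigma_s^2\Delta\rho_s + \tfrac12(g_s^2-\sigma_s^2)\nabla\cdot(\rho_s\nabla\log\rho_s)$ to show the two Fokker--Planck equations coincide, and both attribute the role of the hypotheses on $g$ to taming the $s^{-1}$ singularity of the score (via $A_s$) at $s=0$ and the $\sigma_s^{-1}$ singularity at $s=1$. The only omission relative to the appendix is the explicit Stein-formula derivation of $\nabla\log\rho_s$ in terms of $b_s$ (Eq.~\eqref{eq:score}), which is not needed for the theorem as stated.
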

This theorem is proven in Appendix~\ref{app:tunable}, where we explain why the conditions on $g_s$ guarantee that the SDE~\eqref{eq:sde:g} is well-posed.
Working with this SDE requires the score $\nabla \log \rho_s$. Interestingly, this score can be expressed in terms of the drift~$b_s$.
A direct calculation reported in Appendix~\ref{app:tunable} shows that 
\begin{equation}
    \label{eq:score}
    \nabla \log \rho_s(x|x_0) = A_s \left[\beta_s b_s(x,x_0) - c_s(x,x_0) \right],
\end{equation}
where 
\begin{equation}
    \label{eq:df:A:c}
    \begin{aligned}
    A_s &= [s\sigma_s (\dot\beta_s \sigma_s -\beta_s  \dot\sigma_s )]^{-1},\\
    c_s(x,x_0) &= \dot \beta_s  x + (\beta_s  \dot \alpha_s  - \dot\beta_s  \alpha_s )x_0.
    \end{aligned}
\end{equation} 
Using~\eqref{eq:score} in~\eqref{eq:drift:bc} shows that, to work with the SDE~\eqref{eq:sde:g}, we can estimate~$b$ first and then adjust both the noise amplitude $g_s$ and the drift~$b^g$ \textit{a-posteriori}  without having to retrain~$b$.\footnote{%
Using~\eqref{eq:score} in~\eqref{eq:drift:bc} requires some care at $s=0$ and $s=1$ due to the factor $[s\sigma_s]^{-1}$ in $A_s$, but this leads to no issue see Algorithm~\ref{alg:sampling} and also Appendices~\ref{app:specifics} and~\ref{app:tunable}.} 
This offers flexibility at sampling time that can be leveraged to maximize performance, as shown in our numerical experiments below.  

\subsection{KL Optimization and F\"ollmer Processes}
\label{sec:gen}

In light of Theorem~\ref{th:1:gen}, it is natural to ask if a specific choice of $g_s$ is optimal in a suitable sense.
To provide one answer to this question, we consider the KL divergence between the path measure of the process $X^g = (X^g_s)_{s\in[0,1]}$ (which solves the ideal SDE~\eqref{eq:sde:g}) and the path measure of the process $\hat X^g = (\hat X^g_s)_{s\in[0,1]}$ (which solves an approximate, learned version of~\eqref{eq:sde:g} obtained through an estimate $\hat{b}$ of $b$).
Because $\text{Law}(X^g_s) = \text{Law}(I_s|x_0) $ for all $(s,x_0)\in [0,1]\times \R^d$, this KL divergence is given by (see Appendix~\ref{app:KL} for details)
\begin{equation}
    \label{eq:KL:path}
        D_{\text{KL}} (X^g||\hat X^g)  = \int_0^1 \frac{ |1+\frac12 \beta_s  A_s(g_s^2-\sigma_s^2)|^2L_s}{2|g_s |^2} ds
\end{equation}
where 
$L_s=\E^{x_0} \big[ |\hat b_s(I_s,x_0)- b_s(I_s,x_0)|^2 \big]$. 
Eq.~\eqref{eq:KL:path} measures how the estimation error on~$b$ impacts the generative process, and as such it is natural to minimize it over~$g$. 
Since $L_s$ is independent of~$g_s$, this minimization can be performed analytically. 
The result is that~\eqref{eq:KL:path} is minimized if we set $g_s=g^\Fo_s$ with 
\begin{equation}
    \label{eq:g:spec}
    g^\Fo_s = \left| 2s\sigma_s(\beta_s^{-1}\dot\beta_s \sigma_s- \dot\sigma_s) -\sigma_s^2\right|^{1/2}.
\end{equation}
This expression is well-defined for all $s\in[0,1]$, since $\lim_{s\to0^+} 2s\beta_s^{-1}\dot\beta_s<\infty$ because $\beta_s$ is differentiable at $s=0$ by assumption.
The result in~\eqref{eq:g:spec} is also amenable to an interesting interpretation:
\begin{theorem}
    \label{th:interp:f:si}
     If $\beta_s /[\sqrt{s}\sigma_s ]$ is non-decreasing, then the process $X^\Fo \equiv X^{g^\Fo}$ that solves~\eqref{eq:sde:g} with $g_s = g^\Fo_s$ is a F\"ollmer process.
\end{theorem}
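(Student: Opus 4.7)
The plan is to verify that, at the choice $g_s = g^\Fo_s$, the drift $b^\Fo_s$ appearing in the tunable SDE~\eqref{eq:sde:g} can be represented as $b^\Fo_s(x, x_0) = g^{\Fo,2}_s \nabla_x \log h_s(x, x_0)$ for a positive function $h_s$ that, together with the corresponding PDE it satisfies, identifies $X^\Fo$ as the Doob $h$-transform of a reference diffusion starting at the point mass $\delta_{x_0}$ and targeting $\rho_c(\cdot | x_0)$ at $s=1$. This gradient-of-log structure is the defining property of a F\"ollmer process as a Schr\"odinger bridge from a point mass.

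The first step is to substitute $g_s = g^\Fo_s$ into~\eqref{eq:drift:bc} and then use the score identity~\eqref{eq:score} to eliminate $b_s$ in favor of $\nabla_x \log \rho_s$ and $c_s$. The defining feature of $g^\Fo_s$ in~\eqref{eq:g:spec} is the algebraic identity $\tfrac12 \beta_s A_s(g^{\Fo,2}_s + \sigma_s^2) = 1$, which together with $(A_s\beta_s)^{-1} = \tfrac12(g^{\Fo,2}_s + \sigma_s^2)$ produces the compact decomposition
\[
b^\Fo_s(x, x_0) = c_s(x, x_0)/\beta_s + g^{\Fo,2}_s \nabla_x \log \rho_s(x | x_0),
\]
where $c_s/\beta_s$ is affine in $x$ and hence the gradient of an explicit quadratic potential. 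The second step is to exhibit $h_s$ in the factorized form $h_s(x, x_0) = \rho_s(x|x_0) \exp(U_s(x, x_0))$, with $U_s$ defined (up to a function of $(s, x_0)$) by $\nabla_x U_s = c_s/(\beta_s g^{\Fo,2}_s)$; this choice automatically yields $b^\Fo_s = g^{\Fo,2}_s \nabla_x \log h_s$. The third step is to verify that $h_s$ satisfies the relevant backward Kolmogorov (heat-type) equation paired with the diffusion $g^{\Fo,2}_s$, by substituting the ansatz into that equation and invoking the forward Fokker--Planck equation satisfied by $\rho_s$ under~\eqref{eq:sde:g}; the residual scalar ODE is then absorbed into the $(s, x_0)$-dependent additive freedom of $U_s$. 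Matching the terminal condition at $s=1$ encoded by $\rho_c$ completes the identification of $X^\Fo$ as a F\"ollmer process.

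The hypothesis that $\beta_s/(\sqrt{s}\sigma_s)$ is non-decreasing plays two roles. Analytically, it is equivalent to $2s\sigma_s(\beta_s^{-1}\dot\beta_s\sigma_s - \dot\sigma_s) \geq \sigma_s^2$, ensuring that $g^{\Fo,2}_s \geq 0$ so that $g^\Fo_s$ is a valid real-valued diffusion coefficient and~\eqref{eq:sde:g} is well-posed. Structurally, it permits dropping the absolute value in~\eqref{eq:g:spec} so that the algebraic manipulations of Steps 1 and 2 can proceed without branching. The main obstacle is the PDE verification in Step 3: matching the time derivatives of the quadratic coefficients of $U_s$ against the divergence and kinetic-energy contributions that arise from the Kolmogorov-type calculation, and showing that the specific form of $g^\Fo_s$ in~\eqref{eq:g:spec} is precisely what renders this matching consistent, is where the algebraic heart of the theorem lies.
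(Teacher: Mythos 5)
Your Steps 1--2 are correct and in fact coincide with the paper's own starting point: under the hypothesis, $g^\Fo_s$ is the branch~\eqref{eq:min:int:c} of the minimization, which is exactly the identity $\tfrac12\beta_s A_s\bigl((g^\Fo_s)^2+\sigma_s^2\bigr)=1$, and substituting into~\eqref{eq:drift:bc} via~\eqref{eq:score} gives $b^\Fo_s(x,x_0)=\beta_s^{-1}c_s(x,x_0)+(g^\Fo_s)^2\nabla\log\rho_s(x|x_0)$, which is~\eqref{eq:sde-appenfix:2} with $g=g^\Fo$. The gap is in Step 3, and it is not merely that the verification is deferred: as stated, it would fail. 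Writing $b^\Fo_s=(g^\Fo_s)^2\nabla\log h_s$ with $h_s$ solving the plain backward heat equation $\partial_s h+\tfrac12 (g^\Fo_s)^2\Delta h=0$ asserts that $X^\Fo$ is the Doob $h$-transform of the \emph{driftless} reference $dY_s=g^\Fo_s\,dW_s$, $Y_0=x_0$. But the $h$-transform structure forces the marginal factorization $\rho_s(x|x_0)=p_s(x)h_s(x)/h_0(x_0)$ with $p_s={\sf N}(x;x_0,\Sigma_s\Id)$, $\Sigma_s=\int_0^s(g^\Fo_u)^2du$, i.e.\ $\nabla U_s=(x-x_0)/\Sigma_s$. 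Matching this against your $\nabla U_s=c_s/(\beta_s (g^\Fo_s)^2)$ requires $\dot\beta_s/(\beta_s (g^\Fo_s)^2)=\Sigma_s^{-1}$ and $\dot\alpha_s=\beta_s^{-1}\dot\beta_s(\alpha_s-1)$; the second already fails for $\alpha_s=1-s$, $\beta_s=s^2$ (it gives $-1=-2$), and the first fails even for $\beta_s=s$. So no choice of the additive $(s,x_0)$-dependent freedom in $U_s$ can rescue the ansatz: the obstruction sits in the $x$-dependent coefficients, not in a residual scalar ODE. The correct reference process must carry a nontrivial, $x_0$-dependent affine drift, and identifying it is the substantive content of the theorem.

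The paper closes this gap by a different route. The same identity $\tfrac12\beta_s A_s\bigl((g^\Fo_s)^2+\sigma_s^2\bigr)=1$ is used not on the forward drift but on the \emph{time-reversed} SDE~\eqref{eq:sde-appenfix:3}: it makes the reverse-time drift independent of the score, so the target $\rho_c(\cdot|x_0)$ enters the reverse dynamics only through its initial condition. Replacing that initial condition by ${\sf N}(0,\Id)$ and reversing time again produces an explicit linear reference process~\eqref{eq:ref:foll-gen}, and the disintegration~\eqref{eq:kl:foll:gen:1} of the path-space KL divergence (together with its invariance under time reversal) then shows $X^\Fo$ solves the constrained minimization defining the F\"ollmer process relative to that reference. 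If you want to salvage your $h$-transform strategy, you would need to (i) take the reference to be~\eqref{eq:ref:foll-gen} rather than the driftless diffusion, (ii) replace the heat equation by the backward Kolmogorov equation of that reference, and (iii) subtract the reference drift before identifying the remainder as $(g^\Fo_s)^2\nabla\log h_s$ --- at which point you have essentially reconstructed the paper's reference process by other means. One smaller point: the hypothesis that $\beta_s/[\sqrt{s}\sigma_s]$ is non-decreasing is not needed to make $g^\Fo_s$ real (the absolute value in~\eqref{eq:g:spec} already guarantees that); its role is to select the branch on which the score-free structure holds, as your ``structural'' reading correctly suggests.
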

This theorem is proven in Appendix~\ref{app:foll}. To understand its significance, recall that the F\"ollmer process is the solution to the Schr\"odinger bridge problem when one of the endpoint measures is a point mass (in this case, at $x_0$).
As such, it offers an entropy-regularized solution to the optimal transport problem.
The F\"ollmer process is usually defined by minimizing its KL divergence with respect to the Wiener process subject to constraints on the endpoints.
Theorem~\ref{th:interp:f:si} offers a generalization and new interpretation of this process as the minimizer of the KL divergence of the exact forecasting process from the estimated one, which is more tailored to statistical inference.
For more details about F\"ollmer processes and the Schr\"odinger bridge problem we refer the reader to Appendix~\ref{app:KL}. 
We also test the performance of~\eqref{eq:g:spec} in Appendix~\ref{sec:detail:nse}.

\subsection{Implementation}
\label{sec:implem}
For concreteness, we consider the problem of probabilistic forecasting, but the alternative problem settings covered by our framework can be handled similarly. 
Given the truncated time series $\mathcal{S}_K = \{x_{k\tau}\}_{k=0}^{K+1}$ with $K\in \N$ and $K'\le K$, we can approximate the objective in~\eqref{eq:loss} by the empirical loss 
\begin{equation}
\label{eq:loss:emp}
\begin{aligned}
    L^K_b[\hat b] = &\frac1{K'} \sum_{k\in B_{K'}}\int_0^1  |\hat b_s(I^k_s,x_{k\tau})-R^k_s |^2 ds,
\end{aligned}
\end{equation}
where $B_{K'}\subset \{0:K\}$ is a subset of indices of cardinality~$K'$ and
\begin{equation}
    \label{eq:IRsk}
    \begin{aligned}
    I_s^k &= \alpha_s  x_{k\tau} + \beta_s  x_{(k+1)\tau} + \sqrt{s} \sigma_s  z_k\\ 
    R_s^k &= \dot\alpha_s  x_{k\tau} + \dot\beta_s  x_{(k+1)\tau} + \sqrt{s} \dot\sigma_s  z_k
    \end{aligned}
\end{equation}
with $z_k \sim {\sf N}(0,Id)$, $z_k\perp(x_{k\tau},x_{(k+1)\tau})$. To arrive at~\eqref{eq:loss:emp} we used that $W_s \stackrel{d}{=} \sqrt{s} z$ with $z\sim {\sf N}(0,Id)$ at all $s\in[0,1]$. 
In~\eqref{eq:loss:emp} and~\eqref{eq:sde:emp} below, the physical lag $\tau > 0$ is fixed, while~$s$ varies over $[0,1]$, and the integral over~$s$ can be approximated via an empirical expectation over draws of $s \sim {\sf U}([0,1])$.
By approximating $\hat b$ in an expressive parametric class such as a class of neural networks, we can optimize~\eqref{eq:loss:emp} over the parameters with standard gradient-based methods.
This can be performed by batching over subsequences in the available time series, or via online learning if a stream of data is continuously observed.

Having learned an approximation $\hat b$, we can construct an approximation of $\hat b^g$ using~\eqref{eq:drift:bc},~\eqref{eq:score}, and~\eqref{eq:df:A:c}.
We may then form our model given the new observation $x_{k\tau}$ by solving
\begin{equation}
    \label{eq:sde:emp}
    d\hat{X}^k_s = \hat b^g_s(\hat{X}^k_s,x_{k\tau}) ds + g_s  dW_s, \quad \hat{X}^k_{s=0} = x_{k\tau},
\end{equation}
with various realizations of the noise $W_s$ to generate a set of $\hat{X}^k_{s=1}$ that approximately samples $\rho_c(\cdot|x_{k\tau})$.
This generates an ensemble of forecasts with statistics consistent with those of the time-series seen during training. 
This process can also be iterated autoregressively by setting $\hat{X}^{k+1}_{s=0} = \hat{X}^k_{s=1}$ and by solving the SDE~\eqref{eq:sde:emp} with $k$ replaced by $k+1$ to get an approximate sample $\hat{X}^{k+1}_{s=1}$ of $\rho_c(\cdot|x_{(k+1)\tau})$.
This iteration does not require any additional training, since it uses the same $\hat{b}_s$.
These procedures are summarized in Algorithms~\ref{alg:training} and \ref{alg:sampling}. The first step to get $X_{s=s_1} = \hat X_1$ in Algorithm~\ref{alg:sampling} is consistent and designed so that it avoids computing $\hat b^g_{s=0}$, since using~\eqref{eq:drift:bc} and~\eqref{eq:score} can exhibit numerical singularities even though $b^g_{s=0}$ is well defined when the conditions of Theorem~\ref{th:1:gen} are met. 

%would requires taking the limit as $s\to0^+$ in these formulas.

%
\begin{algorithm}
\caption{Training}
\label{alg:training}
\begin{algorithmic}[1]
\STATE \textbf{Input}:  Data set $\mathcal {S}_K= \{x_{k\tau}\}_{k=0}^{K+1}$; minibatch size $K'\le K$; coefficients $\alpha_s,\beta_s,\sigma_s$.
\REPEAT
% \STATE 
\STATE Compute the  empirical loss $L^K_b[\hat b]$ in~\eqref{eq:loss:emp}.%and $s_k \sim {\sf U}([0,1])$.
% \STATE Compute $I_s^k = \alpha_s x_{k\tau} + \beta_s x_{(k+1)\tau} + \sqrt{s}\sigma_s z_k$
% \STATE Compute $R_s^k = \dot\alpha_s x_{k\tau} + \dot\beta_s x_{(k+1)\tau} + \sqrt{s}\dot \sigma_s z_k$
% \STATE Compute $\hat b_s(x_s^k, s_{k\tau})$
\STATE Take gradient step on $L^K_b[\hat b]$ to update $\hat b_s$.
\UNTIL{converged}
\STATE \textbf{Return}: drift $\hat b_s(x,x_0)$.
\end{algorithmic}
\end{algorithm}

\vspace{-0.5em}
\begin{algorithm}
\caption{Sampling}
\label{alg:sampling}
\begin{algorithmic}[1]
\STATE \textbf{Input}:  Observation $x_{k\tau}$; model $\hat b_s(x,x_0)$; noise coefficient $g_s$, grid $s_0=0<s_1 \cdots< s_{N} =1$ with $N\in \N$; \textit{iid} $\eta_n \sim {\sf N}(0,\Id)$ for $n=0:N-1$.
\STATE Set $\Delta s_n = s_{n+1}-s_n$, $n=0:N-1$.
\STATE Set $\hat X_1 = x_{k\tau} +  \hat b_{s_0}(x_{k\tau},x_{k\tau}) \Delta s_0 + \sigma_{s_0} \sqrt{\Delta s_0}  \eta_0$. 
%
% \IF{$\dot\beta_0$ = 0}
% \STATE \textit{Special step goes here}
% \ELSIF{$\dot\beta_0 \neq 0$}
% \STATE \textit{Normal step goes here}
% \ENDIF
%
\FOR{$n = 1:N-1$}
% \STATE $z \sim \mathsf N(0, Id)$
\STATE Compute $\hat b^g_{s_n}(\hat X_n, x_{k\tau})$ from~\eqref{eq:drift:bc} and~\eqref{eq:score}.
\STATE Set $\hat X_{n+1} = \hat X_{n} +  \hat b_{s_n}^g(\hat X_n, x_{k\tau}) \Delta s_n + g_{s_n} \sqrt{\Delta s_n}\eta_n$.
\ENDFOR
\STATE \textbf{Return}: $\hat X_{N+1} \sim \hat \rho_c(\cdot| x_{k\tau})\approx \rho_c(\cdot| x_{k\tau})$.
\end{algorithmic}
\end{algorithm}
\vspace{-1em}

\section{Numerical illustrations}
\label{sec:num}
In what follows, we test our proposed method in several application domains.
For all tests, an interpolant with coefficients $\alpha_s = 1-s$, $\sigma_s = \varepsilon(1-s)$ for some $\varepsilon>0$, and $\beta_s = s^2$ is used.
The condition that $\dot \beta_0 = 0$ empirically ensures that the norm of the parameter gradients used to train our neural networks are well behaved.
We report results with the diffusion coefficient in~\eqref{eq:sde:emp} chosen to be $g_s = \sigma_s$, as we found that the impact of learning $\hat b_s$ well by choice of the right interpolant outweighed the effect of varying the SDE for the systems we study.
For additional numerical experiments with $\alpha_s = 1-s, \sigma_s = \varepsilon(1-s)$ and $\beta_s = s$, and with $g_s=g_s^\Fo$, we refer the reader to Appendix~\ref{sec:detail:nse}.
Investigation of the F\"ollmer SDE described in~\Cref{th:interp:f:si}, in both theory and experiment, will be saved for future work.
\subsection{Multi-modal jump diffusion process}
\label{sec:gmm}

\begin{wrapfigure}[11]{r}{0.21\textwidth}
\vspace{-1em}
  \centering
  \includegraphics[width=0.19\textwidth]{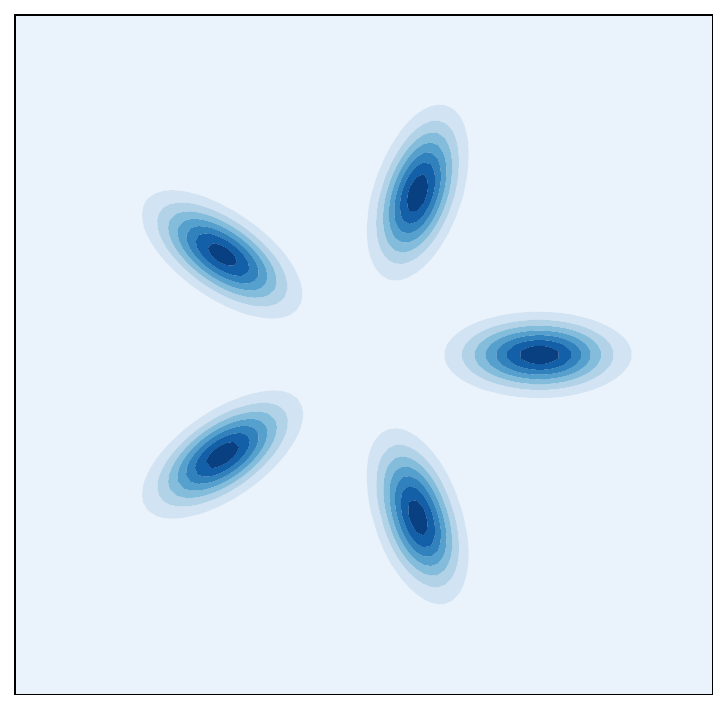}
\caption{Invariant PDF for the jump-diffusion process.}
\label{fig:gmm}
\end{wrapfigure}
Our first example is synthetic, and consists of forecasting a two-dimensional jump-diffusion process with invariant PDF given by a Gaussian mixture (Fig.~\ref{fig:gmm}).
We study a particle governed by Langevin dynamics that is randomly kicked in the counterclockwise direction, where the times between kicks are specified by a Poisson process (see Appendix~\ref{sec:detail:jump:diff} for details).

In this example, if the process starts at a point in one mode, its PDF spreads in the mode and leaks into the other nearby modes in the counterclockwise direction (see Fig.~\ref{fig-gmm-cond-stats}).
Correspondingly, the conditional PDF of $x_{\tau}$ given $x_{0}$ is itself a Gaussian mixture PDF that is sharply peaked around $x_{0}$ when $\tau$ is small, and which slowly evolves into the 5-mode invariant measure associated with the potential as $\tau$ increases. We generate a long time series of this process and use it at different lags $\tau$ in the empirical loss~\eqref{eq:loss:emp} to learn the drift velocity $\hat b$, which we model as a fully connected neural network. We then use the estimated $\hat b^g$ in the SDE~\eqref{eq:sde:emp} to generate probabilistic forecasts. 
The results (Fig.~\ref{fig-gmm-cond-stats}) indicate that the law of these forecasts is in excellent agreement with the true $\rho_c(x_{\tau}|x_0)$. %
We can also iterate using the procedure described in Sec.~\ref{sec:implem}  to estimate $\rho_c(x_{k\tau}|x_0)$ for $k > 1$ without additional retraining.
We find excellent agreement in doing so, including beyond the decorrelation time, when the conditional PDF relaxes into the equilibrium distribution independent of $x_0$.
\begin{figure}[ht!]
    \centering
    \includegraphics[width=\linewidth]{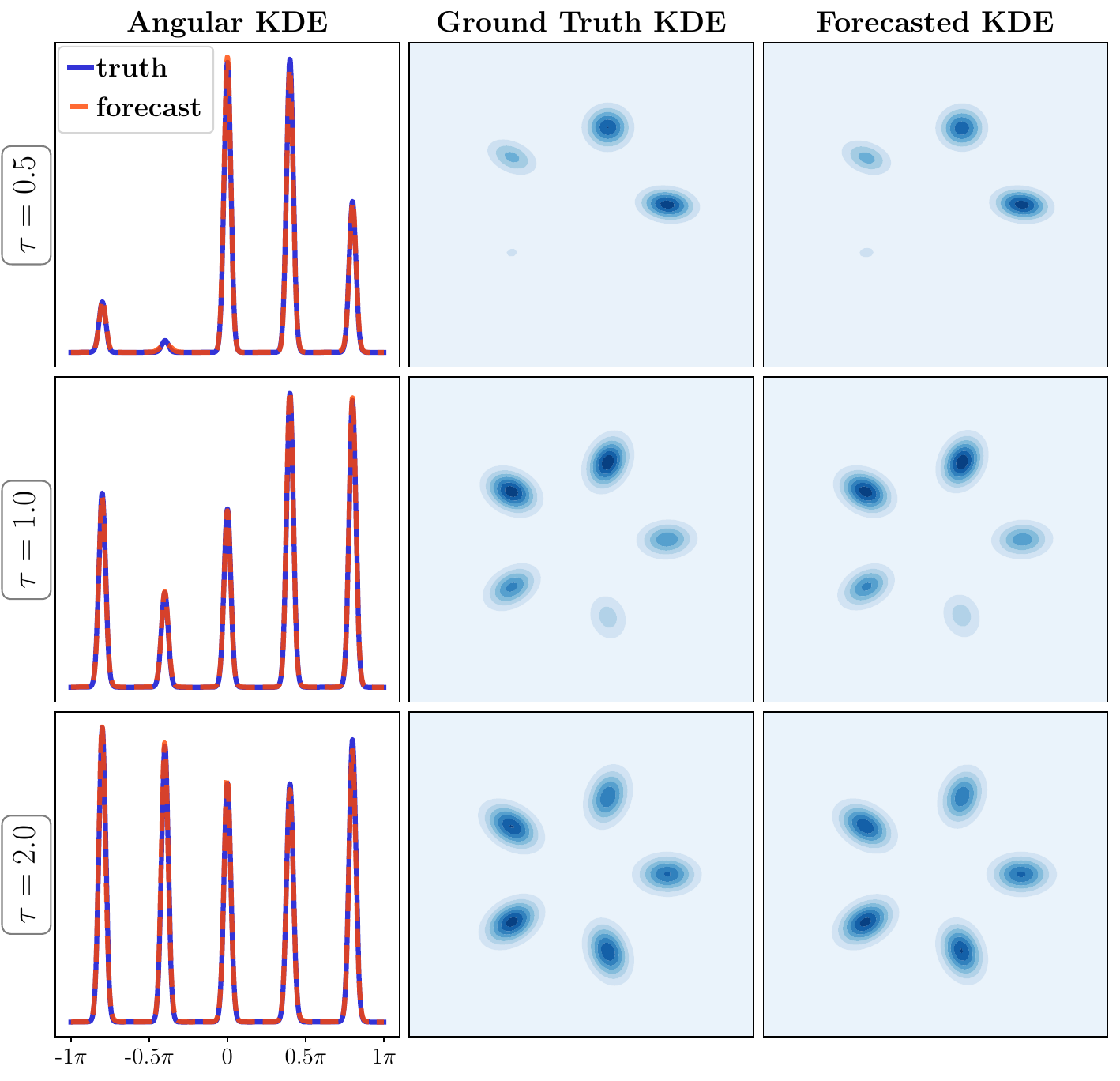}
    \caption{\textbf{Forecasting comparison for the jump-diffusion process.} 
    (Left) Comparison of the truth to the forecasted prediction in the angular coordinates.
    (Middle) Ground truth KDEs at various lag times $\tau$. 
    (Right) Forecasted KDEs at the same lag times $\tau$.}
    \label{fig-gmm-cond-stats}
\end{figure}
We note that this is an example in which probabilistic forecasting is key, as regressing $x_{\tau}$ given $x_0$ would give results with little information. 
Probabilistic forecasting is needed to capture the non-Gaussian and multimodal nature of the forecasts, which would be hard to capture with deterministic methods. 

\subsection{Forecasting the 2d Navier-Stokes Equations}
\label{sec:nse}
Our second numerical example considers forecasting the dynamics of the solution to the 2d Navier-Stokes equations with random forcing on the torus $\mathbb{T}^2 = [0,2\pi]^2$. With the vorticity formulation, the Navier-Stokes equations read
\vspace{-0.32em}
\begin{equation}
    \label{eq:2D_vorticity_NS}
    d \omega + v \cdot \nabla  \omega dt = \nu \Delta \omega dt  - \alpha \omega dt + \varepsilon d\eta.
\end{equation}
\vspace{-0.2em}
Here $v = \nabla^{\perp} \psi = (-\partial_y\psi, \partial_x\psi)$ is the velocity expressed in terms of the stream function $\psi$, which is a solution to $-\Delta \psi = \omega$, $d\eta$ is white-in-time random forcing acting on a few Fourier modes, and $\nu,\alpha,\eps>0$ are parameters  (see Appendix~\ref{sec:detail:nse} for details). We work in a setting where \eqref{eq:2D_vorticity_NS} is provably ergodic with a unique invariant measure~\cite{hairer2006ergodicity}.  Our objective is to forecast the solution to \eqref{eq:2D_vorticity_NS} at time $t+\tau$ given its solution at time $t$ after the process has reached a statistically steady state.  We do so using both full- and low-resolution data at time $t$, but our goal is always to forecast at full resolution. 

\vspace{-0.5em}
\paragraph{Vorticity Data.} 
We employ a pseudo-spectral method to simulate~\eqref{eq:2D_vorticity_NS} and hence to obtain a dataset of snapshots of the vorticity field. 
We set the timestep $\Delta t = 10^{-4}$ and grid size to $256\times 256$. 
We store snapshots at regular intervals of $\Delta t = 0.5$. We conduct simulations for 2000 trajectories within the time range of $t \in [0,100]$; we then exclude the initial phase $t \in [0,50]$ from our data. 
Ultimately, we collect a total of $2\times 10^5$ snapshots, which are treated as samples from the invariant measure of~\eqref{eq:2D_vorticity_NS}. 
To reduce memory requirements, we downsize the dataset to a resolution of $128\times 128$.
In all our experiments, we used a UNet~\cite{ho2020denoising} as our network for approximating the velocity field. 
Detailed parameters for the training and dataset generation can be found in the  Appendix~\ref{sec:detail:nse}.

\vspace{-0.5em}
\paragraph{Forecasting at Full Resolution.}  
First, we consider predicting the distribution of the vorticity field that may evolve from a given realization.
To this end, we learn the SDE that samples the conditional distribution of vorticity fields after lag $\tau = 0.5$, and we iterate this SDE to get forecasted predictions after lag $2\tau$, $3\tau$, etc. 
In the top row of Fig.~\ref{fig:cond-stats-NSE}, the first panel shows a snapshot of a vorticity field, while the next three panels show different samples of vorticity fields generated after lag $2$ by iterating our forecasting procedure. 
\begin{figure*}[t]
    \centering
    \begin{minipage}{0.14\linewidth}
    \begin{subfigure}
    \centering
    \begin{overpic}[width=1\linewidth]{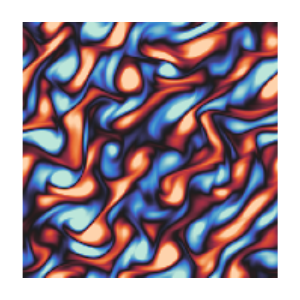}\put(45,-6){\small $\omega_{t}$}
    \end{overpic}
    \end{subfigure}\\
    \vspace{0.2em}
        \begin{subfigure}
        \centering
        \begin{overpic}[width=1\linewidth]{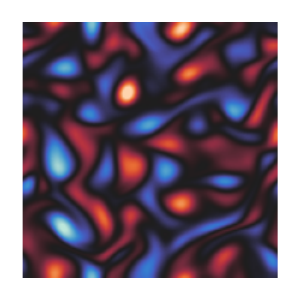}\put(13,-8){\small true v.s. forecast conditional mean}
        \end{overpic}
    \end{subfigure}
    \end{minipage}
    \begin{minipage}{0.14\linewidth}
    \begin{subfigure}
    \centering
    \begin{overpic}[width=1\linewidth]{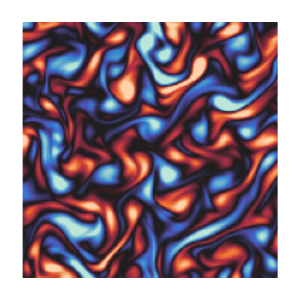}\put(80,-10){\small $\leftarrow$ three forecasts of $\omega_{t+2}\rightarrow$}\end{overpic}
    \end{subfigure}\\
    \vspace{0.2em}
        \begin{subfigure}
        \centering
        \includegraphics[width=1\linewidth]{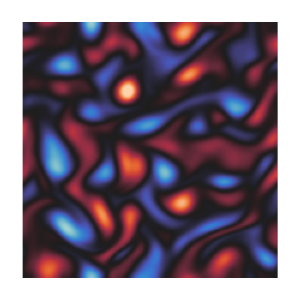}\\ 
    \end{subfigure}
    \end{minipage}
        \begin{minipage}{0.14\linewidth}
    \begin{subfigure}
    \centering
        \includegraphics[width=1\linewidth]{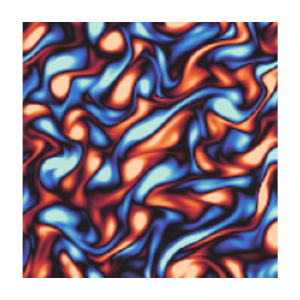}
    \end{subfigure}\\
    \vspace{0.2em}
        \begin{subfigure}
        \centering
        \begin{overpic}[width=1\linewidth]{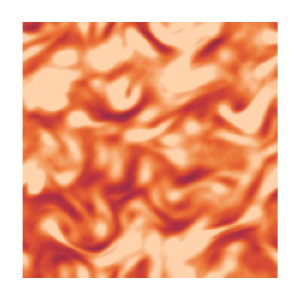}\put(14,-8){\small true v.s. forecast conditional std}
        \end{overpic}
    \end{subfigure}
    \end{minipage}
            \begin{minipage}{0.14\linewidth}
    \begin{subfigure}
    \centering
        \includegraphics[width=1\linewidth]{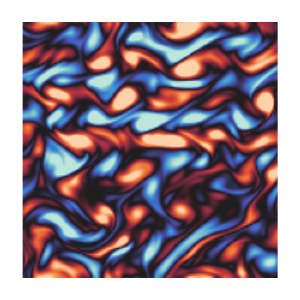}
    \end{subfigure}\\
    \vspace{0.2em}
        \begin{subfigure}
        \centering
        \includegraphics[width=1\linewidth]{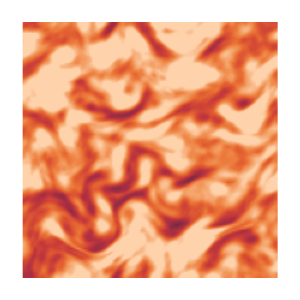}
    \end{subfigure}
    \end{minipage}
    \begin{minipage}{0.32\linewidth}
    \begin{subfigure}
    \centering
        \begin{overpic}[width=1\linewidth]{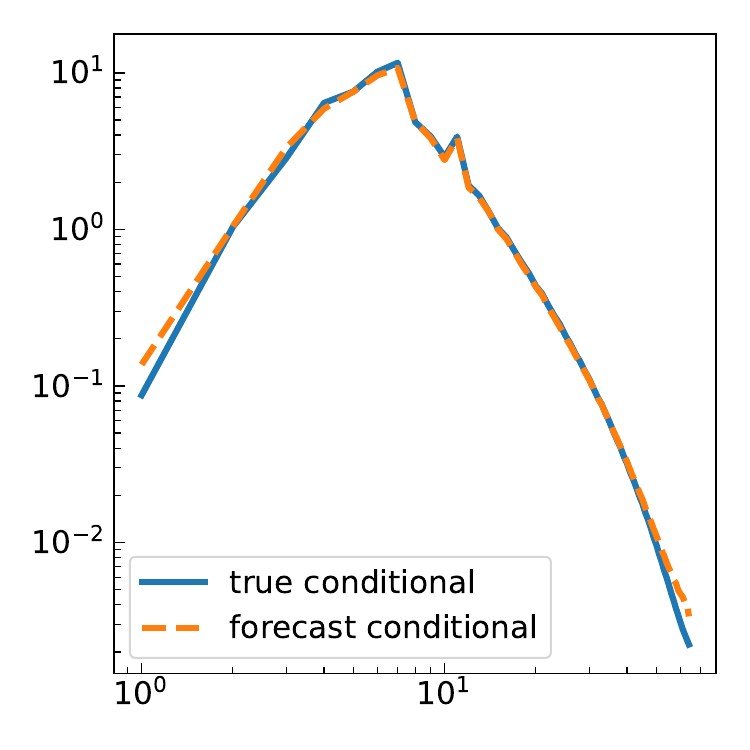}\put(26,-2){\small enstrophy spectrum (v.s. $k$)}\end{overpic}
    \end{subfigure}
    \end{minipage}
    \caption{\textbf{Temporal forecasting on stochastically-forced Navier Stokes.}
    (Top left) Different forecasts from our method at lag $\tau =2$, for a fixed $\omega_t$. 
    (Bottom left) Comparisons between the forecast sample mean and standard deviation for this $\omega_t$ against the truths. 
    (Right) Enstrophy spectrum of the true and forecasted conditional distribution. 
    Note: All the NS figures in this paper share the same color bars for the vorticity field (on a scale from $-5$ to $5$) and std (on a scale from $0$ to $3$) respectively.}
    \label{fig:cond-stats-NSE}
    \end{figure*}
Note that these generated vorticity fields are different from one another, emphasizing the need for probabilistic forecasting. 
This is corroborated by the true and forecasted conditional means of the field shown in the first two panels on the bottom row of Fig.~\ref{fig:cond-stats-NSE}.
While correctly captured by our approach, this conditional mean is clearly not informative on its own, as it averages over the spatial features present in the actual forecast. 
The spread of this ensemble of forecasts is also apparent from the standard deviation of the field, shown in the third and fourth panel on the bottom row of Fig.~\ref{fig:cond-stats-NSE}. 
Also shown in the right panel is the enstrophy spectrum of the true vorticity field and the ensemble of forecasts (see~\Cref{sec:detail:nse}), showing that our method captures this important physical quantity correctly. 
 \begin{figure}
    \begin{minipage}{.12\textwidth}
  \begin{subfigure}
    \centering
    \begin{overpic}[width=\linewidth]{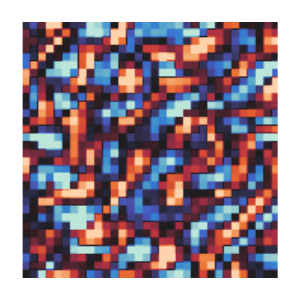}\put(22,-4){\scriptsize $32\times 32$ $\omega_t$}
    \end{overpic}
  \end{subfigure}\\
  \begin{subfigure}
    \centering
    \begin{overpic}[width=\linewidth]{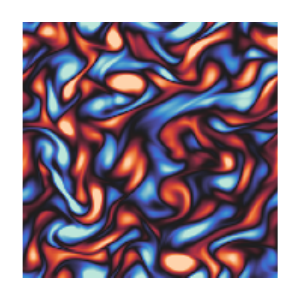}\put(16,-4){\scriptsize forecast $\omega_{t+1}$}
    \end{overpic}
  \end{subfigure}
\end{minipage}
\begin{minipage}{.12\textwidth}
  \begin{subfigure}
    \centering
    \begin{overpic}[width=\linewidth]{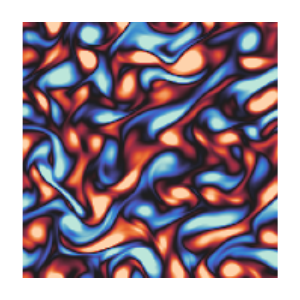}\put(16,-4){\scriptsize forecast $\omega_{t+1}$}\end{overpic}
  \end{subfigure}\\
  \begin{subfigure}
    \centering
    \begin{overpic}[width=\linewidth]{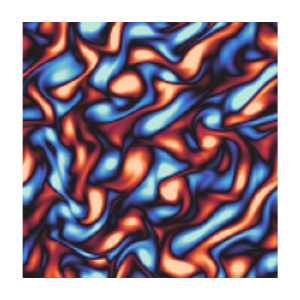}\put(16,-4){\scriptsize forecast $\omega_{t+1}$}
    \end{overpic}
  \end{subfigure}
\end{minipage}%
\begin{minipage}{.25\textwidth}
  \begin{subfigure}
    \centering
    \begin{overpic}[width=\linewidth]{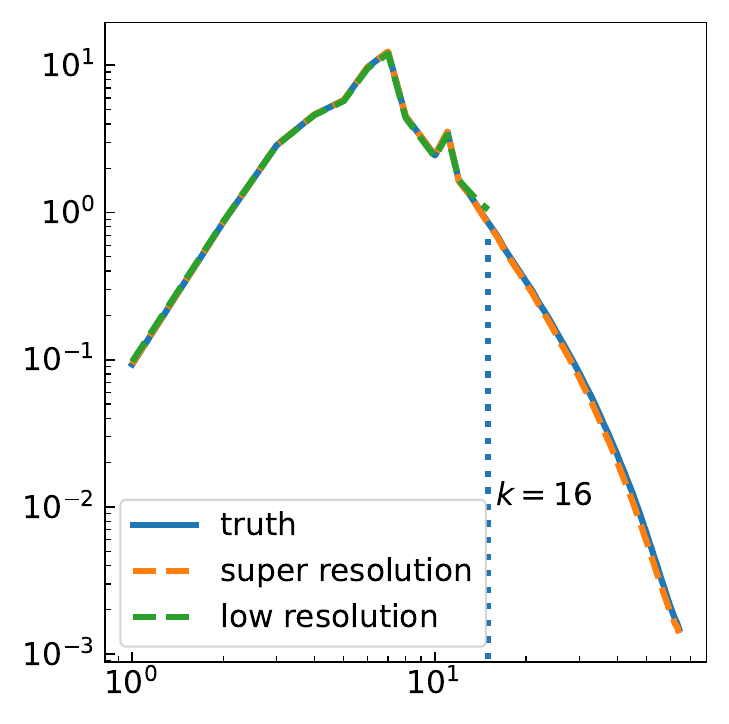}
        \put(25,-4){\scriptsize enstrophy spectrum (v.s. $k$)}
    \end{overpic}
  \end{subfigure}
  \end{minipage}
  \vspace{-1.0em}
    \caption{
    \textbf{Spatiotemporal forecasting on stochastically-forced Navier Stokes.}
    (Left) Low resolution $\omega_t$ and three of our forecasted samples at $t+1$. %
    (Right) Enstrophy spectrums of the low resolution $\omega_t$, the superresolution forecast of $\omega_{t+1}$, and the true $\omega_{t+1}$.}
    \label{fig:lowres:highres:nse}
    \end{figure}

\vspace{-0.5em}
\paragraph{Forecasting from Low-Resolution Data.} 
We now consider forecasting the vorticity field at a resolution of $128\times 128$ from a downsized version at resolution $32\times 32$ by learning the SDE as before with a drift velocity that is conditioned on the low-resolution field used as $x_0$.
The outcome of this task is shown in Fig.~\ref{fig:lowres:highres:nse}, where the four panels on the right show the low-resolution field used as input surrounded by three full-resolution forecasts generated after lag $=1$. 
We also plot the enstrophy spectrum of the low-resolution field
(which does not go past wavenumber 16) and the spectra of the true and the forecasted field at full resolution in the left panel. 
As can be seen, our approach recovers the true spectrum very accurately. 

We present additional experiments such as superresolution, comparisons between using $\sigma_s$ and $g^{\rm F}_s$ (F\"ollmer process) in terms of KL accuracy, and comparisons to flow matching and deterministic forecasting in Appendix~\ref{sec:detail:nse}. In addition, in Appendix \ref{sec:detail:nse}, we present results showing that the SDE forecasting can be $100\times$ faster than directly simulating the stochastic PDE in the Navier-Stokes example. 
\subsection{Video Forecasting}
We model the KTH and CLEVRER datasets. 
We follow RIVER \cite{davtyan2023efficient} and model these videos in the latent space of a VQGAN 
\cite{esser2021taming}
trained to auto-encode the datasets,
as is common for high-resolution image and video synthesis \cite{vahdat2021score, rombach2022high, peebles2023scalable,blattmann2023stable, davtyan2023efficient, ma2024sit}.

\vspace{-0.5em}
\paragraph{Task Description.} 
For video frames $x^{t}$
with $C$ channels and resolution $H \times W$, 
the VQGAN maps each video frame
to a latent image $y^{t} = \text{Encode}(x^t)$ with 
$C_\ell$ latent channels and latent resolution $H_\ell \times W_\ell$.
Our aim is to sample from the conditional density $\rho_c(y^t | y^{t-1}, \hdots, y^{t-C})$ describing the probability of a frame given a sequence of previous frames, which can then be decoded $x^t = \text{Decode}(y^t)$.
We set the interpolant base distribution to a point mass on $y_{s=0} = y^{t-1}$ and we model $y_{s=1} = y^t$. 

\vspace{-0.5em}
\paragraph{Generation.} 
%
%For example, given the first 3 frames, one first generates frame $\hat{y}^4$ given frame $y^3$ as well as a random choice of $y^1$ or $y^2$, then frame $\hat{y}^5$
%given generated frame $\hat{y}^4$ and a random choice of $y^1$, $y^2$ or $y^3$. As this cascades, the model conditions on its own samples.
%
Because conditioning on the whole set of $C$ previous time slices is costly, we follow RIVER~\cite{davtyan2023efficient} and use a Monte-Carlo estimator that generates our estimate of the $t^{th}$ latent frame $\hat{y}^{t}$ conditional on latent frame $y^{t-1}$ and an additional frame $y^{t-j}$ randomly chosen for $1 \geq j < t-1$.
To give the network context for the conditioned frame, we also condition on the time index $t-j$.
This random conditioning set $(y^{t-1}, y^{t-j}, t-j)$ avoids the need to compute functions of the entire conditioning context.
Samples are then decoded to produce images $x^t = \text{Decode}(y^t)$.
To sample a full video, we apply this forecasting strategy autoregressively. 
For further details, see~\Cref{alg:videoEM} in Appendix~\ref{app:video}.

\begin{figure*}[ht]
    \centering    
    \includegraphics[width=.99\linewidth]{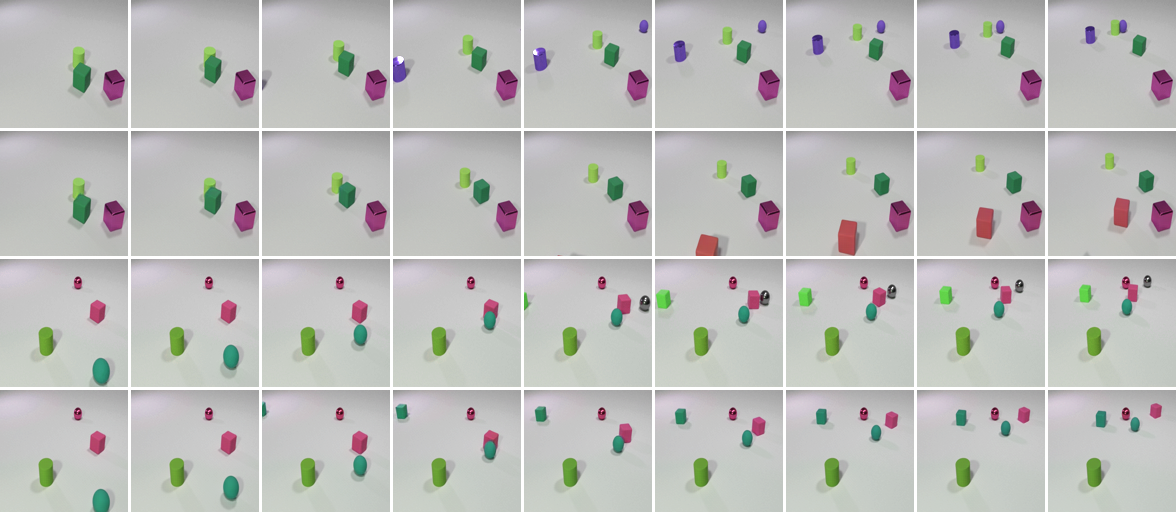}
    \caption{
    \textbf{Video generation on the CLEVRER dataset.}
    (Top row) Real trajectory.
    (Second row) Generated trajectory. A new, red cube enters the scene.
    (Third row) Real trajectory.
    (Fourth row) Generated trajectory. A new green cube enters the scene, and collision physics is respected (green ball hits red cube).}
    \label{fig:clevrer}
\end{figure*}
\begin{figure*}[ht]
    \centering    
    \includegraphics[width=.99\linewidth]{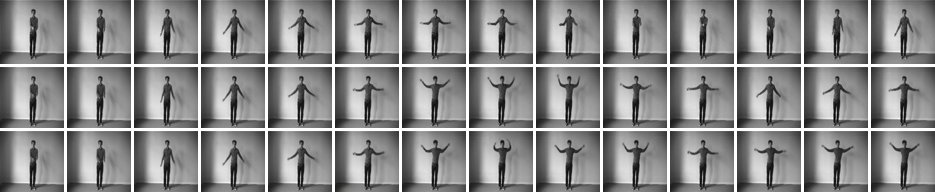}
    \caption{
    \textbf{Video generation on the KTH dataset.}
    (Top row) Real trajectory.
    (Middle and bottom rows) Generated trajectories. As time proceeds, the generated videos drift from the true video, but still display temporally-consistent hand-waving motions.
    }
    \label{fig:kth}
\end{figure*}

\vspace{-0.5em}
\paragraph{FVD metric.} 
The Fr\'echet Video Distance (FVD)~\cite{unterthiner2018towards} extends the Fr\'echet Inception Distance (FID)~\cite{heusel2017gans}. We select 256 test set videos and generate 100 completions for each one, thereby comparing 256 real videos to 25,600 generated videos. We also report several qualitative features of the generated videos. One consequence of using the VQGAN is that the performance is bounded by the FVD of the decoded encoded data since the generative model targets the encoded distribution. More information on evaluation is provided at
\href{https://github.com/interpolants/forecasting}{https://github.com/interpolants/forecasting}.

\vspace{-0.5em}
\paragraph{Baseline.}
We compare with the setup from \citet{davtyan2023efficient}, which learns a deterministic flow using flow matching~\cite{lipman2022flow, liu2022flow,albergo2022building} to map a Gaussian sample to the next video frame, conditioned on the same information as in our setup.
By contrast, we generate with an SDE sampler initialized at the previous video frame, which is more proximal to the next target frame than pure noise. 
We use the VQGAN checkpoints from RIVER so that we can study our proposed generative modeling method in a controlled context.

% \vspace{-0.5em}
% \paragraph{Training.} 
% %
% We train models for 250k gradient steps using AdamW starting at a learning rate of 2e-4. 
% %
% We use the UNet architecture popularized\footnote{\href{https://github.com/lucidrains/denoising-diffusion-pytorch/blob/main/denoising_diffusion_pytorch/classifier_free_guidance.py}{We use the lucidrains repository.}} in
% \citet{ho2020denoising}.
% %
% We modify the architecture to condition on past frames by concatenating them along the channel dimension of the input. Each model is trained on four A100 GPUs for approximately 1-2 days.

\vspace{-0.5em}
\paragraph{Datasets.}
The KTH dataset
\cite{schuldt2004recognizing}
consists of black-and-white videos of 25 people completing one of 6 actions such as jogging and hand-waving.
We use the last 5 people as the test set. 
The $1\times 64 \times 64$-dimensional data space is mapped to a $4 \times 8 \times 8$-dimensional latent space by the VQGAN.
During generation,  we start with 10 given video frames, and we generate the next 30 frames.
The CLEVRER dataset\footnote{\href{http://clevrer.csail.mit.edu/}{http://clevrer.csail.mit.edu/}} 
\cite{yi2019clevrer} contains videos created for studying reasoning and physics tasks. The videos feature cubes, spheres, and other shapes traveling across the screen and interacting through collisions while being subject to forces and rotations. 
One interpretation of this data is that a generative model needs to deduce physical phenomena to succeed at generation.
For example, objects should not go through each other, and instead should bounce off one another. 
The $3 \times 128 \times 128$-dimensional data space is mapped to a
$4 \times 16 \times 16$ latent space.
During generation, we condition on just 2 real frames and generate 14. 

\begin{table}[ht!]
    \centering
    \begin{tabular}{lllll}
    \toprule
     &\multicolumn{2}{c}{\textit{KTH}}
     &\multicolumn{2}{c}{\textit{CLEVRER}}
     \\
    \cmidrule{2-3}
    \cmidrule{4-5}
   \textit{Method}  & 100k & 250k &100k &250k\\
    \midrule
     RIVER &46.69& 41.88 &60.40 & 48.96 \\
     PFI (ours) & \textbf{44.38} & \textbf{39.13} & \textbf{54.7} & \textbf{39.31}\\
     \midrule 
      Auto-enc. & 33.45 & 33.45& 2.79 & 2.79\\
     \midrule
     \textit{Shifted FVD} &  &  &   &  \\
     \midrule
     RIVER & 13.24 & 8.43 & 57.61 & 46.17 \\
     PFI (ours) & \textbf{10.93} & \textbf{5.68} & \textbf{51.91} & \textbf{36.52}\\

     \bottomrule
    \end{tabular}
    \caption{\textbf{Video Results.} FVD computed on 256 test set videos, with the model generating 100 completions for each video. Results are reported for 100k and for 250k gradient steps. 
    ``Auto-enc.'' represents the FVD of the pretrained encoder-decoder compared to the real data. 
    It serves as a bound on the possible model performance, because the modeling is done in the latent space computed by the encoder-decoder pair.
    We also supply a shifted FVD, where encoded-decoded FVD is subtracted from FVD values to show they approach this approximate bound.}
    \label{tab:kth_clevrer}
\end{table}

\paragraph{Results.} Training details are in Appendix \ref{app:video}.
\cref{tab:kth_clevrer} shows the FVD performance of our model, probabilistic forecasting with interpolants (PFI), as compared to the RIVER baseline. We train both models under the same conditions for a controlled comparison. On both the KTH and CLEVRER datasets, PFI surpasses the standard flow matching approach. In addition to the numerical comparison, we demonstrate that our models produce diverse forecasts that capture physical rules inherent to the videos. In \cref{fig:clevrer}, we illustrate two trajectories of the animation. For each, we supply the initial condition, the dataset trajectory, and a generated trajectory based off of the same initial condition to show that the continuation of frames is probabilistic. For example, there is variation in the dataset trajectory of a green cube colliding with a green cylinder as compared to the forecasted trajectory, while also preserving the animated physics. In \cref{fig:kth}, we show that, from the same initial frame, the forecasts give varied realizations of the hand-waving video category.

\section{Conclusion and Future Work}
In this work, we introduced a principled approach to the use of generative modeling for probabilistic forecasting. 
By introducing stochastic processes that transport a point mass centered at a current observation of the system to a distribution over future states of that system, our proposed method uses dynamical measure transport in a way that  naturally aligns with the framework of probabilistic forecasting. 
It also allows us to minimize the impact of the estimation error by tuning of the diffusion coefficient, which can be done after training and offers a new perspective on the F\"ollmer process.
We have shown various uses of this approach, ranging from predicting the evolution of stochastic fluid dynamics to video completion tasks. Future work will consider using these models for empirical weather data and incorporation of physical structure into the generative model. Other generalizations will include mitigating the cost of simulating the SDE e.g. by tuning the diffusion coefficient to that effect, or using data sampled at random time-lags $\tau$  by conditioning the drift on these~$\tau$. 

\section*{Impact Statement}
This paper presents work whose goal is to advance the field of Machine Learning. 
There are many potential societal consequences of our work, none of which we feel must be specifically highlighted here. 
We note, however, that video generation, which is one of the possible applications of the forecasting framework, is a less explored domain that could promote harm through biases in the model. 
We surmise that it carries the same risks as image generation technologies.
\section*{Acknowledgments}
We thank Joan Bruna, Jon Niles-Weed, Loucas Pillaud-Vivien, and Valentin De Bortoli for useful discussions. YC, MSA and NMB are funded by the ONR project under the Vannevar Bush award ``Mathematical Foundations and Scientific Applications of Machine Learning''. MH is supported on the Meta Platforms, Inc project entitled Probabilistic Deep Learning with Dynamical Systems. EVE is supported by the National Science Foundation under Awards DMR-1420073, DMS-2012510, and DMS-2134216, by the Simons Collaboration on Wave Turbulence, Grant No. 617006, and by a Vannevar Bush Faculty Fellowship.

\bibliography{icml2024}
\bibliographystyle{icml2024}
% APPENDIX
\newpage
\appendix
\onecolumn
% \appendixpage
% \startcontents
% \printcontents{l}{1}{\setcounter{tocdepth}{2}}
\section{Discussion on the stationarity assumption}
\label{appendix-discussion-stationarity}
The stationarity assumption of the time series in Section \ref{sec:setup} implies that $\rho(x_{k\tau}, x_{(k+1)\tau})$ is independent of time. This assumption is sufficient, but not necessary for the use of our algorithm. More specifically, we can decompose the possible scenarios into 4 categories:
\begin{table}[h]
\centering
\begin{tabular}{ccc}
\hline
  & Markovian & Non-Markovian \\ \hline
Stationary& Works as currently presented & Works by the stationarity assumption  \\
Non-stationary & Works; learn a time-independent transition kernel & Works; need to condition on time in the model \\ \hline
\end{tabular}
\end{table}
\\
In more details:
\begin{itemize}
    \item If the time series is stationary and Markovian, then the method works as it is currently presented.
\item If the time series is non-stationary but Markovian, our algorithm can still accurately learn the conditional distribution $\rho_c(\cdot|x_0)$, provided that the transition kernel is time-independent. This can be seen from our formulation, which only requires a joint distribution $\rho(x_0,x_1)$ to be time-independent. We have the flexibility to use any distribution for the input $x_0$ and use it with the conditional distribution $\rho_c(\cdot|x_0)$ to define the joint distribution.
\item If the time series is stationary but non-Markovian, then the method works again by the stationarity assumption.
\item If the time series is non-stationary and non-Markovian, then the method can still work if it is adapted so that the drift also conditions on the physical time.
\end{itemize}
The Navier-Stokes experiments we present are in the category `stationary and Markovian' since to obtain the training data, we ran the simulation for a sufficiently long time and discarded the initial samples. 
The video experiments  seem to belong to the category  `non-stationary but Markovian', though this assertion is hard to test.

\section{Details on stochastic interpolants}

\subsection{Analytical formulas for some specific $\alpha_s,\beta_s,\sigma_s$}
\label{app:specifics}

In this section, we present the formulas for some specific choices of $\alpha_s,\beta_s$, and $\sigma_s$, the corresponding optimal (F\"ollmer) drift $g^{\rm F}_s$, and the expression for $b^g_s$ in terms of $b_s$.

For $\alpha_s= 1- s, \sigma_s = \varepsilon(1-s)$ where $\varepsilon>0$ is a tunable parameter, and $\beta_s=s$, we have
\begin{equation}
    \label{eq:b:c1}
    \begin{aligned}
        b_s(x,x_0) &= \E^{x_0} [ x_1-x_0 -\varepsilon\sqrt{s} z| (1-s) x_0 + s x_1 + \varepsilon (1-s) \sqrt{s} z = x], \qquad \text{for} \ \ s\in(0,1)\\
        b_0(x_0,x_0)  &= \E^{x_0} [x_1] - x_0,\\
        b_1(x,x_0)  &= x - x_0.
    \end{aligned}
\end{equation}
Moreover, $b_s$ is the minimizer of the loss (with respect to $\hat{b}$)
\begin{equation}
    \label{eq:b:loss:c1}
    L_b[\hat b] = \int_0^1 \E\big[|\hat b_s((1-s) x_0 + s x_1 + (1-s) \varepsilon\sqrt{s} z,x_0) -( x_1-x_0 -\varepsilon\sqrt{s} z) |^2 \big] ds.
\end{equation}
We also have
\begin{equation}
    \label{eq:A:c:c1}
    A_s = \frac{1}{\varepsilon^2 s(1-s)}, \qquad c_s(x,x_0) = x - x_0
\end{equation}
so that, for any $g_s$ such that $\lim_{s\to0^+} s^{-1}[g^2_s-\varepsilon^2(1-s)^2]$ and $\lim_{s\to1^-} g^2_s/(\varepsilon(1-s))$ exists, 
\begin{equation}
    \label{eq:bg:c1}
    \begin{aligned}
    b^g_s(x,x_0)  &= b_s(x,x_0) +\tfrac12 \big(g_s^2-\varepsilon^2(1-s)^2\big) \left( \frac{b_s(x,x_0)}{\varepsilon^2(1-s)} - \frac{x-x_0}{\varepsilon^2 s(1-s)}\right) \qquad \text{for} \ \ s\in(0,1),\\
    b^g_0(x_0,x_0)  &= b_0(x_0,x_0) = \E^{x_0} [x_1] - x_0,\\
    b^g_1(x,x_0)  &= b_1(x,x_0) = x - x_0.
    \end{aligned}
\end{equation}
In addition, we have
\begin{equation}
    \label{eq:gf:c1}
    g^\Fo_s = \varepsilon\sqrt{(1-s)(1+s)},
\end{equation}
so that
\begin{equation}
    \label{eq:bF:c1}
    b^{\Fo}_s(x,x_0)  = (1+s) b_s(x,x_0) - x+x_0\qquad \text{for} \ \ s\in[0,1].
\end{equation}

For $\alpha_s= 1- s, \sigma_s = \varepsilon(1-s)$, and $\beta_s=s^2$, we have
\begin{equation}
    \label{eq:b:c2}
    \begin{aligned}
        b_s(x,x_0) &= \E^{x_0} [ 2sx_1-x_0 -\varepsilon\sqrt{s} z| (1-s) x_0 + s^2 x_1 + \varepsilon(1-s) \sqrt{s} z = x], \quad \text{for} \ \ s\in(0,1)\\
        b_0(x_0,x_0)  &= - x_0,\\
        b_1(x,x_0)  &= x - x_0.
    \end{aligned}
\end{equation}
Moreover, $b_s$ is the minimizer of the loss 
\begin{equation}
    \label{eq:b:loss:c2}
    L_b[\hat b] = \int_0^1 \E\big[|\hat b_s((1-s) x_0 + s^2 x_1 + 
\varepsilon(1-s) \sqrt{s} z,x_0) -( 2sx_1-x_0 -\varepsilon\sqrt{s} z) |^2 \big] ds.
\end{equation}
We also have
\begin{equation}
    \label{eq:A:c:c2}
    A_s = \frac{1}{\varepsilon^2 s^2(1-s)(2-s)}, \qquad c_s(x,x_0) = 2sx - s(2-s)x_0,
\end{equation}
so that, for any $g_s$ such that $\lim_{s\to0^+} s^{-1}[g^2_s-\varepsilon^2(1-s)^2]$ and $\lim_{s\to1^-} g^2_s/(\varepsilon(1-s))$ exist, 
\begin{equation}
    \label{eq:bg:c2}
    \begin{aligned}
    b^g_s(x,x_0)  &= b_s(x,x_0) +\tfrac12 \big(g_s^2-\varepsilon^2(1-s)^2\big) \left( \frac{b_s(x,x_0)}{\varepsilon^2(1-s)(2-s)} - \frac{x-x_0}{\varepsilon^2s^2(1-s)(2-s)}\right) \quad \text{for} \ \ s\in(0,1)\\
    b^g_0(x_0,x_0)  &= b_0(x_0,x_0) = \E^{x_0} [x_1] - x_0\\
    b^g_1(x,x_0)  &= b_1(x,x_0) = x - x_0.
    \end{aligned}
\end{equation}
In addition we have
\begin{equation}
    \label{eq:gf:c2}
    g^\Fo_s = \varepsilon\sqrt{(1-s)(3-s)}
\end{equation}
so that
\begin{equation}
    \label{eq:bF:c2}
    \begin{aligned}
    b^{\Fo}_s(x,x_0)  &= \Big(1+\frac{1}{2-s}\Big)  b_s(x,x_0) - \frac{1}{s(2-s)}(2x-(2-s)x_0)\qquad \text{for} \ \ s\in(0,1],\\
    b^{\Fo}_0(x_0,x_0)  &= -2x_0.
    \end{aligned}
\end{equation}

We summarize the above calculations in the following table:
\begin{table}[h]
% \scriptsize
\centering
\begin{tabular}{ccccccc}
\hline
 $\alpha_s$ & $\beta_s$ & $\sigma_s$  & $g^{\rm F}_s$  & $A_s$ & $c_s(x,x_0)$ & $b_s^\Fo(x,x_0)$\\ \hline
 $1-s$& $s$ & $\varepsilon(1-s)$ & $\varepsilon\sqrt{(1-s)(1+s)}$ &  $\frac{1}{\varepsilon^2s(1-s)}$ & $x-x_0$ & $(1+s) b_s - x + x_0 $\\
 $1-s$ & $s^2$ & $\varepsilon(1-s)$ & $\varepsilon\sqrt{(3-s)(1-s)}$ & $\frac{1}{\varepsilon^2s^2(1-s)(2-s)}$ &$2sx-s(2-s)x_0$ & $(1+\frac{1}{2-s}) b_s - \frac{2x - (2-s)x_0}{s(2-s)}$ \\ \hline
\end{tabular}
\end{table}

\subsection{Proof of Theorem~\ref{th:2:gen}}
\label{app:proof:th1}

Recall that:
\begin{definition}
\label{def:def1}
The stochastic interpolant $I_s$ is the stochastic process defined as 
\begin{equation}
    \label{eq:stochinterpolant-appendix}
    I_s = \alpha_s  x_0 + \beta_s  x_1 + \sigma_s W_s\qquad s\in[0,1],
\end{equation}
where
\begin{itemize}[leftmargin=0.15in]
\item  $\alpha,\beta,\sigma\in C^1([0,1])$ satisfy $\alpha_s^2+\beta^2_s+\sigma^2_s>0$ for all $s\in[0,1]$,  $\dot \beta_s>0$ for all $s\in(0,1]$, and $\dot\sigma_s <0$ for all $s\in[0,1]$, as well as the boundary conditions $\alpha_0=\beta_1=1$,  $\alpha_1=\beta_0=\sigma_1=0$.
\item The pair $(x_0,x_1)$ are jointly drawn from a PDF $\rho(x_0,x_1)$ such that 
$\E_{(x_0,x_1)\sim \rho}\big[|x_0|^2+|x_1|^2\big] <\infty$.
\item $W=(W_s)_{s\in[0,1]}$ is a standard Wiener process with $W\perp(x_0,x_1)$.
\end{itemize}
\end{definition}
In view of this definition, let us give a more precise formulation of Theorem~\ref{th:2:gen}:
\begin{theorem}
\label{thm:1:b}
   Let $I_s$ be the stochastic interpolant introduced in Definition~\ref{def:def1} and let
\begin{equation}
\label{eq:b:def:app}
\begin{aligned}
    &\forall s \in [0,1]: \quad &&x_s  = \alpha_s  x_0 + \beta_s  x_1 +  \sigma_s \sqrt{s} z \\
    &\forall (s,x,x_0) \in (0,1] \times \R^d \times\R^d : &&  b_s(x,x_0) = \mathbb{E}^{x_0}[\dot\alpha_s x_0 + \dot\beta_s x_1+ \dot \sigma_s  \sqrt{s} \,z|x_s  = x]
\end{aligned}
\end{equation}
where $\E^{x_0}[\cdot|x_s  = x]$ denotes an expectation over $x_1 \sim \rho_c(\cdot |x_0)$ and $z\sim \mathsf {\sf N}(0,\Id)$ conditional on $x_s  = x$. Moreover, set $b_{s=0}(x_0,x_0) := \dot\alpha_0 x_0 + \dot\beta_0 \mathbb{E}^{x_0}[x_1]$.
Then the solutions to the SDE
\begin{equation}
    \label{eq:sde:app}
    dX_s = b_s(X_s,x_0) ds + \sigma_s  dW_s, \qquad X_{s=0} = x_0,
\end{equation}
are such that $\text{Law}(X_s) = \text{Law}(I_s|x_0) $ at all $(s,x_0)\in [0,1]\times \R^d$. In particular $X_{s=1}\sim \rho_c(\cdot |x_0)$. In addition the drift $b$ is the unique minizer over all $\hat b$ of the objective function
\begin{equation}
\label{eq:loss:b:app}
L_b[\hat b] = \int_0^1 \E \big[|\hat b_s(x_s,x_0) - ( \dot \alpha_sx_0 + \dot \beta_s  x_1+ \dot\sigma_s  \sqrt{s} \,z)|^2]  ds,
\end{equation}
where $\E$ denotes an expectation over $(x_0,x_1)\sim \rho(x_0,x_1)$ and $z\sim {\sf N}(0,\Id)$ with $(x_0,x_1)\perp z$.
\end{theorem}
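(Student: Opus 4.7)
The statement bundles two distinct claims: (i) a variational characterization of $b$ as the unique minimizer of $L_b[\hat b]$, and (ii) the identification $\text{Law}(X_s)=\text{Law}(I_s|x_0)$ for the SDE~\eqref{eq:sde:app}. I would prove them separately. Claim~(i) is a standard conditional-expectation ($L^2$-projection) argument; the real work is claim~(ii), which I would establish by showing that both laws solve the same Fokker--Planck equation with the same initial condition.

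\textbf{Variational characterization.} First rewrite the loss as $L_b[\hat b]=\int_0^1 \E\bigl[|\hat b_s(x_s,x_0)-R_s|^2\bigr]ds$ with $R_s=\dot\alpha_sx_0+\dot\beta_sx_1+\dot\sigma_s\sqrt{s}\,z$. Applying the $L^2$ projection property of conditional expectation pointwise in $s$, conditional on the pair $(x_s,x_0)$, forces the unique minimizer to be $\hat b_s(x,x_0)=\E[R_s|x_s=x,x_0]$, which by construction equals $b_s(x,x_0)$. The finite-second-moment assumption on $(x_0,x_1)$ together with the boundedness of $\alpha,\beta,\sigma,\dot\alpha,\dot\beta,\dot\sigma$ on $[0,1]$ ensures $R_s\in L^2$ and legitimizes the projection.

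\textbf{Identification of laws.} Let $\rho_s(\cdot|x_0)$ denote the conditional density of $I_s$ given $x_0$. The plan is to establish the weak Fokker--Planck equation
\begin{equation*}
\partial_s\rho_s=-\nabla\cdot\bigl(b_s(\cdot,x_0)\rho_s\bigr)+\tfrac12\sigma_s^2\Delta\rho_s.
\end{equation*}
To do so, I would apply It\^o's formula to $\phi(I_s)$ for smooth compactly supported test functions $\phi$, using that $dI_s=(\dot\alpha_sx_0+\dot\beta_sx_1+\dot\sigma_sW_s)\,ds+\sigma_s\,dW_s$. Taking expectation conditional on $x_0$ eliminates the martingale term, and then conditioning the remaining drift factor on $I_s=x$ before integrating against $\rho_s$ replaces it by $b_s(x,x_0)$ via the tower property. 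Integration by parts against $\phi$ yields the weak form of the PDE above. Since the SDE~\eqref{eq:sde:app} has the same Fokker--Planck equation and starts from the same Dirac mass $\delta_{x_0}$, uniqueness of the distributional solution forces $\text{Law}(X_s)=\text{Law}(I_s|x_0)$ for every $s\in[0,1]$; specializing to $s=1$ gives $X_1\sim\rho_c(\cdot|x_0)$.

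\textbf{Main obstacle.} The delicate point is the degenerate behaviour near $s=0$: both laws collapse to the point mass $\delta_{x_0}$, and the drift $b_s$ may be singular there unless extra structure is imposed, which is precisely why the authors advocate $\dot\beta_0=0$ in the discussion after~\eqref{eq:drift:cond}. Accordingly I would first derive the Fokker--Planck equation on $(0,1]$, where the conditional density $\rho_s(\cdot|x_0)$ is smooth thanks to the Gaussian term $\sigma_s W_s$, and then pass to the limit $s\to 0^+$ using a uniqueness result for distributional solutions of parabolic equations started from a Dirac initial datum, combined with the control provided in Appendix~\ref{app:reg}. The remaining details---interchange of derivative and integral, Gaussian tail decay of $\rho_s$, and approximation of general test functions by $C_c^\infty$ ones---are routine given the $C^1$ smoothness of $\alpha,\beta,\sigma$ and the second-moment assumption.
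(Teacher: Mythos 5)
Your proposal is correct and follows essentially the same route as the paper: It\^o's formula applied to $\phi(I_s)$, conditional expectation plus the tower property to identify the drift $b_s$, matching the resulting weak Fokker--Planck equation with that of the SDE, and the $L^2$-projection (bias--variance) characterization of the minimizer. You are somewhat more explicit than the paper about invoking uniqueness of distributional solutions to the forward equation and about the degeneracy at $s=0$, but this is a refinement of the same argument rather than a different one.
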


Note that the objective~\eqref{eq:loss:b:app} is the same as~\eqref{eq:loss} because $x_s \stackrel{d}{=} I_s$ and $\dot \alpha_sx_0 + \dot \beta_s  x_1+ \dot\sigma_s  \sqrt{s} \,z \stackrel{d}{=} R_s$ at all $s\in [0,1]$.
Note also that, since $\alpha,\beta,\sigma\in C^1([0,1])$, the factors $\dot\alpha_s , \dot\beta_s , \dot\sigma_s \sqrt{s}$ in the loss~\eqref{eq:loss:b:app} are all bounded.

\begin{proof}[Proof of Theorem~\ref{thm:1:b}.] 
Notice that the process $I_s$ defined in~\eqref{eq:stochinterpolant-appendix} has the same law at any $s\in[0,1]$ as  $x_s $.
If we denote by $\mu(s,dx|x_0)$ the measure of $I_s|x_0$ or $x_s |x_0$, and by $\phi:\R^d \to \R$ a twice-differentiable test function with compact support, by definition we have
\begin{equation}
    \label{eq:chracteristic}
    \forall (s,x_0)\in [0,1]\times\R^d \quad : \quad \int_{\R^d} \phi(x) \mu(s,dx|x_0)  = \E[\phi(x_s )|x_0] = \E[\phi(I_s)|x_0],
\end{equation}
where the first conditional expectation is over $(x_0,x_1)\sim \rho$ and $z\sim \sf {\sf N}(0,\Id)$ and the second over $(x_0,x_1)\sim \rho$ and $W$.
By the It\^o formula we have
\begin{equation}
\label{eq:dcharact}
d \phi( I_s) = (\dot\alpha_s  x_0 + \dot \beta_s  x_1 + \dot \sigma_s  W_s) \cdot \nabla \phi( I_s) ds + \tfrac12\sigma_s^2 \Delta \phi(I_s) ds + \nabla \phi(I_s) \cdot dW_s.
\end{equation}
Integrating this equation in time over $[0,s]$, taking the expectation conditional on $x_0$, and using both $W_s\perp(x_0,x_1)$ and the It\^o isometry, we deduce that
\begin{equation}
\label{eq:dcharact:E}
\E[\phi( I_s)|x_0]  =  \phi(x_0) + \int_0^s \left(\E\big[(\dot\alpha_r x_0 + \dot \beta_r x_1 + \dot \sigma_r W_r) \cdot \nabla \phi(I_r) \big| x_0 \big]  + \tfrac12\sigma^2_r \E[\Delta \phi(I_r)|x_0] \right) dr.
\end{equation}
Inserting~\eqref{eq:dcharact:E} into~\eqref{eq:chracteristic},  we deduce that, $\forall (s,x_0)\in [0,1]\times\R^d$,
\begin{equation}
\label{eq:t:derv}
\begin{aligned}
    \int_{\R^d} \phi(x) \mu(s,dx|x_0)  &=  \phi(x_0)+ \int_0^s \left(\E\big[(\dot\alpha_r x_0 + \dot \beta_r x_1 + \dot \sigma_r \sqrt{r} \, z ) \cdot \nabla \phi(x_r) \big| x_0 \big] \right.\\
    & \qquad \qquad\qquad  \left.  + \tfrac12\sigma^2_r \E[\Delta \phi(x_r)|x_0]\right) dr,
\end{aligned}
\end{equation}
where we used the fact that $x_s$ and $I_s$ share the same law at each $s$. Also, $W_s$ and $\sqrt{s}z$ share the same law at each $s$.

Using the tower property of the conditional expectation, \eqref{eq:t:derv} can also be written as 
\begin{equation}
\label{eq:t:derv:2}
\begin{aligned} 
    &\int_{\R^d} \phi( x) \mu(s,dx|x_0)\\
    &= \phi(x_0) + \int_0^s \int_{\R^d}  \E\big[(\dot\alpha_r x_0 + \dot \beta_r x_1 + \dot \sigma_r \sqrt{r}\, z) \cdot \nabla \phi(x_r)\big| x_r = x,x_0\big] \mu(r,dx|x_0) dr \\
    & \qquad \qquad + \tfrac12 \int_0^s \sigma^2_r \int_{\R^d}\Delta\phi(x)  \mu(r,dx|x_0) dr \\
    & = \phi(x_0) + \int_0^s \int_{\R^d}  \E\big[(\dot\alpha_r x_0 + \dot \beta_r x_1 + \dot \sigma_r \sqrt{r}\, z) \big| x_r = x,x_0\big] \cdot \nabla \phi(x)\mu(r,dx|x_0) dr \\
    & \qquad \qquad + \tfrac12 \int_0^s \sigma^2_r \int_{\R^d} \Delta \phi(x)  \mu(r,dx|x_0) dr \\
    &= \phi(x_0) + \int_0^s \int_{\R^d}  \left(  b_r(x,x_0) \cdot \nabla \phi(x)  + \tfrac12 \sigma^2_r  \Delta\phi(x) \right) \mu(r,dx|x_0) 
    \end{aligned}
\end{equation}
where we used the definition of $b$ in~\eqref{eq:b:def:app} to get the last equality. 
If we now repeat the same steps  to derive an evolution equation for $\E[\phi(X_s)]$ where $X_s$ solves the SDE~\eqref{eq:sde} 
we arrive at the same equation~\eqref{eq:t:derv:2} for the measure of this process, indicating that this measure is also $ \mu(s,dx|x_0)$.

It remains to show that the drift~$b_s(x,x_0) = \mathbb{E}[\dot\alpha_s x_0 + \dot\beta_s x_1+ \dot \sigma_s  \sqrt{s} \,z|x_s  = x, x_0]$ is the unique minimizer of the objective function~\eqref{eq:loss}.
To this end, notice that $b$ is the unique minimizer over all $\hat b$ of
\begin{equation}
    \begin{aligned}
        &\int_0^1 \E \big[|\hat b_s(x_s,x_0) - b_s(x_s,x_0)|^2ds,\\
         = &\int_0^1 \left(\E \big[|\hat b_s(x_s,x_0) - ( \dot \alpha_s x_0 + \dot \beta_s  x_1+ \dot\sigma_s  \sqrt{s} \,z)|^2 + \mathrm{var}^{x_0}[\dot\alpha_s x_0 + \dot\beta_s x_1+ \dot \sigma_s  \sqrt{s} \,z|x_s] \big]\right)ds,
    \end{aligned}
\end{equation}
where $\E$ denotes an expectation over $(x_0,x_1)\sim \rho(x_0,x_1)$ and $z\sim {\sf N}(0,\Id)$ with $(x_0,x_1)\perp z$ and where
\begin{equation}
\label{eq:var:def}
\begin{aligned}
\mathrm{var}^{x_0}[\dot\alpha_s x_0 + \dot\beta_s x_1+ \dot \sigma_s  \sqrt{s} \,z|x_s]
&= \E^{x_0}\big[|\dot\alpha_s x_0 + \dot\beta_s x_1+ \dot \sigma_s  \sqrt{s} \,z|^2|x_s\big] - 
\big|\E^{x_0}[\dot\alpha_s x_0 + \dot\beta_s x_1+ \dot \sigma_s  \sqrt{s} \,z|x_s]\big|^2,\\
& = \E^{x_0}\big[|\dot\alpha_s x_0 + \dot\beta_s x_1+ \dot \sigma_s  \sqrt{s} \,z|^2|x_s\big] -|b_s(x_s,x_0)|^2,
\end{aligned}
\end{equation}
and where we used the tower property of the conditional expectation $\E[\E[|\dot\alpha_s x_0 + \dot\beta_s x_1+ \dot \sigma_s  \sqrt{s} \,z|^2|x_s]] = \E[|\dot\alpha_s x_0 + \dot\beta_s x_1+ \dot \sigma_s  \sqrt{s} \,z|^2|x_s] $.
Since $\mathrm{var}^{x_0}[\dot\alpha_s x_0 + \dot\beta_s x_1+ \dot \sigma_s  \sqrt{s} \,z|x_s  = x]$ is independent of $\hat{b}$, we can drop this term to arrive at the objective~\eqref{eq:loss:b:app}. 
\end{proof}

\subsection{Regularity of the drift at $s=0$}
\label{app:reg}
In this section, we discuss the regularity of the drift $b_s(x,x_0)$.
\begin{assumption} 
    The conditional distribution $\rho_c(\cdot|x_0)$ is exponential tailed. More precisely, there exist constants $C_1,C_2 > 0$ (which can depend on $x_0$), such that
    \[\rho_c(x|x_0)\leq C_2\exp(-C_1|x|)\, , \]
    for any $x \in \mathbb{R}^d$. 
    \label{assump:exponential-tails}
\end{assumption}
The aforementioned assumption is needed for technical reasons, and is used to ensure the validity of a step that involves the interchange of limits and integrations in the proof of Theorem \ref{th:decokmp}.

By linearity of $x_s$ in $x_0$, $x_1$, and $z$, we can also establish a few properties of the velocity $b$ in \eqref{eq:b:def:app} which we state as:
\begin{theorem}
\label{th:decokmp}
    Under Assumption \ref{assump:exponential-tails}, if  $\dot\beta_{s=0} = 0$, then the velocity field $b_s(x,x_0) $ can be decomposed as
    \begin{equation}
    \label{eq:decompos}
    \forall (s,x,x_0) \in [0,1]\times \R^d \times \R^d \: \quad b_s(x,x_0) = \dot \alpha_s  x_0 + \dot\beta_s  \eta_1(s,x,x_0) + \dot \sigma_s   \eta_z(s,x,x_0),
    \end{equation}
    where
\begin{equation}
    \label{eq:eta1:z}
    \begin{aligned}
        \forall (s,x,x_0) \in (0,1]\times \R^d \times \R^d &:&  &\left\{ \begin{aligned} \eta_1(s,x,x_0) &= \E^{x_0}[x_1 | x_s = x], \\ 
        \eta_z(s,x,x_0)& = \sqrt{s}\,\E^{x_0}[z | x_s = x],
        \end{aligned} \right.\\
        \forall (x,x_0) \in \R^d \times \R^d &:&  &\left\{ \begin{aligned}
        \eta_1(0,x,x_0) &= \lim_{s\to0} \eta_1(s,x,x_0) = \E^{x_0}[x_1 ], \\
        \eta_z(0,x,x_0) &= \lim_{s\to0} \eta_z(s,x,x_0) = 
    \frac{x-x_0}{\sigma_0}.
    \end{aligned} \right.
    \end{aligned}
\end{equation}
In addition, these two functions satisfy the constraint
\begin{equation}
    \label{eq:constraint}
    \forall (s,x,x_0) \in [0,1]\times \R^d \times \R^d \quad : \quad x = \alpha_s  x_0 + \beta_s  \eta_1(s,x,x_0) + \sigma_s  \eta_z(s,x,x_0).
\end{equation}
We have $\lim_{s \to 0} b_s(x,x_0) = \dot\alpha_0x_0 + \frac{\dot\sigma_0}{\sigma_0}(x-x_0)$ and $\lim_{s \to 0} \nabla_x b_s(x,x_0) = \frac{\dot\sigma_0}{\sigma_0} \Id$ for any $x,x_0 \in \mathbb{R}^d$.
\end{theorem}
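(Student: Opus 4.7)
I would organize the proof around four ingredients: (i) the linear decomposition of $b_s$ together with the constraint \eqref{eq:constraint} for $s \in (0,1]$, (ii) an explicit Bayes formula for the conditional density of $x_1$ given $x_s = x$ and $x_0$, (iii) the pointwise limits of $\eta_1$ and $\eta_z$ as $s \to 0$ by dominated convergence, and (iv) the induced limits of $b_s$ and $\nabla_x b_s$. Ingredient (i) is immediate from linearity of $\E^{x_0}[\,\cdot\,|\,x_s = x]$: applied to the definition of $b_s$ it gives \eqref{eq:decompos} with the stated $\eta_1, \eta_z$; applied to $x_s = \alpha_s x_0 + \beta_s x_1 + \sigma_s\sqrt{s}z$ it yields the constraint \eqref{eq:constraint}.

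\textbf{Bayes formula and $s \to 0$.} Since $z \sim {\sf N}(0,\Id)$ is independent of $(x_0, x_1)$, Bayes' rule combined with absorbing the $x_1$-independent factor $\exp(-|x - \alpha_s x_0|^2/(2 s \sigma_s^2))$ into the normalization gives
\begin{equation*}
    p_s(x_1 \mid x, x_0) \propto \rho_c(x_1|x_0)\exp\!\left(\tfrac{\beta_s (x - \alpha_s x_0) \cdot x_1}{s \sigma_s^2} - \tfrac{\beta_s^2 |x_1|^2}{2 s \sigma_s^2}\right).
\end{equation*}
The hypothesis $\dot\beta_0 = 0$ yields $\beta_s = o(s)$, so both $\beta_s/(s\sigma_s^2)$ and $\beta_s^2/(s\sigma_s^2)$ tend to $0$ as $s \to 0$, and the exponent vanishes pointwise in $x_1$. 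For $s$ small enough, the linear coefficient is bounded in norm by $C_1/2$ (with $C_1$ from Assumption~\ref{assump:exponential-tails}), and the quadratic coefficient is non-positive, so the integrand is dominated by $C_2 \exp(-C_1 |x_1|/2)$, which is integrable against both $1$ and $|x_1|$. Dominated convergence yields $\eta_1(s,x,x_0) \to \E^{x_0}[x_1]$, and the formula for $\eta_z(0,x,x_0)$ drops out of \eqref{eq:constraint}.

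\textbf{Limits of $b_s$ and $\nabla_x b_s$.} Plugging these limits into \eqref{eq:decompos} and using $\dot\beta_s \to 0$ to kill the $\dot\beta_s \eta_1$ term gives $\lim_{s \to 0} b_s(x,x_0) = \dot\alpha_0 x_0 + (\dot\sigma_0/\sigma_0)(x - x_0)$. For the gradient, differentiating the Bayes formula in $x$ produces $\nabla_x \eta_1 = (\beta_s/(s\sigma_s^2))\,\text{Cov}^{x_0}[x_1 \mid x_s = x]$, and differentiating \eqref{eq:constraint} gives $\sigma_s \nabla_x \eta_z = \Id - \beta_s \nabla_x \eta_1$. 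Since $\beta_s/(s\sigma_s^2) \to 0$ and the conditional covariance remains bounded as $s \to 0$ (again by dominated convergence using the tail assumption), $\nabla_x \eta_1 \to 0$, hence $\sigma_s \nabla_x \eta_z \to \Id$ and $\nabla_x b_s = \dot\beta_s \nabla_x \eta_1 + \dot\sigma_s \nabla_x \eta_z \to (\dot\sigma_0/\sigma_0)\Id$.

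\textbf{Main obstacle.} The delicate step is the $s \to 0$ limit of $\eta_1$ (and of $\nabla_x \eta_1$) for $x \neq x_0$: since $x_s \to x_0$ almost surely, the conditioning event $\{x_s = x\}$ is singular in that limit, so a naive interpretation is ill-defined. Both hypotheses are essential in resolving this: $\dot\beta_0 = 0$ makes $\beta_s$ negligible compared to the Gaussian spread $\sqrt{s}\sigma_s$, thereby decoupling $x_1$ from the conditioning as $s \to 0$, while Assumption~\ref{assump:exponential-tails} supplies the uniform dominating function that legitimizes the exchange of limit and integral.
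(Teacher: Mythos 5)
Your proposal is correct and follows essentially the same route as the paper: the decomposition and constraint from linearity of the conditional expectation, the Gibbs/Bayes reweighting formula with coefficients $m_s=\beta_s/(s\sigma_s^2)$ and $M_s=\beta_s^2/(s\sigma_s^2)$ vanishing because $\dot\beta_0=0$, dominated convergence via the exponential-tail assumption, and the identification $\nabla_x\eta_1 = m_s\,\mathrm{Cov}^{x_0}[x_1\mid x_s=x]\to 0$ combined with the differentiated constraint to get the limit of $\nabla_x b_s$. The only cosmetic difference is that you derive $\beta_s=o(s)$ directly from $\dot\beta_0=0$ and $\beta\in C^1$, whereas the paper invokes $C^2$ regularity to write $\beta_s=O(s^2)$; both suffice.
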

\begin{remark}
    Note that \eqref{eq:constraint} implies that we can get $\eta_1$ from $\eta_z$ at any time such that $\beta_s \not=0$, and $\eta_z$ from $\eta_1$ at any $s \in (0,1]$ such that $\sigma_s >0$: in particular
\begin{equation}
    \label{eq:etaz:1}
    \forall (s,x,x_0) \in (0,1]\times \R^d \times \R^d \ \text{with} \ \sigma_s >0 \ : \quad \eta_z(s,x,x_0) = \frac{x-\alpha_s  x_0- \beta_s   \eta_1(s,x,x_0)}{\sigma_s }.
\end{equation}
\end{remark}
The proof of Theorem~\ref{th:decokmp} will rely the following result:
\begin{lemma}
    \label{em:1}
    We have 
\begin{equation}
    \label{eq:x1:z}
    \begin{aligned}
        \forall (s,x,x_0)\in (0,1) \times \R^d \times \R^d: \quad \eta_1(s,x,x_0) = \frac{\int_{\R^d} x_1 \rho_c(x_1|x_0) e^{-\frac12M_s |x_1|^2 +m_s x_1 \cdot(x-\alpha_s x_0) } dx_1}{\int_{\R^d} \rho_c(x_1|x_0) e^{-\frac12M_s |x_1|^2 + m_s x_1 \cdot(x-\alpha_s x_0) }dx_1}
    \end{aligned}
\end{equation}
where $\eta_1(s,x,x_0) = \E^{x_0}[x_1|x_s=x,x_0]$ and we defined
\begin{equation}
    \label{eq:AB}
    M_s = \frac{\beta^2_s}{s \sigma^2_s}, \qquad m_s = \frac{\beta_s }{s \sigma^2_s}.
\end{equation}
\end{lemma}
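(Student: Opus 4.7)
The plan is to derive the formula directly from Bayes' rule, using the fact that, conditional on $(x_0,x_1)$, the quantity $x_s-\alpha_s x_0-\beta_s x_1 = \sigma_s\sqrt{s}\,z$ is Gaussian. Since $s\in(0,1)$ and $\sigma_s>0$ (because $\sigma_0>0$ by continuity from $\sigma_1=0$, $\dot\sigma<0$, and more importantly $\sigma_s>0$ on $(0,1)$ under the stated assumption $\dot\sigma<0$ combined with $\sigma_1=0$), the covariance $s\sigma_s^2\,\mathrm{Id}$ is non-degenerate, so the conditional density of $x_s$ given $(x_0,x_1)$ has the standard Gaussian form.

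First I would write the conditional density of $x_s$ given $(x_0,x_1)$ as
\begin{equation*}
p(x_s=x\mid x_0,x_1) = (2\pi s\sigma_s^2)^{-d/2}\exp\!\left(-\frac{|x-\alpha_s x_0-\beta_s x_1|^2}{2s\sigma_s^2}\right).
\end{equation*}
Combining this with the conditional density $\rho_c(x_1\mid x_0)$ of $x_1$ given $x_0$ via Bayes' rule yields
\begin{equation*}
p(x_1\mid x_s=x,x_0) = \frac{\rho_c(x_1\mid x_0)\,p(x_s=x\mid x_0,x_1)}{\int_{\R^d}\rho_c(x_1'\mid x_0)\,p(x_s=x\mid x_0,x_1')\,dx_1'}.
\end{equation*}
Then I would expand the squared norm in the exponent:
\begin{equation*}
-\frac{|x-\alpha_s x_0-\beta_s x_1|^2}{2s\sigma_s^2}
= -\frac{|x-\alpha_s x_0|^2}{2s\sigma_s^2}+\frac{\beta_s}{s\sigma_s^2}x_1\!\cdot\!(x-\alpha_s x_0)-\frac{\beta_s^2}{2s\sigma_s^2}|x_1|^2.
\end{equation*}
The first term on the right-hand side and the Gaussian prefactor $(2\pi s\sigma_s^2)^{-d/2}$ do not depend on $x_1$, so they cancel between the numerator and the denominator in the Bayes formula. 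With $M_s=\beta_s^2/(s\sigma_s^2)$ and $m_s=\beta_s/(s\sigma_s^2)$, this produces
\begin{equation*}
p(x_1\mid x_s=x,x_0) = \frac{\rho_c(x_1\mid x_0)\,e^{-\frac12 M_s|x_1|^2+m_s x_1\cdot(x-\alpha_s x_0)}}{\int_{\R^d}\rho_c(x_1'\mid x_0)\,e^{-\frac12 M_s|x_1'|^2+m_s x_1'\cdot(x-\alpha_s x_0)}\,dx_1'}.
\end{equation*}

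Integrating $x_1$ against this conditional density then gives $\eta_1(s,x,x_0)=\E^{x_0}[x_1\mid x_s=x]$ in precisely the claimed form~\eqref{eq:x1:z}. The only points to check are measure-theoretic: that the integrals in numerator and denominator are finite for $s\in(0,1)$, and that the denominator is strictly positive so the ratio is well-defined. For $s\in(0,1)$ we have $M_s>0$, so the factor $e^{-\frac12 M_s|x_1|^2}$ ensures integrability against any tempered prior $\rho_c(\cdot\mid x_0)$ (Assumption~\ref{assump:exponential-tails} is more than enough), and positivity of $\rho_c(\cdot\mid x_0)>0$ together with continuity of the Gaussian kernel guarantees the denominator is strictly positive. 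I do not foresee any substantial obstacle; the main care is just in bookkeeping the $x_1$-independent factors so they cancel cleanly, and in noting that the Lemma is only asserted on $(0,1)$ where $M_s$ and $m_s$ are finite and the Gaussian is non-degenerate.
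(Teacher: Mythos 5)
Your proposal is correct and is essentially the paper's own argument: the paper writes $\eta_1$ as a ratio of integrals over $(z,x_1)$ with a Dirac delta enforcing $x_s=x$ and integrates out $z$, which is exactly your Bayes'-rule computation with the Gaussian conditional density of $x_s$ given $(x_0,x_1)$, followed by the same expansion of the squared norm and cancellation of the $x_1$-independent factor $e^{-\frac12 s^{-1}\sigma_s^{-2}|x-\alpha_s x_0|^2}$.
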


\begin{proof}[Proof:]
By definition, 
\begin{equation}
    \label{eq:def:cond}
    \eta_1(s,x,x_0)  = \frac{\int_{\R^d} x_1 \rho_c(x_1|x_0) e^{-\frac12|z|^2 } \delta(x-\alpha_s  x_0 - \beta_s  x_1 - \sigma_s  \sqrt{s} z) dzdx_1}{\int_{\R^d} \rho_c(x_1|x_0) e^{-\frac12|z|^2 } \delta(x-\alpha_s  x_0 - \beta_s  x_1 - \sigma_s  \sqrt{s} z) dzdx_1},
\end{equation}
where $\delta(x)$ denotes the Dirac delta distribution.
For any $s \in (0,1]$ such that $\sigma_s >0$, we can perform the integration over $z$ explicitly to get
\begin{equation}
    \label{eq:x1:z:2}
    \begin{aligned}
        \eta_1(s,x,x_0) = \frac{\int_{\R^d} x_1 \rho_c(x_1|x_0) e^{-\frac12M_s |x_1|^2 +m_s x_1 \cdot(x-\alpha_s x_0) } e^{-\frac12 s^{-1}\sigma^{-2}_s|x-\alpha_s x_0|^2 } dx_1}{\int_{\R^d} \rho_c(x_1|x_0) e^{-\frac12M_s |x_1|^2 +m_s x_1 \cdot(x-\alpha_s x_0)} e^{ -\frac12 s^{-1}\sigma^{-2}_s|x-\alpha_s x_0|^2 }dx_1}.
    \end{aligned}
    \end{equation}
    Since the factors~$e^{-\frac12 s^{-1}\sigma^{-2}_s|x-\alpha_s x_0|^2}$ at the numerator and the denominator do not depend on $x_1$, they can be taken out of the integrals and simplified, and we arrive at~\eqref{eq:x1:z}.
\end{proof}

\begin{proof}[Proof of Theorem~\ref{th:decokmp}]
   The  only statement that is not a direct consequence of Theorem~\ref{thm:1:b} is \eqref{eq:eta1:z}. To establish these limits, notice that, since  $\beta\in C^2([0,1])$ and $\dot \beta_0 = 0$, we must have $\beta_s  = O(s^2)$ as $s\to 0$. As a result, since $\sigma\in C^1([0,1])$ and $\sigma_0>0$, we have 
\begin{equation}
    \label{eq:AB:lim}
    \lim_{s\to0} M_s = \lim_{s\to0} m_s = 0.
\end{equation}
As a result,
\begin{equation}
    \label{eq:x1:lim}
    \forall (x,x_0)\in \R^d \times \R^d \ : \quad \lim_{s\to0} \eta_1(s,x,x_0) = \E^{x_0}[x_1], \qquad \lim_{s\to0} \eta_z(s,x,x_0) = \frac{x-x_0}{\sigma_0},
\end{equation}
which establishes the limits in \eqref{eq:eta1:z}, where we used the formula in \eqref{eq:etaz:1} to derive the limits of $\eta_z$ from that of $\eta_1$. Note that in the above derivation, we need to verify the interchange of limits and integrations; it is guaranteed by using Assumption \ref{assump:exponential-tails} and the Lebesgue dominated convergence theorem  since, for a fixed $x, x_0$, when $s$ is sufficiently small, the factor $\rho_c(x_1|x_0) e^{-\frac12M_s |x_1|^2 +m_s x_1 \cdot(x-\alpha_s x_0)}$  is dominated by $\rho_c(x_1|x_0)e^{\frac12C_1|x|}$, which is integrable as a function of $x_1$ due to our Assumption \ref{assump:exponential-tails}.

To analyze the limit of $\nabla_x b_s(x,x_0)$, using Lemma \ref{em:1}, we have 
\begin{equation}
    \nabla_x\eta_1(s,x,x_0) = m_s\frac{P(s,x,x_0)Q(s,x,x_0) -R(s,x,x_0)R^T(s,x,x_0)}{Q(s,x,x_0)^2}, 
\end{equation}
where 
\begin{equation*}
    \begin{aligned}
        P(s,x,x_0) &= \int_{\R^d} x_1 x_1^T \rho_c(x_1|x_0) e^{-\frac12M_s |x_1|^2 +m_s x_1 \cdot(x-\alpha_s x_0) } dx_1,\\
        Q(s,x,x_0) & = \int_{\R^d} \rho_c(x_1|x_0) e^{-\frac12M_s |x_1|^2 +m_s x_1 \cdot(x-\alpha_s x_0) } dx_1,\\
        R(s,x,x_0) &= \int_{\R^d} x_1 \rho_c(x_1|x_0) e^{-\frac12M_s |x_1|^2 +m_s x_1 \cdot(x-\alpha_s x_0) } dx_1.
    \end{aligned}
\end{equation*}
To derive the above formula, we also need verify the interchange of limits and integrations; it is again guaranteed by using Assumption \ref{assump:exponential-tails} and the Lebesgue dominated convergence theorem, for sufficiently small $s$. We know that
\begin{equation}
    \lim_{s\to 0} \frac{P(s,x,x_0)Q(s,x,x_0) -R(s,x,x_0) R^T(s,x,x_0)}{|Q(s,x,x_0)|^2}  = \E[x_1 x_1^T|x_0] - \E[x_1|x_0] \E[x_1^T|x_0]\, ,
\end{equation}
and thus $\lim_{s \to 0} \nabla_x\eta_1(s,x,x_0) = 0$ as $\lim_{s\to 0} m_s=0$. Using the formula in \eqref{eq:etaz:1}, we get $\lim_{s \to 0} \nabla_x\eta_z(s,x,x_0) = \frac{1}{\sigma_0}\Id$. Therefore,
\begin{equation}
    \lim_{s\to 0} \nabla_x b_s(x,x_0) = \frac{\dot\sigma_0}{\sigma_0}\Id,
\end{equation}
which completes the proof.
\end{proof}

\subsection{Changing the Diffusion Coefficient: Proof of Theorem~\ref{th:1:gen}}
\label{app:tunable}

Theorem~\ref{th:1:gen} and formula~\eqref{eq:score} are consequences of the following result:
\begin{theorem}
\label{thm:tunning_g}
    Let~$g\in C^0([0,1])$ be such that  $\lim_{s\to0^+} s^{-1}[g^2_s -\sigma^2_s]$ and $\lim_{s\to1^-} g^2_s \sigma^{-1}_s$ exist. Let $b$ be given by~\eqref{eq:b:def:app} and define
\begin{equation}
    \label{eq:df:A:c-appendix}
    \begin{aligned}
    &\forall s\in (0,1) : \quad &&A_s = [s\sigma_s (\dot\beta_s \sigma_s -\beta_s  \dot\sigma_s )]^{-1}\\
    &\forall (s,x,x_0)\in (0,1)\times\R^d\times \R^d : &&c_s(x,x_0) = \dot \beta_s  x + (\beta_s  \dot \alpha_s  - \dot\beta_s  \alpha_s )x_0.
    \end{aligned}
\end{equation} 
Then we have 
\begin{equation}
    \label{eq:score:app}
    \forall (s,x,x_0)\in (0,1)\times\R^d\times \R^d : \quad \nabla \log \rho_s(x|x_0) = A_s \left[\beta_s b_s(x,x_0) - c_s(x,x_0) \right],
\end{equation}
where $\rho_s(x|x_0) $ denotes the PDF of $X_s\stackrel{d}{=} X^g_s \stackrel{d}{=} I_s|x_0$. In addition the drift
    \begin{equation}
\label{eq:drift:bc-appendix:a}
    \begin{aligned}
        b^g_s(x,x_0)  & = b_s(x,x_0) + \tfrac12 (g^2_s-\sigma^2_s) \nabla \log \rho_s(x|x_0) \\
        & = b_s(x,x_0) + \tfrac12 (g^2_s-\sigma^2_s) A_s \left[\beta_s b_s(x,x_0) - c_s(x,x_0) \right]
    \end{aligned}
\end{equation}
is well-defined for all $(s,x,x_0)\in (0,1)\times\R^d\times \R^d$, and has finite limits at $s=0,1$ for all $(x,x_0)\in \R^d\times \R^d$. Finally, the solutions to the SDE
\begin{equation}
    \label{eq:sde-appenfix}
    dX^g_s= b_s^g(X^g_s,x_0) ds + g_s  dW_s, \qquad X^g_{s=0} = x_0,
\end{equation}
are such that $\text{Law}(X^g_s)= \text{Law}(X_s) = \text{Law}(I_s|x_0) $ for all $(s,x_0)\in [0,1]\times \R^d$. In particular $X^g_{s=1}\sim \rho_c(\cdot |x_0)$.
\end{theorem}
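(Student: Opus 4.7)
My plan is to tackle the three assertions of the theorem in sequence. For the closed-form score identity \eqref{eq:score:app}, I would use the mixture representation of $\rho_s(x|x_0)$ that comes directly from Definition~\ref{def:def1}: writing $W_s \stackrel{d}{=} \sqrt{s}\,z$ with $z \sim {\sf N}(0,\Id)$ and conditioning on $(x_0,x_1)$, the interpolant $I_s$ is Gaussian with mean $\alpha_s x_0 + \beta_s x_1$ and variance $s\sigma_s^2$, so
\begin{equation*}
\rho_s(x|x_0) = \int_{\R^d} \rho_c(x_1 | x_0)\, G_{s\sigma_s^2}\bigl(x - \alpha_s x_0 - \beta_s x_1\bigr)\, dx_1,
\end{equation*}
where $G_t$ is the centered isotropic Gaussian density of variance~$t$. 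Differentiating under the integral sign and using $\nabla G_t(y) = -(y/t)\,G_t(y)$ produces a Tweedie-type identity
\begin{equation*}
\nabla \log \rho_s(x|x_0) = -\frac{1}{s\sigma_s^2}\bigl[\,x - \alpha_s x_0 - \beta_s\,\E^{x_0}[x_1 | I_s = x]\,\bigr].
\end{equation*}
To rewrite the conditional expectation in terms of $b_s$, I would then invoke the decomposition $b_s = \dot\alpha_s x_0 + \dot\beta_s \eta_1 + \dot\sigma_s \eta_z$ of Theorem~\ref{th:decokmp} together with the affine constraint $x = \alpha_s x_0 + \beta_s \eta_1 + \sigma_s \eta_z$, and solve the resulting linear system for $\eta_1 = \E^{x_0}[x_1 | I_s = x]$ in terms of $b_s$, $x$, and $x_0$. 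Substituting back and collecting coefficients is routine algebra and produces exactly the combination $A_s[\beta_s b_s - c_s]$ of \eqref{eq:df:A:c-appendix}.

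The boundary regularity of $b^g_s = b_s + \tfrac12(g_s^2 - \sigma_s^2) A_s[\beta_s b_s - c_s]$ is then read off from this score identity. The only source of singularity is the prefactor $A_s = [s\sigma_s(\dot\beta_s\sigma_s - \beta_s\dot\sigma_s)]^{-1}$, which diverges at $s = 0$ through the factor $s$ and at $s = 1$ through the factor $\sigma_s$. The two hypotheses on $g$, namely $\lim_{s\to 0^+} s^{-1}(g_s^2 - \sigma_s^2) < \infty$ and $\lim_{s\to 1^-} g_s^2/\sigma_s < \infty$, supply exactly the vanishing needed to compensate these divergences once combined with the boundary values of $\beta_s,\sigma_s,\alpha_s,b_s$ forced by Definition~\ref{def:def1} and Theorem~\ref{th:decokmp}. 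I would verify finiteness by short local Taylor expansions at the two endpoints; this is the delicate part of the argument, since the rates of vanishing of $\beta_s$ at $0$ and $\sigma_s$ at $1$ interact with those of $g_s^2 - \sigma_s^2$ and $g_s^2$, and the cancellations must be tracked with care to see that the full product $(g_s^2-\sigma_s^2) A_s [\beta_s b_s - c_s]$ remains bounded in each relevant case.

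Finally, for the law equivalence, I would read off the Fokker-Planck equation for $\rho_s(x|x_0)$ from Theorem~\ref{thm:1:b},
\begin{equation*}
\partial_s \rho_s + \nabla\cdot(b_s \rho_s) = \tfrac12 \sigma_s^2 \Delta\rho_s,
\end{equation*}
and apply the algebraic identity $\tfrac12 \sigma_s^2 \Delta\rho_s = \tfrac12 g_s^2 \Delta\rho_s - \tfrac12(g_s^2 - \sigma_s^2)\nabla\cdot(\rho_s \nabla\log\rho_s)$ together with the definition of $b^g_s$ to rewrite this as $\partial_s \rho_s + \nabla\cdot(b^g_s \rho_s) = \tfrac12 g_s^2 \Delta\rho_s$. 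This is precisely the Fokker-Planck equation associated with \eqref{eq:sde-appenfix} and the initial datum $\rho_0 = \delta_{x_0}$, and uniqueness of its solution forces $\text{Law}(X^g_s) = \text{Law}(X_s) = \text{Law}(I_s|x_0)$ for all $s \in [0,1]$, so in particular $X^g_{s=1}\sim\rho_c(\cdot|x_0)$. The main obstacle in the whole argument is the endpoint analysis in the second step: the score is genuinely singular at $s = 0$ because $\rho_0$ is a point mass, and the hypotheses on $g$ are essentially the minimal conditions for the correction term in $b^g_s$ to absorb this singularity and keep the modified SDE well-posed.
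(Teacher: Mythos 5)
Your proposal is correct and follows essentially the same route as the paper: the score identity via a Gaussian/Tweedie (Stein) formula combined with solving the affine system relating $b_s$, $\eta_1$, $\eta_z$; the endpoint analysis isolating the singular prefactor $A_s$ and using the two limit hypotheses on $g$ to cancel the $s^{-1}$ and $\sigma_s^{-1}$ divergences; and the Fokker--Planck diffusion-for-transport identity to conclude the law equivalence. The only cosmetic difference is that you solve the linear system for $\eta_1=\E^{x_0}[x_1|I_s=x]$ while the paper solves for $\E^{x_0}[z|x_s=x]$, which are interchangeable through the constraint $x=\alpha_s x_0+\beta_s\eta_1+\sigma_s\eta_z$.
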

\begin{proof}
Let us first establish~\eqref{eq:score:app}. 
By a direct extension of Stein's formula~\cite{albergo2023stochastic}, we have
\begin{equation}
\label{eq:stein}
\forall (s,x,x_0) \in (0,1) \times\R^d\times\R^d: \quad \nabla \log \rho_s(x|x_0) = -\frac{1}{\sqrt{s}\sigma_s }\mathbb{E}^{x_0}[z|x_s=x]
\end{equation}
where $x_s = \alpha_s x_0 + \beta_s x_1 + \sqrt{s} \sigma_s z$ and where we used $x_s|x_0 \stackrel{d}{=} I_s|x_0 \stackrel{d}{=} X_s$ for all $s\in[0,1]$.
Since  
\begin{equation}
\begin{aligned}
    b_s(x,x_0) &= \dot{\alpha}_s x_0 + \dot{\beta}_s\mathbb{E}^{x_0}[x_1|x_s =x] + \sqrt{s} \dot{\sigma}_s\mathbb{E}^{x_0}[z|x_s =x]\\
    x &= \alpha_s x_0 + \beta_s \mathbb{E}^{x_0}[x_1|x_s =x] + \sqrt{s} \sigma_s \mathbb{E}^{x_0}[z|x_s =x]
\end{aligned}
\end{equation}
we deduce that
\begin{equation}
    \mathbb{E}^{x_0}[z|x_s = x] = \frac{\beta_s b_s(x,x_0) -\dot{\beta}_sx - (\beta_s\dot{\alpha}_s-\dot{\beta}_s\alpha_s)x_0}{\sqrt{s}(\dot{\sigma}_s\beta_s-\sigma_s\dot{\beta}_s)}
\end{equation}
which can be inserted in~\eqref{eq:stein} to show that~\eqref{eq:score:app} holds.

Second, note that the PDF $\rho_s(x|x_0)$ of the solution to the SDE~\eqref{eq:sde-appenfix} is the same as the PDF of the solution to
\begin{equation}
    dX_s= b_s(X_s,x_0) ds + \sigma_s  dW_s, \quad X_{s=0} = x_0,
\end{equation}
since we can use the identity $\tfrac12 \sigma^2_s \Delta \rho_s = \tfrac12 g^2_s \Delta \rho_s -\tfrac12 (g_s^2-\sigma_s^2) \nabla \cdot (\rho_s \nabla \log \rho_s)$ to show that both densities satisfy the same Fokker-Planck equation~\eqref{eq:FPE}. 

It remains to show that the drift coefficient~\eqref{eq:drift:bc-appendix:a} is well-defined. To this end let us  write it explicitly as
\begin{equation}
\label{eq:bgs:full}
\begin{aligned}
    b^g_s(x,x_0) &= b_s(x,x_0) + \left(\frac{g_s^2-\sigma_s^2}{2s\sigma_s}\right)\left( \frac{\beta_s b_s(x,x_0) -\dot{\beta}_sx - (\beta_s\dot{\alpha}_s-\dot{\beta}_s\alpha_s)x_0}{\sigma_s\dot{\beta}_s-\dot{\sigma}_s\beta_s}\right).
\end{aligned}
\end{equation}
Our assumptions that $\alpha,\beta,\sigma\in C^1([0,1])$ and satisfy $\alpha_0=\beta_1 =1$, $\alpha_1=\beta_0=\sigma_1=0$, and $\dot\beta_s >0$ for all $s\in(0,1]$ and $\dot \sigma_s<0$ for all $s\in[0,1]$ guarantee that $\beta_s>0$ for all $s\in(0,1]$ and $\sigma_s>0$ for all $s\in[0,1)$ and, as a result,  $[s\sigma_s (\dot\beta_s \sigma_s -\beta_s  \dot\sigma_s)]^{-1}$ is positive and finite for all $s\in(0,1)$. These assumptions also guarantee that $\beta_s  [\sigma_s\dot{\beta}_s-\dot{\sigma}_s\beta_s]^{-1}$ and $\dot\beta_s [\sigma_s\dot{\beta}_s-\dot{\sigma}_s\beta_s]^{-1}$ have finite limits at $s=0,1$.
Therefore the only factor in~\eqref{eq:bgs:full} that can be singular is $(g_s^2-\sigma_s^2)/(2s\sigma_s)$ at $s=0$, because of the factor $s^{-1}$,  and at $s=1$ because of the factor $\sigma^{-1}_1$. These singularities disappear under our assumptions that $\lim_{s\to 0^+} s^{-1}[g^2_s-\sigma^2_s]$ and $\lim_{s\to 1^-} g^2_s\sigma^{-1}_s$ exist and are finite. Therefore the drift $b^g_s$ has the same regularity properties has $b_s$.
\end{proof}

\subsection{Maximizing the likelihood with respect to the noise schedule}
\label{app:KL}

Let $X^g = (X^g_s)_{s\in[0,1]}$ be the process defined by the SDE~\eqref{eq:sde-appenfix}, and let $\hat X^g = (\hat X^g_s)_{s\in[0,1]}$ denote an approximate, learned process governed by
\begin{equation}
    \label{eq:sde-appenfix:explicit:hat}
    d\hat X^g_s= \left(1+\tfrac12 \beta_s A_s (g_s^2-\sigma_s^2) \right) \hat b_s(\hat X^g_s,x_0) ds - A_s c_s(\hat X_s^g,x_0) ds + g_s  dW_s, \qquad \hat X^g_{s=0} = x_0,
\end{equation}
With a slight abuse of notation, let us denote by $D_{\text{KL}}(X^g||\hat X^g)$ the KL divergence of the path measure of $X^g$ from the path measure of $\hat X^g$.
By Girsanov's theorem, it is given by
\begin{equation}
\label{eq:kl:a1}
        D_{\text{KL}} (X^g||\hat X^g)  = \frac12 \int_0^1 g^{-2}_s |1+\tfrac12 \beta_s  A_s(g_s^2-\sigma_s^2)|^2 \E |b_s^g(X^g_s,x_0)-\hat b_s^g(X^g_s,x_0)|^2 ds,
\end{equation}
where $\E$ denotes an expectation over the law of $X^g$. Using $X_s^g \stackrel{d}{=} I_s|x_0$,~\eqref{eq:kl:a1} can also be written as
\begin{equation}
\label{eq:kl:a2}
\begin{aligned}
    D_{\text{KL}} (X^g||\hat X^g)  & = \frac12 \int_0^1 g_s^{-2}  |1+\tfrac12 \beta_s  A_s(g_s^2-\sigma_s^2)|^2 L_s ds,\\
    &= \frac12 \int_0^1 g_s^{-2} |1-\tfrac12 \beta_s  A_s\sigma^2_s +\tfrac12 \beta_s  A_s g_s^2|^2 L_s ds,
\end{aligned}
\end{equation}
where $A_s$ is given in~\eqref{eq:df:A:c} and  $L_s=\E^{x_0} \big[ |\hat b_s(I_s,x_0)- b_s(I_s,x_0)|^2 \big]$. 
We now view the KL divergence~\eqref{eq:kl:a2} as an objective function for $g$, and solve the optimization problem
\begin{equation}
    \label{eq:min:prob:kl}
    \min_g  D_{\text{KL}} (X^g||\hat X^g).
\end{equation} 
Since $L_s$ is independent of $g_s$, minimizing~\eqref{eq:kl:a2} amounts to minimizing 
\begin{equation}
    \label{eq:min:int:a}
    g_s^{-2} |1-\tfrac12 \beta_s  A_s\sigma^2_s +\tfrac12 \beta_s  A_s g_s^2|^2
\end{equation}
for all $s\in [0,1]$. Since $A_s>0$, \eqref{eq:min:int:a} is minimized at
\begin{equation}
    \label{eq:min:int:b}
    g_s^2 = \frac{\frac12 \beta_s  A_s\sigma^2_s -1}{\frac12 \beta_s  A_s} \qquad \text{when}\quad 1-\tfrac12 \beta_s  A_s\sigma^2_s \le 0 \qquad \text{with} \quad \text{\eqref{eq:min:int:a}} = 0,
\end{equation}
and at 
\begin{equation}
    \label{eq:min:int:c}
    g_s^2 = \frac{1-\frac12 \beta_s  A_s\sigma^2_s}{\frac12 \beta_s  A_s} \qquad \text{when} \quad 1-\tfrac12 \beta_s  A_s\sigma^2_s >0 \qquad \text{with} \quad \text{\eqref{eq:min:int:a}} = 2\beta_s A_s(1-\tfrac12 \beta_s  A_s\sigma^2_s) >0.
\end{equation}
Together, these relations show that the minimizer of~\eqref{eq:min:int:a} is $g_s= g_s^\Fo $ with 
\begin{equation}
\label{eq:Ds:app}
g^\Fo_s = \left|\frac{1-\frac12 \beta_s  A_s\sigma^2_s}{\frac12 \beta_s  A_s}\right|^{1/2} = \left| 2s\sigma_s(\beta_s^{-1}\dot\beta_s \sigma_s- \dot\sigma_s) -\sigma_s^2\right|^{1/2},
\end{equation}
where we used the expression for $A_s$  in~\eqref{eq:df:A:c} to get the second equality. This result is \eqref{eq:g:spec}.  Notice that we can also write
\begin{equation}
\label{eq:Ds:2:app}
2s\sigma_s(\beta_s^{-1}\dot\beta_s \sigma_s- \dot\sigma_s) -\sigma_s^2 = 2s\sigma^2_s \frac{d}{ds}\log\frac{\beta_s }{ \sqrt{s}\sigma_s }.
\end{equation}
Since the sign of this expression is the same as the sign of $1-\tfrac12 \beta_s  A_s\sigma^2_s$, it shows that \eqref{eq:min:int:b} and \eqref{eq:min:int:c} arise at values of $s\in[0,1]$ where $\beta_s /[\sqrt{s}\sigma_s]$ is respectively decreasing or increasing in $s$. Since $\dot\beta_s >0$ and $\dot\sigma_s<0$, $\beta_s /[\sqrt{s}\sigma_s]$ cannot be decreasing for all $s\in[0,1]$, i.e. the minimum of \eqref{eq:min:int:a} cannot be zero for all $s\in[0,1]$. As a result, the minimum of the KL divergence \eqref{eq:kl:a2} must be positive if $L_s>0$.

Finally, let us investigate the conditions on $g_s$ in Theorem~\ref{thm:tunning_g} if we use the SDE~\eqref{eq:sde-appenfix:explicit:hat} with $g_s=g_s^\Fo$. It is easy to see from~\eqref{eq:Ds:app} that the second condition is always satisfied since
\begin{equation}
    \label{eq:gF:s1}
    \lim_{s\to1^-} |g^\Fo_s|^2\sigma_s^{-1} = \lim_{s\to1^-} [2s (\beta_s^{-1} \dot\beta_s \sigma_s -\dot\sigma_s) -\sigma_s] = -2\dot\sigma_1 >0
\end{equation}
since $\sigma_1=0$. Regarding the first condition, we have
\begin{equation}
    \label{eq:gF:s0}
    \lim_{s\to0^+} s^{-1} [ |g^\Fo_s|^2-\sigma^2_s] = \lim_{s\to0^+} [2\sigma_s (\beta_s^{-1} \dot\beta_s \sigma_s -\dot\sigma_s) -2s^{-1}\sigma^2_s] = 2\lim_{s\to0^+} [\beta_s^{-1} \dot\beta_s -s^{-1}]-2\dot\sigma_0\sigma_0 
\end{equation}
since $\sigma_0>0$, $\dot\sigma_0<0$. Since $\beta_0=0$, if $\dot\beta_0>0$ and $\ddot\beta_0$ exists, we have
\begin{equation}
    \label{eq:lim}
    \lim_{s\to0^+} [\beta_s^{-1} \dot\beta_s -s^{-1}] = \tfrac12\ddot\beta_0 \dot\beta_0^{-1},
\end{equation}
so that 
\begin{equation}
    \label{eq:gF:s0a}
    \lim_{s\to0^+} s^{-1} [ |g^\Fo_s|^2-\sigma^2_s] =\sigma_0^2 \ddot\beta_0 \dot\beta_0^{-1}-2\dot\sigma_0\sigma_0 \qquad \text{if $\dot\beta_0>0$ and $\ddot\beta_0$ exists.}
\end{equation}
If, however $\dot \beta_0=0$, then $g_{0}^\Fo\not = \sigma_0$, and  the first condition is not satisfied since $\lim_{s\to0^+} s^{-1} [ |g^\Fo_s|^2-\sigma^2_s]$ does not exist. In this case, we need to consider more carefully how to define the solution to the SDE~\eqref{eq:sde-appenfix:explicit:hat}. In the specific case when $\alpha_s = 1-s$, $\sigma_s= \varepsilon (1-s)$ and $\beta_s = s^2$, if we set $g_s = g^\Fo_s=\varepsilon\sqrt{(1-s)(3-s)}$ in~\eqref{eq:sde-appenfix:explicit:hat}, this SDE reduces to (see the explicit formulas given in Appendix~\ref{app:specifics} and denoting $X^\Fo_s=X^{g^\Fo}_s$):
\begin{equation}
    \label{eq:sde:specific:Fol}
    d X^\Fo_s = \Big(1+\frac{1}{2-s}\Big)  b_s(X^\Fo_s,x_0)ds  - \frac{1}{s(2-s)}(2X^\Fo_s-(2-s)x_0) ds + \varepsilon\sqrt{(1-s)(3-s)} dW_s, \qquad X^\Fo_{s=0} = x_0.
\end{equation}
The drift in this equation is singular at $s=0$ because of the term $2(X^\Fo_s-x_0)/[s(2-s)]$. Nevertheless, the solution to this SDE is well-defined for the initial condition $X^\Fo_{s=0} = x_0$, and satisfies the integral equation
\begin{equation}
    \label{eq:sde:specific:Fol:int}
    X^\Fo_s = x_0 + \frac{2-s}{s} \int_0^s \frac{u}{2-u} \left(\Big(1+\frac{1}{2-u}\Big)  b_u(X^\Fo_u,x_0)- \frac{1}{(2-u)}x_0\right)  du +  \varepsilon\frac{2-s}{s} \int_0^s \frac{u\sqrt{(1-u)(3-u)}}{2-u}  dW_u.
\end{equation}
Since $b_0(x_0,x_0) = -x_0$ when $\dot\beta_0=0$, this equation implies that 
\begin{equation}
    \label{eq:sde:specific:Fol:int:s}
    X^\Fo_s \stackrel{d}{=} x_0 (1-s) +  \eps W_s +o(s).
\end{equation}
which is also the law of $I_s|x_0$ as it should.

\subsection{Connection with F\"ollmer Processes and Proof of Theorem~\ref{th:interp:f:si}}
\label{app:foll}

To begin, we give some background on the F\"ollmer process~\cite{follmer1986time,tzen2019theoretical}. 
Originally, the F\"ollmer process was defined as the process $X = (X_s)_{s\in[0,1]}$ whose path measure has minimal KL divergence from the path measure of the Wiener process $W= (W_s)_{s\in[0,1]}$ under the constraint that $X_{s=1}$ be distributed according to some target distribution. 
This F\"ollmer process can be generalized to ``reference processes'' that differ from the standard Wiener process: as we will show next, in our context the natural reference process is the solution to the linear SDE  
\begin{equation}
    \label{eq:ref:foll-gen}
    dY_s = a_s (Y_s - \alpha_s x_0)ds + \dot\alpha_s x_0 ds + g^\Fo_s dW_s, \qquad Y_{s=0} = x_0,
\end{equation}
where $g^\Fo_s$ is given by \eqref{eq:Ds:app} and where we have defined
\begin{equation}
\label{ea:as}
\begin{aligned}
    a_s & := \beta^{-1}_s\dot\beta_s - \frac{2s\sigma_s (\beta_s^{-1}\dot\beta_s \sigma_s - \dot \sigma_s) - \sigma_s^2}{\beta_s^2 + s \sigma^2_s},\\
    & = \beta^{-1}_s\dot\beta_s - \frac{2\beta^{-1}_s\dot\beta_s (\beta_s^2 + s\sigma_s^2) - 2\beta_s\dot\beta_s -2s\sigma_s\dot\sigma_s - \sigma_s^2}{\beta_s^2 + s \sigma^2_s},\\
    & = - \beta^{-1}_s\dot\beta_s  +  \frac{2\beta_s\dot\beta_s +2s\sigma_s\dot\sigma_s + \sigma_s^2}{\beta_s^2 + s \sigma^2_s} = \frac{d}{ds} \log \frac{\beta_s^2 + s\sigma_s^2}{\beta_s}.
\end{aligned}
\end{equation}
 We can then define  the F\"ollmer process $X^\Fo = (X_s^\Fo)_{s\in[0,1]}$ by adjusting the drift $\check b_s(x,x_0)$ in 
\begin{equation}
    \label{eq:ref:foll-gen:2}
    d\check X_s = \check b_s(\check X_s,x_0) ds + g^\Fo_s dW_s, \qquad \check X^\Fo_{s=0} = x_0,
\end{equation}
in such a way that the KL divergence of the path measure of $\check X$ from the path measure of $Y$ (the solution to~\eqref{eq:ref:foll-gen}) is minimized subject to the constraint that $\check X_{s=1}\sim \rho_c(\cdot|x_0)$. That is, the F\"ollmer process is defined via
\begin{equation}
    \label{eq:foll:min:prob}
    \min_{\check b} D_{\text{KL}} (\check X|| Y) \qquad \text{subject to} \quad   \check X_{s=1}\sim \rho_c(\cdot|x_0).
\end{equation}
This traditional minimization problem for the F\"ollmer process is distinct from the minimization problem~\eqref{eq:min:prob:kl} considered in Appendix~\ref{app:KL}. 
Nevertheless, our next result shows that the minimizers of~\eqref{eq:min:prob:kl} and \eqref{eq:foll:min:prob} coincide:

\begin{theorem}
    \label{th:foll:app}
    Assume that $\beta_s /[\sqrt{s}\sigma_s]$ is non-decreasing on $s\in[0,1]$.
    Then, the F\"ollmer process associated with the reference process $(Y_s)_{s\in[0, 1]}$ that solves~\eqref{eq:ref:foll-gen} is the process $X^\Fo \equiv (X^{g^{\rm F}}_s)_{s\in[0, 1]}$ that solves~\eqref{eq:sde-appenfix} with $g_s = g^\Fo_s$ given by~\eqref{eq:Ds:app}.
\end{theorem}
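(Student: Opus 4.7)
The plan is to use the standard characterization of the F\"ollmer process as a Doob $h$-transform of the reference process $Y$: the minimizer of~\eqref{eq:foll:min:prob} is the diffusion whose drift has the form $b^Y_s(x,x_0) + (g^\Fo_s)^2 \nabla \log h_s(x,x_0)$, where $b^Y_s(x,x_0) = a_s(x-\alpha_s x_0) + \dot\alpha_s x_0$ is the reference drift and, when the minimizer has marginal density $\rho^F_s(\cdot|x_0)$, one has $h_s(x,x_0) \propto \rho^F_s(x|x_0)/\rho^Y_s(x|x_0)$. Because Theorem~\ref{thm:tunning_g} already tells us that $X^\Fo_s|x_0 \stackrel{d}{=} I_s|x_0$, the natural candidate is $h_s(x,x_0) = \rho_s(x|x_0)/\rho^Y_s(x|x_0)$, and the proof reduces to (a) identifying the reference marginal $\rho^Y_s$, (b) verifying the drift decomposition, and (c) checking the endpoint constraint.

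First, I would identify the marginals of~\eqref{eq:ref:foll-gen}. Since the SDE is linear in $Y_s$ with deterministic $Y_0 = x_0$, $Y_s|x_0$ is Gaussian; its mean $M_s$ and scalar covariance $V_s$ satisfy $\dot M_s = a_s(M_s - \alpha_s x_0) + \dot\alpha_s x_0$ and $\dot V_s = 2 a_s V_s + (g^\Fo_s)^2$ with $M_0=x_0$, $V_0 = 0$. A direct substitution using the formula for $a_s$ in~\eqref{ea:as} and $g^\Fo_s$ in~\eqref{eq:Ds:app} yields $M_s = \alpha_s x_0$ and $V_s = \beta_s^2 + s\sigma_s^2$; in particular $Y_1|x_0 \sim \mathcal{N}(0,\Id)$ thanks to the boundary conditions $\alpha_1=0$, $\beta_1=1$, $\sigma_1=0$. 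This gives the explicit form $\nabla \log \rho^Y_s(x|x_0) = -(x-\alpha_s x_0)/V_s$ and explains why the reference~\eqref{eq:ref:foll-gen} is the right one: it is precisely the linear SDE whose marginals coincide with those of the interpolant when $x_1$ is replaced by a standard Gaussian.

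Second, I would verify the identity $b^{g^\Fo}_s(x,x_0) - b^Y_s(x,x_0) = (g^\Fo_s)^2 \nabla \log h_s(x,x_0)$ by substituting~\eqref{eq:drift:bc-appendix:a} for $b^{g^\Fo}_s$, the score formula~\eqref{eq:score:app} for $\nabla\log\rho_s$, and the Gaussian expression for $\nabla\log\rho^Y_s$. The algebra collapses after two short observations: (i) $\tfrac12((g^\Fo_s)^2 + \sigma_s^2)A_s\beta_s = 1$, which follows directly from the definitions~\eqref{eq:df:A:c-appendix} and~\eqref{eq:Ds:app}; and (ii) $a_s = \dot\beta_s/\beta_s - (g^\Fo_s)^2/V_s$, which is merely a rearrangement of~\eqref{ea:as}. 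Combined with the endpoint constraint $X^\Fo_1 \sim \rho_c(\cdot|x_0)$ guaranteed by Theorem~\ref{thm:tunning_g}, this identifies $X^\Fo$ as the Doob $h$-transform of $Y$ by $h_s = \rho_s/\rho^Y_s$, which is the F\"ollmer process.

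The main obstacle will be handling the degenerate boundary behavior. The hypothesis that $\beta_s/[\sqrt{s}\sigma_s]$ is non-decreasing is used precisely to ensure $(g^\Fo_s)^2 \geq 0$ so that $g^\Fo_s$ is real and the reference~\eqref{eq:ref:foll-gen} is well-posed, but care is still needed at $s=0$, where both $h_s$ and the drift of $X^\Fo$ can be singular (as in the example~\eqref{eq:sde:specific:Fol} with $\dot\beta_0=0$), and at $s=1$, where $\sigma_1=0$ and the terminal condition $h_1(\cdot,x_0) = \rho_c(\cdot|x_0)/\rho^Y_1(\cdot|x_0)$ must be interpreted in a limiting sense. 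These boundary issues can be dispatched by the same limiting arguments already used in Theorem~\ref{thm:tunning_g} and in Appendix~\ref{app:KL}.
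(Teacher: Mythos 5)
Your proposal is correct, but it takes a genuinely different route from the paper. The paper follows F\"ollmer's original time-reversal construction: it rewrites the SDE for $X^{g}$ in terms of the score, reverses time, and observes that precisely for $g=g^\Fo$ (with the sign fixed by the monotonicity hypothesis) the reverse-time drift becomes score-free; swapping the initial condition of the reversed process from $\rho_c(\cdot|x_0)$ to ${\sf N}(0,\Id)$ then \emph{produces} the reference process~\eqref{eq:ref:foll-gen}, and optimality follows from invariance of the KL divergence under time reversal together with the disintegration~\eqref{eq:kl:foll:gen:1}. You instead work forward in time: you solve the moment ODEs for the linear reference SDE to get $Y_s\sim{\sf N}(\alpha_s x_0,\beta_s^2+s\sigma_s^2)$ (which agrees with the paper's~\eqref{eq:Y:ref}), verify the drift decomposition $b^{g^\Fo}_s=b^Y_s+(g^\Fo_s)^2\nabla\log(\rho_s/\rho^Y_s)$ via your identities (i) and (ii) (both of which check out), and invoke the Doob $h$-transform characterization of the entropic projection. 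Your route is more computational but avoids time reversal entirely; the paper's route explains \emph{why} the reference~\eqref{eq:ref:foll-gen} is the natural one. Two small points to tighten. First, the drift form $b^Y_s+(g^\Fo_s)^2\nabla\log h_s$ alone does not make a process an $h$-transform unless $h$ is space-time harmonic for the reference generator; you should add the one-line Fokker--Planck computation showing that a process with this drift \emph{and} marginals $h_s\rho^Y_s$ (which you have, since $\text{Law}(X^\Fo_s)=\text{Law}(I_s|x_0)$ by Theorem~\ref{thm:tunning_g}) forces $\partial_s h_s+b^Y_s\cdot\nabla h_s+\tfrac12(g^\Fo_s)^2\Delta h_s=0$; then the disintegration argument closes the proof. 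Second, the monotonicity hypothesis is not needed to make $g^\Fo_s$ real (the absolute value in~\eqref{eq:Ds:app} already guarantees that); it is needed so that $(g^\Fo_s)^2$ equals $2s\sigma_s(\beta_s^{-1}\dot\beta_s\sigma_s-\dot\sigma_s)-\sigma_s^2$ \emph{without} the absolute value, via~\eqref{eq:Ds:2:app}, which is exactly what your identities (i) and (ii) require.
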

Theorem~\ref{th:interp:f:si} is implied by this result.

\begin{proof}
We closely follow the steps of F\"ollmer's original construction involving time-reversal~\cite{follmer1986time}.
To begin, notice that we can solve~\eqref{eq:score:app} to express $b_s$ in terms of the score $\nabla \log \rho_s$.
We can then use the resulting expression in~\eqref{eq:drift:bc-appendix:a} to write the SDE~\eqref{eq:sde-appenfix} as 
\begin{equation}
    \label{eq:sde-appenfix:2}
    dX^g_s= \left(\beta_s^{-1} A^{-1}_s +\tfrac12 (g_s^2-\sigma_s^2)\right)  \nabla \log \rho_s(X^g_s|x_0) ds + \beta_s^{-1} c_s(X^g_s,x_0) ds  + g_s  dW_s, \qquad X^g_{s=0} = x_0,
\end{equation}
where $A_s$ and $c_s(x,x_0)$ are defined in~\eqref{eq:df:A:c}.
Using the score, we can time-reverse~\eqref{eq:sde-appenfix:2} and derive the following SDE for $X_{s}^\rev \overset{d}{=} X^g_{1-s}$
\begin{equation}
    \label{eq:sde-appenfix:3}
    dX^\rev_s= -\left(\beta_{1-s}^{-1} A^{-1}_{1-s} -\tfrac12 (g_{1-s}^2+\sigma_{1-s}^2)\right)  \nabla \log \rho_{1-s}(X^\rev_s|x_0) ds - \beta_{1-s}^{-1} c_{1-s}(X^\rev_s,x_0) ds  + g_{1-s}  dW_s.
\end{equation}
If we take $X^\rev_{s=0} \sim \rho_c(\cdot|x_0)$ in~\eqref{eq:sde-appenfix:3}, then by construction we have that $X^\rev_{s=1} = x_0$. 
Remarkably, if we use $g_s=g^\Fo_s$ with $g_s^\Fo$ given in~\eqref{eq:Ds:app}, and  if we choose $1-\frac12 \beta_s  A_s\sigma^2_s\ge 0$ (i.e. choose $\beta_s /[\sqrt{s}\sigma_s]$ nondecreasing), the SDE~\eqref{eq:sde-appenfix:3} reduces to
\begin{equation}
    \label{eq:sde-appenfix-}
    dX^\rev_s= -\beta_{1-s}^{-1} c_{1-s}(X^\rev_s,x_0) ds  + g^\Fo_{1-s}  dW_s, \qquad X^\rev_{s=0}  \sim \rho_c(\cdot|x_0).
\end{equation}
This reverse-time SDE has the remarkable property that its drift is independent of the score $\nabla \log \rho_s$, meaning that the information about the target PDF $\rho_c(\cdot|x_0)$ only enters through its initial condition~$X^\rev_{s=0} \sim \rho_c(\cdot|x_0)$. 
This  means that we can change the initial condition in~\eqref{eq:sde-appenfix-} to construct (after reversing time back) a reference process.
We can use any density for this purpose, but for simplicity it is convenient to choose a Gaussian, and therefore to consider
\begin{equation}
    \label{eq:sde-appenfix:4}
    dY^\rev_s= -\beta_{1-s}^{-1} c_{1-s}(Y^\rev_s,x_0) ds  + g^\Fo_{1-s}  dW_s, \qquad Y^\rev_{s=0}  \sim {\sf N}(0,\Id).
\end{equation} 
Using the explicit form of $c_s$ in~\eqref{eq:df:A:c-appendix}, we can write this SDE explicitly as
\begin{equation}
    \label{eq:sde-appenfix:5}
    dY^\rev_s= -\beta_{1-s}^{-1}\dot\beta_{1-s} (Y^\rev_s -\alpha_{1-s} x_0) ds - \dot\alpha_{1-s} x_0 + g^\Fo_{1-s}  dW_s, \qquad Y^\rev_{s=0}  \sim {\sf N}(0,\Id).
\end{equation} 
Using that $d\alpha_{1-s} = - \dot\alpha_{1-s} ds$ and $d\beta_{1-s} = - \dot\beta_{1-s} ds$, we may rewrite this as
\begin{equation}
    \label{eq:sde-appenfix:6}
    d \big( \beta^{-1}_{1-s}( Y^\rev_s-\alpha_{1-s} x_0) \big)= \beta_{1-s}^{-1} g^\Fo_{1-s}  dW_s, \qquad Y^\rev_{s=0}  \sim {\sf N}(0,\Id).
\end{equation} 
Hence,
\begin{equation}
    \label{eq:sde-appenfix:7}
    Y^\rev_s = \alpha_{1-s} x_0 + \beta_{1-s} Y^\rev_{s=0} + \beta_{1-s} \int_0^s \beta^{-1}_{1-u}g^\Fo_{1-u}  dW_u.
\end{equation} 
Using the explicit form of $g^{\rm F}_s$ given in~\eqref{eq:Ds:app} together with \eqref{eq:Ds:2:app}, we deduce that
\begin{equation}
    \label{eq:sde-appenfix:8}
    \begin{aligned}
        \E \left[\left|\beta_{1-s} \int_0^s \beta^{-1}_{1-u}g^\Fo_{1-u}  dW_u\right|^2\right] & = \beta^2_{1-s} \int_0^s \beta^{-2}_{1-u}|g^\Fo_{1-u}|^2 du,\\
    &=\beta^2_{1-s} \int_{1-s}^1 \beta^{-2}_{u}|g^\Fo_{u}|^2
     du,\\
    & = \beta^2_{1-s} \int_{1-s}^1 \frac{2u\sigma^2_{u}}{\beta^2_{u}} \frac{d}{du} 
    \log \frac{\beta_{u}}{\sqrt{u} \sigma_{u}} du,\\
    & = -\beta^2_{1-s} \int_{1-s}^1 d\left(\frac{u\sigma^2_{u}}{\beta^2_{u}}\right),\\
    &= (1-s)\sigma^2_{1-s},
    \end{aligned}
\end{equation}
where we used $\sigma_1 = 0$.
This means that 
\begin{equation}
\label{eq:Y:ref}
Y^\rev_{1-s}\stackrel{d}{=} \alpha_s x_0 + \beta_s z + \sigma_s W_s \sim {\sf N}(\alpha_s x_0, \beta_s^2+ s\sigma_s^2),
\end{equation}
where $z\sim {\sf N}(0,\Id)$ with $z\perp W$.
Note that, unsurprisingly, the process on the right-hand side  is simply the stochastic interpolant~\eqref{eq:stoch:int:def} conditioned on $x_0$ fixed, with $x_1$ replaced by a Gaussian $z$.  Denoting by $\rho_s^{Y}(y|x_0)$ the PDF of $Y^\rev_{1-s}$, \eqref{eq:Y:ref} implies that
\begin{equation}
    \label{eq:score:ref}
    \nabla \log \rho_s^{Y}(y|x_0) = -\frac{y-\alpha_s x_0}{\beta_s^2+ s\sigma_s^2}.
\end{equation}
We can use this result  to time reverse~\eqref{eq:sde-appenfix:4} and obtain the following SDE for $Y_s \stackrel{d}{=} Y^\rev_{1-s}$
\begin{equation}
    \label{eq:Y:ref:sde}
    dY_s = \beta_{s}^{-1}\dot\beta_s (Y_s -\alpha_{s} x_0) ds + \dot\alpha_{s} x_0 + |g_s^\Fo|^2 \nabla \log \rho_s^{Y}(Y_s|x_0) ds+ g^\Fo_s  dW_s  \qquad Y_{s=0}  =x_0.
\end{equation}
If we insert the explicit form of $|g^{\rm F}_s|^2 = 2s\sigma_s (\beta_s^{-1}\dot\beta_s \sigma_s - \dot \sigma_s) - \sigma_s^2$ given in~\eqref{eq:Ds:app} into~\eqref{eq:Y:ref:sde} we get
\begin{equation}
    dY_s = a_s (Y_s - \alpha_s x_0)ds + \dot\alpha_s x_0 ds + g^\Fo_s dW_s, \qquad Y_{s=0} = x_0,
\end{equation}
with
\begin{equation}
    a_s = \beta^{-1}_s\dot\beta_s - \frac{2s\sigma_s (\beta_s^{-1}\dot\beta_s \sigma_s - \dot \sigma_s) - \sigma_s^2}{\beta_s^2 + s \sigma^2_s},
\end{equation}
i.e. we recover the SDE~\eqref{eq:ref:foll-gen}. Since $a_s = \frac{d}{ds} \log \frac{\beta_s^2 + s\sigma_s^2}{\beta_s}$, we can solve the above the SDE to obtain the reference process as
\[Y_s = \alpha_s x_0 + \frac{\beta^2_s+s\sigma^2_s}{\beta_s}\int_0^s \frac{\beta_ug^{\rm F}_u}{\beta^2_u+u\sigma^2_u}dW_u.\]
We can then verify that 
\begin{equation}
    \begin{aligned}
        \E\left[\left|\frac{\beta^2_s+s\sigma^2_s}{\beta_s}\int_0^s \frac{\beta_ug^{\rm F}_u}{\beta^2_u+u\sigma^2_u}dW_u\right|^2\right] &= (\frac{\beta^2_s+s\sigma^2_s}{\beta_s})^2 \int_0^s \frac{\beta_u^2\sigma_u^2 u}{(\beta_u^2 + \sigma_u^2 u)^2} \frac{d}{du}\log\frac{\beta_u^2}{\sigma_u^2 u} du \\
        & = (\frac{\beta^2_s+s\sigma^2_s}{\beta_s})^2 \int_0^s \frac{1}{((\beta_u^2/\sigma_u^2 u) + 1)^2} d(\beta_u^2/\sigma_u^2 u) \\
        & = (\frac{\beta^2_s+s\sigma^2_s}{\beta_s})^2 \frac{\beta_s^2}{\beta^2_s+s\sigma^2_s} = \beta^2_s+s\sigma^2_s,
    \end{aligned}
\end{equation}
which implies that $Y_s \stackrel{d}{=} \alpha_s x_0 + \beta_s z + \sigma_s W_s \sim {\sf N}(\alpha_s x_0, \beta_s^2+ s\sigma_s^2)$; this matches our previous calculation \eqref{eq:Y:ref}.

It remains to show that the process $X^\Fo \equiv X^{g^{\rm F}}$ defined by solution to the SDE~\eqref{eq:sde-appenfix} with $g_s = g^\Fo_s$ given in~\eqref{eq:Ds:app} is the F\"ollmer process associated with the process $Y$ defined by the solution to~\eqref{eq:Y:ref:sde}. %
To this end, recall that the KL divergence between two path measures is invariant under time-reversal, so that
\begin{equation}
\label{eq:KL:trev}
D_{\text{KL}} (\check X|| Y) = D_{\text {KL}} (\check X^\rev|| Y^\rev)
\end{equation}
where $\check X$ is the process defined by the solution to the SDE~\eqref{eq:ref:foll-gen:2}, and where $\check X^\rev$ is its time-reversal.
We can now use following decomposition, known as ``disintegration''~\cite{leonard2014survey}, of the Kullback-Leibler divergence 
\begin{equation}
\label{eq:kl:foll:gen:1}
         D_{\text {KL}} (\check X^{\rev}|| Y^{\rev}) =  \int_{\R^d} D_{\text {KL}} (\check X^{\rev,x}|| Y^{\rev,x}) \rho_c(x|x_0) dx + D_{\text{KL}}[\rho_c(\cdot |x_0) \Vert {\sf N}(x_0,\Id)]
\end{equation}
where $\check X^{\rev,x}$ and $ Y^{\rev,x}$ denote, respectively, the processes $\check X^\rev$ and $Y^\rev$ conditioned to start from $x$ (i.e. on $\check X^{\rev,x}_{s=0}= Y^{\rev,x}_{s=0}=x$). 
In addition, we used the fact that $Y^\rev_{s=0}\sim {\sf N}(0,\Id)$ whereas $\check X^\rev_{s=0}\sim \rho_c(\cdot|x_0)$ by the constraint imposed in the minimization problem~\eqref{eq:foll:min:prob}. The second term at the right hand side of~\eqref{eq:kl:foll:gen:1} is fixed due to this constraint; the first term is always non-negative, but we can make it zero if we take $\check X^\rev=X^\rev$ with $X^\rev$ defined as the solution to~\eqref{eq:sde-appenfix:3} since this process is the same as the one defined by the solution to~\eqref{eq:sde-appenfix:4} if we condition both on $X^\rev_{s=0} = Y^\rev_{s=0} = x$. Therefore $\check X^\rev=X^\rev$ minimizes \eqref{eq:kl:foll:gen:1}, which means that its time-reversal $X^{g^{\rm F}}$ minimizes~\eqref{eq:foll:min:prob}.
\end{proof}

\subsection{Analytic formula of $b_s$ for Gaussian mixture distributions}
When $\rho_c(x|x_0)$ is a Gaussian mixture model (GMM), the drift $b_s$ is available analytically:
\begin{proposition}
   Let the target density be a GMM with $J\in \N$ modes
\begin{equation}
\label{eq:gmm}
\rho_c(x|x_0) = \sum_{j=1}^J p_j {\sf N}(x;m_j,C_j)
\end{equation}
where $p_j\ge0$ with $\sum_{j=1}^J p_j =1$, $m_j \in \R^d$, and $C_j = C_j^T \in \R^d \times \R^d$ positive-definite (with both $m_j$ and $C_j$ possibly dependent on $x_0$). Then 
\begin{equation}
\label{eq:b}
    \begin{aligned}
    b_s(x,x_0) &= \dot\alpha_s x_0 + \dot\beta_s \frac{\sum_{j=1}^J p_j m_j {\sf N}(x;\overline{m}_j(s),\overline{C}_j(s))}{\sum_{j=1}^J p_j {\sf N}(x;\overline{m}_j(s),\overline{C}_j(s))} \\
 &+ \frac{\sum_{j=1}^J p_j (\beta_s\dot\beta_s C_j + s\sigma_s\dot\sigma_s \Id) \overline{C}_j^{-1}(s) (x-\overline{m}_j(s)) {\sf N}(x; \overline{m}_j(s),\overline{C}_j(s))}
 {\sum_{j=1}^J p_j {\sf N}(x;\overline{m}_j(s),\overline{C}_j(s))}
 \end{aligned}
\end{equation}
where
\begin{equation}
    \label{eq:m:C}
    \begin{aligned}
        \overline{m}_j(s) = \alpha_s x_0 + \beta_s m_j, \qquad 
        \overline{C}_j(s) = \beta^2_s C_j + s \sigma^2_s \Id .
    \end{aligned}
\end{equation}
\end{proposition}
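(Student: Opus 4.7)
The plan is to leverage the characterization $b_s(x,x_0)=\dot\alpha_s x_0+\dot\beta_s \mathbb{E}^{x_0}[x_1|x_s=x]+\dot\sigma_s\sqrt{s}\,\mathbb{E}^{x_0}[z|x_s=x]$ from Theorem~\ref{thm:1:b} and reduce the two conditional expectations on the right to explicit sums over Gaussian mixture components. The key observation is that, when $x_1|x_0$ is a GMM and $z\sim{\sf N}(0,\Id)$ is independent of $x_1$, the joint law of $(x_1,z,x_s)$ given $x_0$ is itself a mixture indexed by $j=1,\dots,J$, and \emph{within each component} the triple is jointly Gaussian. This is where both the mixture structure of the final formula and the linear-in-$x$ correction terms will come from.

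First, I would compute the marginal law of $x_s=\alpha_s x_0+\beta_s x_1+\sqrt{s}\sigma_s z$ conditional on $x_0$. Since the mapping $(x_1,z)\mapsto x_s$ is affine and $z$ is Gaussian independent of $x_1$, mixing over the components of the GMM immediately gives $x_s|x_0\sim\sum_j p_j\,{\sf N}(\overline{m}_j(s),\overline{C}_j(s))$ with $\overline{m}_j(s)$ and $\overline{C}_j(s)$ as in~\eqref{eq:m:C}. This identifies the weights and the Gaussian factors that will appear in the numerator and denominator of~\eqref{eq:b}.

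Next, I would apply Bayes' rule for mixtures to write, for any integrable $f$,
\begin{equation*}
\mathbb{E}^{x_0}[f(x_1,z)\mid x_s=x]=\frac{\sum_{j=1}^J p_j\,{\sf N}(x;\overline{m}_j(s),\overline{C}_j(s))\,\mathbb{E}_j^{x_0}[f(x_1,z)\mid x_s=x]}{\sum_{j=1}^J p_j\,{\sf N}(x;\overline{m}_j(s),\overline{C}_j(s))},
\end{equation*}
where $\mathbb{E}_j^{x_0}$ denotes expectation within component $j$. Within each component, $(x_1,x_s)$ and $(z,x_s)$ are jointly Gaussian with cross-covariances $\beta_s C_j$ and $\sqrt{s}\sigma_s\Id$ respectively, and $\mathrm{Var}(x_s)=\overline{C}_j(s)$. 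The standard Gaussian conditional-mean formula then yields $\mathbb{E}_j^{x_0}[x_1|x_s=x]=m_j+\beta_s C_j\overline{C}_j^{-1}(s)(x-\overline{m}_j(s))$ and $\sqrt{s}\,\mathbb{E}_j^{x_0}[z|x_s=x]=s\sigma_s\,\overline{C}_j^{-1}(s)(x-\overline{m}_j(s))$.

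Finally, I would assemble these pieces: multiplying the first by $\dot\beta_s$ and the second by $\dot\sigma_s$, adding, and regrouping the coefficient of $\overline{C}_j^{-1}(s)(x-\overline{m}_j(s))$ produces the matrix $\beta_s\dot\beta_s C_j+s\sigma_s\dot\sigma_s\Id$ that appears in~\eqref{eq:b}. Adding the $\dot\alpha_s x_0$ piece gives exactly the claimed expression. The only step that needs any care is keeping the mixture weights and the component-wise conditional means consistent, i.e.\ ensuring the ${\sf N}(x;\overline{m}_j(s),\overline{C}_j(s))$ factor in Bayes' rule is shared across the three terms so that the denominator appears once in the final display; no genuine obstacle arises, and the argument is a direct computation once the within-component Gaussian structure is identified.
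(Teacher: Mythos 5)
Your proposal is correct, and it reaches the formula by a genuinely different route than the paper. The paper's proof never conditions component-wise: it first uses the interpolant constraint $x=\alpha_s x_0+\beta_s x_1+\sigma_s W_s$ to eliminate $\E^{x_0}[x_1|I_s=x]$ in favor of $\E^{x_0}[W_s|I_s=x]$, invokes the Stein-type identity $\nabla\log\rho_s(x|x_0)=-[s\sigma_s]^{-1}\E^{x_0}[W_s|I_s=x]$ to obtain the intermediate representation $b_s=\dot\alpha_s x_0+\dot\beta_s\beta_s^{-1}(x-\alpha_s x_0)+s\sigma_s(\sigma_s\dot\beta_s\beta_s^{-1}-\dot\sigma_s)\nabla\log\rho_s(x|x_0)$, and then substitutes the explicit score of the mixture $\rho_s(x|x_0)=\sum_j p_j{\sf N}(x;\overline{m}_j(s),\overline{C}_j(s))$ before simplifying. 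You instead apply Bayes' rule over the mixture components and use the joint-Gaussian conditional-mean formula within each component to compute $\E_j^{x_0}[x_1|x_s=x]$ and $\E_j^{x_0}[z|x_s=x]$ directly; your stated cross-covariances and conditional means are all correct, and regrouping does yield $\beta_s\dot\beta_s C_j+s\sigma_s\dot\sigma_s\Id$. The paper's route has the advantage of reusing the score identity~\eqref{eq:score} established elsewhere and of exhibiting $b_s$ as an affine function of the score (which is what makes the tunable-diffusion construction work); its intermediate expressions involve $\beta_s^{-1}$ and so require a word at $s=0$, although the singularity cancels. Your route is more elementary and self-contained, avoids the $\beta_s^{-1}$ factor entirely, and generalizes immediately to computing any within-component conditional statistic, at the cost of not connecting the result to the score representation used in the rest of the paper.
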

\begin{proof}
By definition
    \begin{equation}
    \label{eq:b:gmm:a}
\begin{aligned}
    b_s(x,x_0) & = \dot\alpha_s x_0 + \E[\dot\beta_s x_1 + \dot\sigma_s W_s|I_s=x,x_0] \\
    &= \dot\alpha_s x_0 + \E[\dot\beta_s\beta_s^{-1}( x-\alpha_s x_0-\sigma_s W_s) + \dot\sigma_s W_s|I_s=x,x_0]\\
    & = \dot\alpha_sx_0 + \dot\beta_s\beta^{-1}_s(x-\alpha_sx_0) + s\sigma_s(\sigma_s\dot\beta_s\beta_s^{-1}-\dot\sigma_s)\nabla \log \rho_s(x|x_0).
\end{aligned}
\end{equation}
where we used the fact $\nabla \log \rho_s(x|x_0) = - [s\sigma_s]^{-1}\E^{x_0}[W_s|I_s=x]$. For the GMM, 
\begin{equation}
    \rho_s(x|x_0) = \sum_{j=1}^J p_j {\sf N}(x; \overline{m}_j(s), \overline{C}_j(s)),
\end{equation}
so that 
\begin{equation}
    \nabla \log \rho_s(x|x_0) = -\frac{\sum_{j=1}^J p_j \overline{C}_j^{-1}(s)(x-\overline{m}_j(s)) {\sf N}(x;\overline{m}_j(s),\overline{C}_j(s))}{\sum_{j=1}^J p_j {\sf N}(x;\overline{m}_j(s),\overline{C}_j(s))}.
\end{equation}
Inserting this expression in \eqref{eq:b:gmm:a} we obtain
\begin{equation}
    \begin{aligned}
        &\frac{\dot\beta_s}{\beta_s}(x-\alpha_s x_0) + s\sigma_s^2 \frac{\dot\beta_s}{\beta_s}\nabla \log \rho_s(x|x_0)\\
        =& \frac{\dot\beta_s}{\beta_s}\left(x-\alpha_s x_0 - \frac{\sum_{j=1}^J p_j (\Id - \beta^2_sC_j\overline{C}_j^{-1}(s))(x-\overline{m}_j(s)) {\sf N}(x;\overline{m}_j(s),\overline{C}_j(s))}{\sum_{j=1}^J p_j {\sf N}(x;\overline{m}_j(s),\overline{C}_j(s))}\right)\\
        =&  \frac{\dot\beta_s}{\beta_s}\left(\frac{\sum_{j=1}^J p_j \big(\beta_s m_j +  \beta^2_s C_j\overline{C}_j^{-1}(x-\overline{m}_j)\big) {\sf N}(x;\overline{m}_j(s),\overline{C}_j(s))}{\sum_{j=1}^J p_j {\sf N}(x;\overline{m}_j(s),\overline{C}_j(s))}\right)\\
        =&  \dot\beta_s \frac{\sum_{j=1}^J p_j m_j {\sf N}(x;\overline{m}_j(s),\overline{C}_j(s))}{\sum_{j=1}^J p_j {\sf N}(x;\overline{m}_j(s),\overline{C}_j(s))} + \frac{\sum_{j=1}^J p_j \beta_s \dot\beta_s C_j\overline{C}_j^{-1}(x-\bar{m}_j){\sf N}(x;\overline{m}_j(s),\overline{C}_j(s))}{\sum_{j=1}^J p_j {\sf N}(x;\overline{m}_j(s),\overline{C}_j(s))},
    \end{aligned}
\end{equation}
where in the first and second identities, we used the fact that $s\sigma^2_s\Id = \Id - \beta^2_sC_j\overline{C}_j^{-1}(s)$ and $x - \alpha_s x_0 = x - \overline{m}_j(s) + \beta_s m_j$.

Now, using $b_s(x,x_0) = \dot\alpha_s x_0 + \dot\beta_s\beta_s^{-1}(x-\alpha_s x_0) + s\sigma_s^2(\dot\beta_s\beta_s^{-1}-\dot\sigma_s)\nabla \log \rho_s(x|x_0)$, we get the final formula.
\end{proof}

\section{Details of Numerical Experiments}
\label{sec:detail:num}
\subsection{Multi-modal jump-diffusion process}
\label{sec:detail:jump:diff}

Inspired by \cite{chen2020neural}, we create the synthetic example by starting with a Gaussian component ${\sf N}(m_0,C_0)$ with $m_0 = [5,0]$ and
\[C_0 = \begin{bmatrix}
1.5 & 0 \\
0 & 0.1
\end{bmatrix}.\]
We rotate this distribution counterclockwise by $2\pi/5$ four times to obtain the remaining four Gaussian modes. We assign each of the five modes equal weights to obtain our 2D Gaussian mixture model with density $\rho_{\text{GMM}}(x)$

The 2D particle jump-diffusion dynamics is constructed as follows. Between the jumps, the particle moves according to the Langevin dynamics 
$$dx_t = \nabla \log \rho_{\text{GMM}}(x_t) dt + \sqrt{2}dW_t.$$
At jump times specified by a Poisson process with rate $\lambda = 2$, the particle is rotated counterclockwise by an angle $2\pi/5$. 

We simulate this dynamics using the following scheme with $\delta t=0.01$:
\begin{align*}
    x_{(n+1)\delta t} =
    \begin{cases}
      x_{n\delta t} + \delta t \nabla \log p(x_{n\delta t}) + \sqrt{2\delta t}\xi & \text{with probability $1-\lambda \delta t$}\\
      R_{2\pi/5}(x_{n\delta t} + \delta t \nabla \log p(x_{n\delta t}) + \sqrt{2\delta t}\xi) & \text{with probability $\lambda \delta t$}
    \end{cases}    
\end{align*}
where $\xi \sim {\sf N}(0,I_{2\times 2})$ and $R_{2\pi/5}$ is the counterclockwise rotation operator in 2D with angle $2\pi/5$. We integrate this dynamics long enough to reach equilibrium and get enough data.

We keep the data at a regular time interval of $\Delta t = 0.5$ and use paired $(x_t, x_{t+0.5})$ as training data for learning the conditional distribution at lag $\tau = 0.5$. In total we store $10^5$ training  data pairs. We use a fully connected neural network with $5$ hidden layes with hidden dimension $500$ to approximate the velocity field $b_s(x,x_0)$ in the SDEs. The input to the net is of dimension $5$ and the output is of dimension $2$. We train the network using a batch size of $10^4$, default AdamW optimizer with base learning rate $l = 10^{-3}$ and cosine scheduler that decreases in each epoch the learning rate eventually to $0$ after $300$ epochs. We test the SDEs for new simulated trajectory data.
\subsection{2D Stochastic Navier-Stokes Example}
\label{sec:detail:nse}

We set $\nu = 10^{-3}, \alpha = 0.1, \varepsilon = 1$.
We consider the following random forcing
\begin{equation}
\label{eq:noise_construction}
\begin{aligned}
    \eta(t,x,y) = &\big( W_1 (t)  \sin (6x) +  W_2 (t) \cos (7x) + W_3 (t) \sin (5(x+y)) + W_4 (t) \cos (8(x+y)) \big) \\
    & + \big( W_5 (t)  \cos (6x) +  W_6 (t) \sin (7x) + W_7 (t) \cos (5(x+y)) + W_8 (t) \sin (8(x+y)) \big),
\end{aligned}
\end{equation}
where $W_i(t)$, $1\leq i \leq 8$ are independent Wiener processes. For such random forcing, the NSE~\eqref{eq:2D_vorticity_NS} has a unique invariant measure, as proved in \citet{hairer2006ergodicity}. Note that \eqref{eq:noise_construction} is a Gaussian random field with covariance function $C(t,t',x,y, x',y') = \min(t,t')[\cos(6(x-x')) + \cos(7(x-x')) + \cos(5(x-x'+y-y') + \cos(8(x-x'+y-y'))]$ which is translation invariant in space. The damping term $- \alpha \omega$ is used to accelerate the mixing of the dynamics as it damps the vorticity at large length-scales/small Fourier modes to avoid all energy accumulating in large vortices. 

We use a pseudo-spectral solver with Euler-Maruyama time-stepping scheme to solve the stochastic PDE in time and we use the jax-cfd package \cite{Dresdner2022-Spectral-ML} for the mesh generation and domain discretization. We perform simulations with a grid sizes $256\times 256$ which is fine enough for resolving the numerical solutions of the 2D stochastic Navier-Stokes equations with our specifications. 

We notice that there are existing works on forecasting with generative models that experiment with the 2D NSE \cite{Lienen2023FromZT} \cite{cachay2023dyffusion}; however, they are interested in deterministic forecasting. In contrast, we are interested in probabilistic forecasting which is why we add a stochastic forcing to the equation. The precise nature of this forcing is probably not important in practice.
In all of our experiments, we normalize the training and testing data so that on average the $L^2$ norm of each snapshot field in the training data is $1$; we found such normalization is useful to ensure a stable gradient that does not blow up during training. We use a standard UNet architecture popularized by \citet{ho2020denoising} to learn the drift velocity $b_\theta$; the conditioning in the vector field is achieved by channel concatenation in the input, as this is the usual way to incorporate high dimensional conditioning in UNets for Diffusion \cite{ho2022cascaded}. The total parameters for the UNet is around $2$ millions. We employ the default AdamW optimizer \cite{loshchilov2017decoupled}. We use a cosine annealing schedule to decrease the learning rate in each epoch. The base learning rate is $l = 10^{-3}$. We split the data into $90\%$ training data and $10\%$ test data. We use a batch size of $100$. In total we train $50$ epochs. The model is trained on a single Nvidia A100 GPU and it takes less than $1$ day.
Once trained, we test our SDEs model (e.g. see Fig.~\ref{fig:cond-stats-NSE}) by examining the conditional distributions. We fix $50$ initial conditions drawn from the invariant measure and run simulations or our SDE models to the desired lags, and collect an ensemble of solutions (we choose the number of ensembles to be $300$). We compute mean and std of these ensembles and compare them with those obtained by using the simulation data. We also consider the enstrophy spectrum of the conditional distribution, which measures the size of the vorticity at each scale and reveals important physical information. More precisely, we compute the enstrophy of the vorticity field $\omega$ at wavenumber $k \in \mathbb{R}_+$ is defined as 
\[{\rm Enstrophy}(k) = \sum_{k\leq |m|\leq k+1} |\hat{\omega}(m)|^2,\]
where $\hat{\omega}$ with $m \in \mathbb{Z}^2$ denotes the Fourier coefficients of $\omega$. The enstrophy spectrum curve is the graph of the function $k \to {\rm Enstrophy}(k)$. We can average the enstrophy over different simulation trajectories of $\omega$ to get the spectrum of the distribution of the field.
We often smooth the curve to get rid of the sawtooth. 

We provide some additional numerical results on the 2D stochastic Navier-Stokes example below.
\paragraph{Superresolution} We consider superresolution, i.e., predicting the field with a resolution of $128\times 128$, from the downsized version with a resolution of $32\times 32$. In Fig.~\ref{fig:NSE_auto_data}, we show a $32\times 32$ resolution field, as well as the true $128\times 128$ resolution field and the mean of samples drawn from our SDEs. Our method achieves an outstanding recovery. Additionally, we calculate the standard deviation of the samples. The spatial distribution of this standard deviation serves as a tool for uncertainty quantification; notably, there exists a pronounced correlation between the pattern of standard deviation and the vorticity field.

\begin{figure}[ht]
    \begin{minipage}{.24\textwidth}
  \begin{subfigure}
    \centering
    \begin{overpic}[width=\linewidth]{media/image3/fig_supres_32.pdf}\put(33,-4){ $32\times 32$}\end{overpic}
  \end{subfigure}\\
  \begin{subfigure}
    \centering
    \begin{overpic}[width=\linewidth]{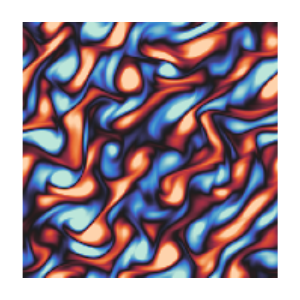}\put(30,-4){Sample mean}\end{overpic}
  \end{subfigure}
\end{minipage}
\begin{minipage}{.24\textwidth}
  \begin{subfigure}
    \centering
    \begin{overpic}[width=\linewidth]{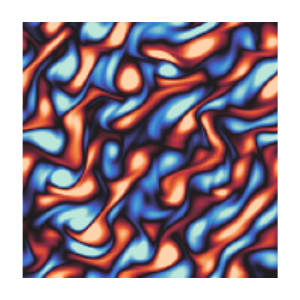}\put(32,-4){$128\times 128$}\end{overpic}
  \end{subfigure}\\
  \begin{subfigure}
    \centering
    \begin{overpic}[width=\linewidth]{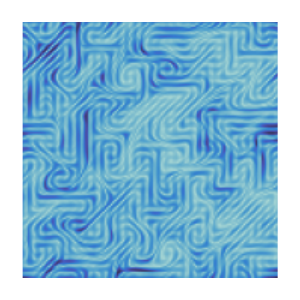}\put(30,-4){Sample std}\end{overpic}
  \end{subfigure}
\end{minipage}%
\begin{minipage}{.5\textwidth}
  \begin{subfigure}
    \centering
    \begin{overpic}[width=\linewidth]{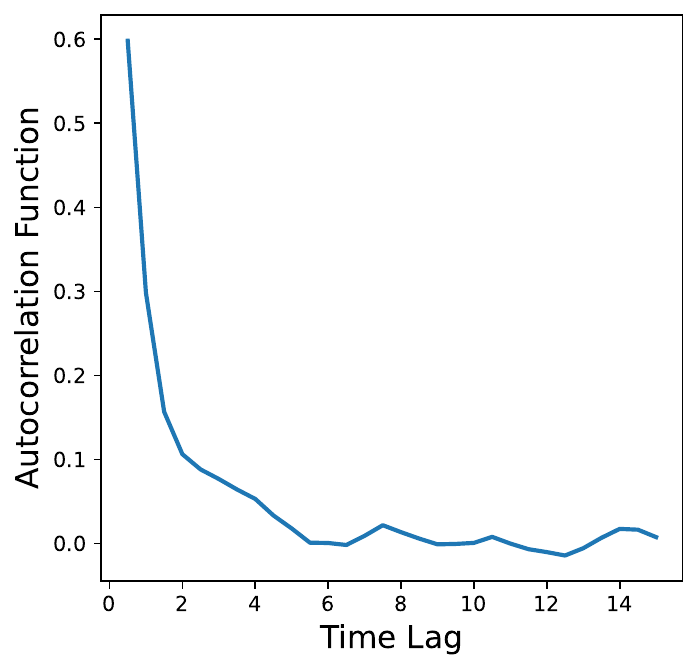}\end{overpic}
  \end{subfigure}
  \end{minipage}
  \vspace{-1.2em}
  \label{fig:NSE_auto_data}
    \caption{{\bf Left panel:} Superresolution. Low-resolution and high-resolution field, mean and std of samples drawn from our learned SDE model. {\bf Right panel:} We calculate the auto-correlation of our data as a function of time. The auto-correlation function is computed via averaging over the auto-correlation on all the grid points. Note that this averaging can be done because our random forcing term is spatially homogeneous and each grid point is identical in a statistical sense. We notice that the auto-correlation drops to about $0.25$ at a time lag $\tau = 1$ and about $0.1$ at a time lag $\tau = 2$, which indicates the potential diversity of the conditional distribution and thus the difficulty of our forecasting task.}
    \end{figure}

\paragraph{Forecasting efficiency}
We conducted a comparative analysis of prediction speed between our models and the SPDE solver utilized for the 2D stochastic Navier-Stokes equation. For a scenario involving a short time lag of $\tau = 0.5$, wherein our model accurately captures the resulting conditional distribution, sampling from our SDE forecaster with $200$ Euler-Maruyama  steps takes $0.05$ seconds, while executing the SPDE solver on the same Nvidia RTX8000 GPU requires $8$ seconds. This observation highlights that our method accelerates forecasting by over 100 times without sacrificing  physical information inherent in the conditional distribution. Furthermore, our approach could stand to gain even more acceleration in scenarios with lower viscosity, which typically necessitates a substantially reduced time step size for the SPDE solver.

\paragraph{Comparison with a deterministic approach}
We also contrast our probabilistic forecasting method with a deterministic approach. In the deterministic approach, we utilize the same UNet architecture to parameterize a point estimator, denoted as $\widehat{\omega}_{t + \tau} = f(\omega_t;\theta)$, and train it using the Mean Squared Error (MSE) loss function $\mathbb{E}\| \widehat{\omega}_{t + \tau} - \omega_{t + \tau}\|^2$, where $\tau = 0.5$ represents the time lag we aim to predict. This Minimum Mean Square Error (MMSE) estimator is designed to capture the mean of the conditional probability distribution. While the deterministic point estimator performs adequately in recovering the mean, it falls short in long-term total enstrophy prediction due to its inability to account for fluctuations in variance; see Table~\ref{table-accuracy-vs-map}. More precisely, we report the short time (lag = $0.5$) error of the conditional mean prediction: for the probabilistic approach we compute the mean of generated samples and compare it with the true conditional mean, and for the deterministic approach we use the point estimator to compare. For the probabilistic approach, we can also compute the predicted conditional std using samples and compare with the true std. We note that such comparison is regarding the relative $L^2$ err of the field of conditional mean and std. More precisely,
\[\mathrm{err(mean)} = \frac{\|\hat{m} - m\|_{L^2}}{\|m\|_{L^2}},\quad \mathrm{err(std)} = \frac{\|\hat{\sigma} - \sigma\|_{L^2}}{\|\sigma\|_{L^2}}, \]
where $\hat{m}, \hat{\sigma}$ are estimated mean and std (in this numerical example, it is an image of size $128\times 128$) and $m,\sigma$ are the true mean and std computed from simulation data.

These short time metrics characterize the accuracy of the forecasting in one lag time. We can also iterate the learned SDE or deterministic maps for many steps and use the generated dynamics to make predictions on the invariant distribution of the true dynamics.  We do so by iterating $100$ steps and compute the averaged total enstrophy (which is the squared $L^2$ norm of the vorticity field). We compare the predicted total enstrophy of the invariant distribution with the truth, in relative error. Here, the averaged total enstrophy is a scalar quantity; in contrast to the conditional mean and std. We obtain much better estimate of averaged total enstrophy with our SDEs. The result demonstrates the necessity of using probabilistic forecasting for stochastic dynamical systems.

\begin{table}[ht]
\centering
\begin{tabular}{ccc}
\toprule
 Relative error & Our SDEs & Deterministic  \\ \midrule
 Short time $L^2$ error of mean & \textbf{1.1e-1} & 2.3e-1 \\
 Short time $L^2$ error of std & 6.8e-2 & N/A \\
 Long time error of averaged total enstrophy & \textbf{5.6e-3} &  3.0e-1 \\ \bottomrule
\end{tabular}
\caption{Short-time and long-time relative accuracy comparisons between the SDE approach and the deterministic approach. The short-time accuracy is measured by the relative $L^2$ norm between the true conditional mean and the conditional mean of the samples in our SDE approach or the single output given by the deterministic network, at a lag $\tau = 0.5$. For SDEs, we also have the prediction of the conditional std. For the long-time comparisons, we iterate the SDE and the deterministic map both for $100$ steps and use the trajectories to estimate the averaged total enstrophy of the invariant distribution and compare them against the truth computed using samples from the true simulation data.}
\label{table-accuracy-vs-map}
\end{table}

\paragraph{Comparisons between different SDE generative models} All the above experiments on NSE is done with the interpolant $\alpha_s =1-s, \beta_s =s^2, \sigma_s =1-s$, as it performs the best. Below in Figures \ref{fig:loss_follmer_interpolants} and \ref{fig:spectrum_interpolants}, we also post the loss and gradient norm curves, as well as the enstrophy spectra of the generated samples, for other choices of interpolants and F\"ollmer processes (trained with the same network, data, and number of epochs). We observe that $\beta_s = s^2$, namely $\dot\beta_0 = 0$, is important to ensure a stable gradient norm curve. This experiment demonstrates the superiority of the interpolant $\alpha_s =1-s, \beta_s =s^2,\sigma_s =1-s$. Moreover, by changing the diffusion coefficients from $\sigma_s dW_s$ to the optimal $g_s^{\rm F}dW_s$ does not influence the spectrums significantly in this example.
\begin{figure}[ht]
    \centering
    \includegraphics[width = 0.7 \linewidth]{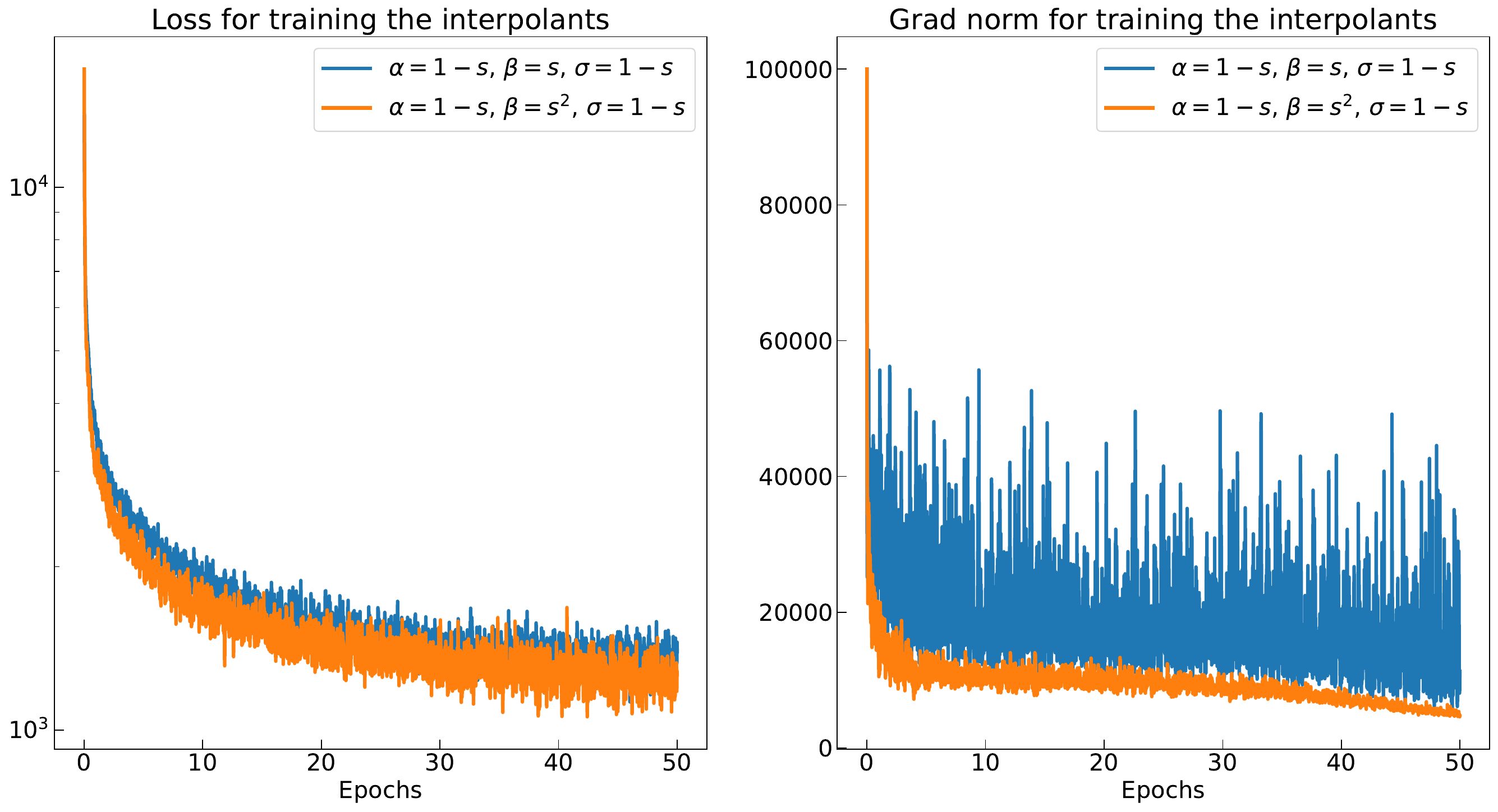}
    \vspace{1em}
    \caption{The training loss on the stochastic Navier-Stokes experiments and the norm of gradients of parameters during the training for stochastic interpolants with $\beta_s = s$ or $\beta_s = s^2$. We observe that the choice $\beta_s = s^2$ makes the gradient norm more stable.}
    \label{fig:loss_follmer_interpolants}
\end{figure}

\begin{figure}[t!]
    \centering
\begin{overpic}[width=0.24\linewidth]{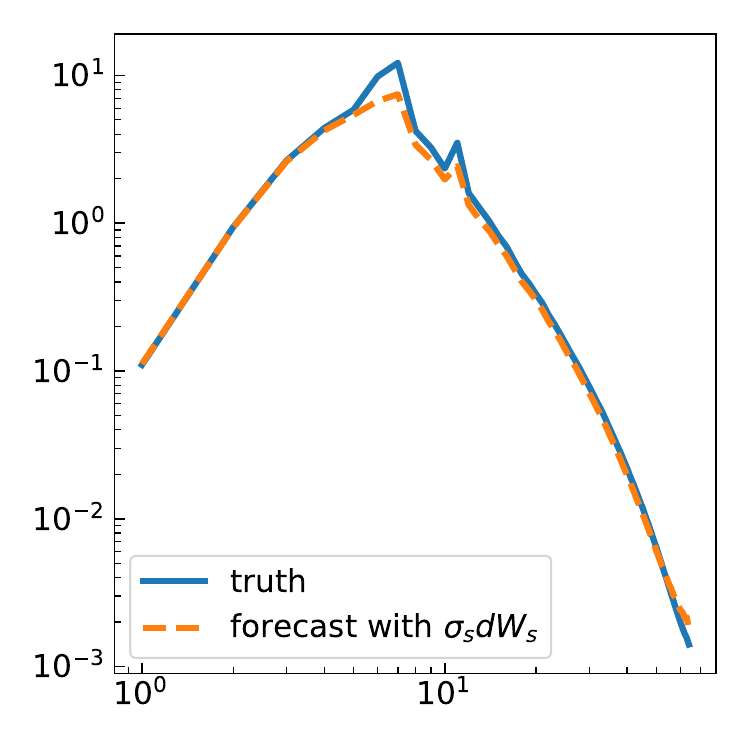}\put(23,-6){$\beta_s  = s$, use $\sigma_s dW_s$}
    \end{overpic}
    \begin{overpic}[width=0.24\linewidth]{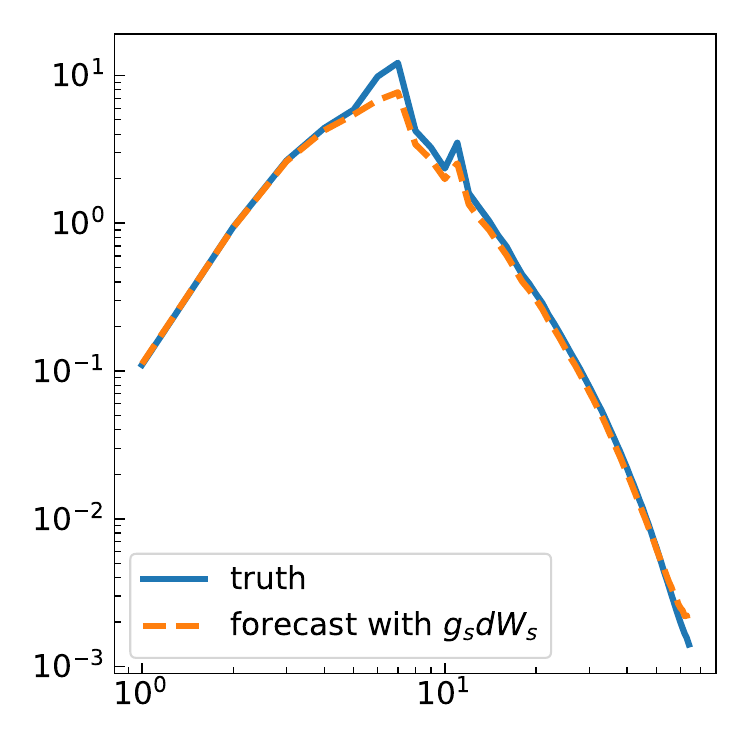}\put(23,-6){$\beta_s  = s$, use $g_s^{\rm F} dW_s$}
    \end{overpic}
    \begin{overpic}[width=0.24\linewidth]{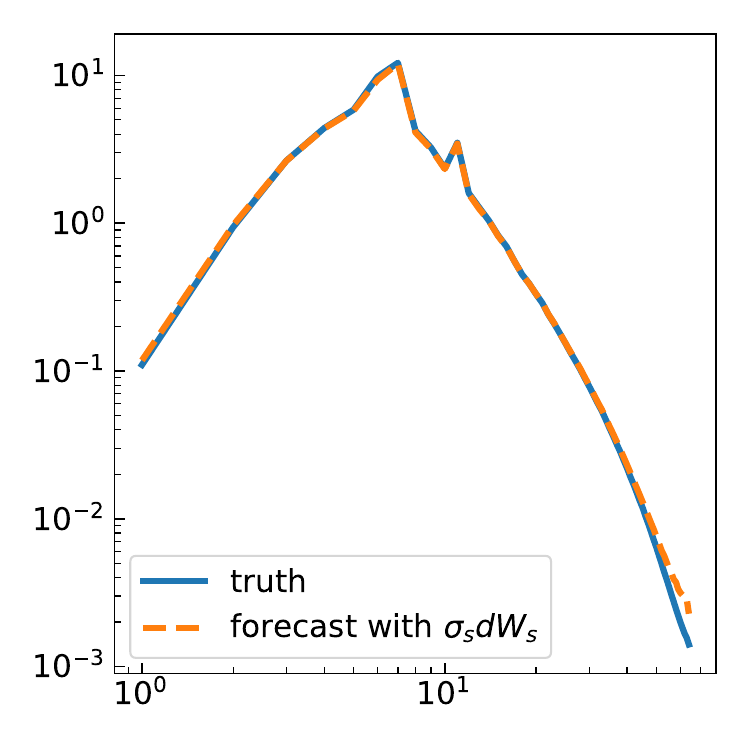}\put(23,-6){$\beta_s  = s^2$, use $\sigma_s dW_s$}
    \end{overpic}
    \begin{overpic}[width=0.24\linewidth]{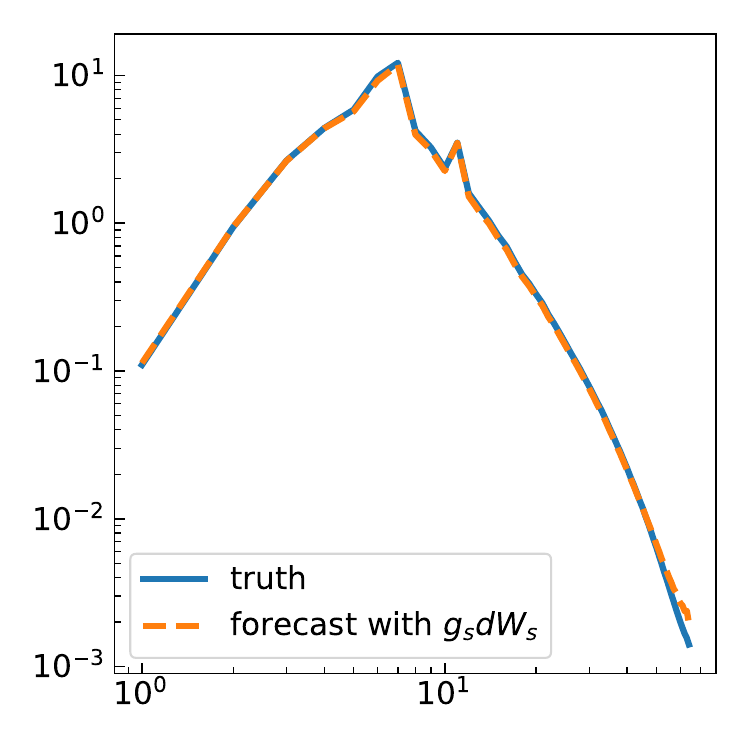}\put(23,-6){$\beta_s  = s^2$, use $g_s^{\rm F} dW_s$}
    \end{overpic}
    \vspace{1em}
    \caption{The enstrophy spectra of obtained samples from different generative models, compared to the truth, for interpolants $\alpha_s =1-s,\beta_s =s,\sigma_s =1-s$ and interpolants $\alpha_s =1-s,\beta_s =s^2,\sigma_s =1-s$. The generative SDE is chosen such that the diffusion term are $\sigma_s dW_s$ or $g_s^{\rm F}dW_s$. The spectrum is computed by averaging over $50$ different $\omega_t$ (independently drawn from the invariant distribution) and $300$ independent ensembles of $\omega_{t+\tau}$ (simulation data) or $\hat{\omega}_{t+\tau}$ (forecast results). All models are trained for lag $\tau=1$, with the same $2$M-parameter Unet, $200$K data, and the same training procedures as outlined in the beginning of this section. We observe that using $\beta_s = s^2$ leads to better performance in terms of the spectrums; this indicates that the stable gradient norm for $\beta_s = s^2$ (see Figure \ref{fig:loss_follmer_interpolants}) is important in the generation quality. Changing $g_s$ from $\sigma_s$ to the optimal $g_s^{\Fo}$ does not influence the spectrum significantly.}
    \label{fig:spectrum_interpolants}
\end{figure}
\paragraph{KL divergence comparisons between $\sigma_s dW_s$, $g_s^{\rm F}dW_s$, and Gaussian base ODE} We fix one initial vorticity field and compare the KL divergence between the true conditional distribution and the generated distribution. Here the KL divergence is calculated for the 1D conditional distributions of total enstrophy and energy, for a fixed $\omega_t$ and $\tau=1$. The goal here is to test whether changing the diffusion coefficient from $\sigma_s$ to $g_s^{\rm F} $ could improve the KL divergence accuracy of the generated distribution at $s=1$. Note that our theory in Theorem \ref{th:interp:f:si} shows that changing from $\sigma_s$ to $g_s^{\rm F}$ could improve the path KL divergence, which is an upper bound on the KL divergence of the generated marginal distribution at $s=1$. 

We consider the generative SDEs corresponding to the stochastic interpolant with $\alpha_s =1-s, \beta_s =s^2, \sigma_s =1-s$. We can vary the diffusion coefficient of the generative SDEs; we choose $\sigma_s  = 1-s$, or $g_s^{\rm F}  = \sqrt{(3-s)(1-s)}$ that corresponds to a F\"ollmer process. We also compare the results with the Gaussian base ODE generative model (equivalent to flow matching \cite{lipman2022flow, liu2022flow,albergo2022building}, corresponding to $\alpha_s =1-s, \beta_s =s$, and $x_0$ in the interpolant obeys an $\mathsf{N}(0,\Id)$ distribution). For this Gaussian base ODE, we use the same UNet architecture described in the beginning of Appendix \ref{sec:detail:nse}; again, the conditioning in the vector field is achieved by channel concatenation in the input. The quantitative KL results are reported in Table \ref{table: KL of different models} and the densities are shown in Figure \ref{fig:conditional density 1D slices}. We observe that SDE approaches lead to better KL accuracy. Moreover, changing the diffusion coefficients could potentially improve the KL accuracy, justifying the flexibility and usefulness of the interpolant approach.

\begin{table}[ht]
\centering
\begin{tabular}{ccc}
\toprule
{KL: truth versus generation} & density of total enstrophy & density of total energy \\ \midrule
SDE with $\sigma_s dW_s$ & 8.49e-3$\pm$1.57e-3    & \textbf{4.01e-3$\pm$8.95e-4}                                           \\
SDE with $g_s^{\rm F} dW_s$      & \textbf{2.79e-3$\pm$9.19e-4}             & 7.21e-3$\pm$1.58e-3                            \\
Gaussian base ODE        & 3.63e-3$\pm$9.63e-4            & 2.17e-2$\pm$2.50e-3                           \\ \bottomrule
\end{tabular}
\caption{Computed KL divergence between several 1D distributions of generated conditional samples and the true simulated samples. The 1D distributions are regarding total enstrophy and total energy of $\omega_{t+\tau}$, given a fixed $\omega_t$ and $\tau=1$. We first use kernel density estimation based on $6000$ ensembles to compute the density, and then compute the KL divergence with respect to the truth. Here we compare between the generated samples via SDEs with $\sigma_s dW_s$, via SDE with $g^{\rm F}_s dW_s$ which corresponds to a F\"ollmer process, and via ODEs with Gaussian base \cite{lipman2022flow, liu2022flow,albergo2022building}. Here to construct the SDEs, we use the interpolant $\alpha_s =1-s, \beta_s =s^2,\sigma_s =1-s$. In every column, the smallest value is highlighted in bold. The numbers are presented in the format \textbf{mean$\pm$std} where we do bootstrap to resample $6000$ ensembles with replacement and get different KL results and report the \textbf{mean and std}; this std provides useful information of the sensitivity of KL with respect to resampling. We observe SDE performs better in terms of KL.}
\label{table: KL of different models}
\end{table}

\clearpage
\begin{figure}[h]
    \centering
    \begin{overpic}[width=0.35\linewidth]{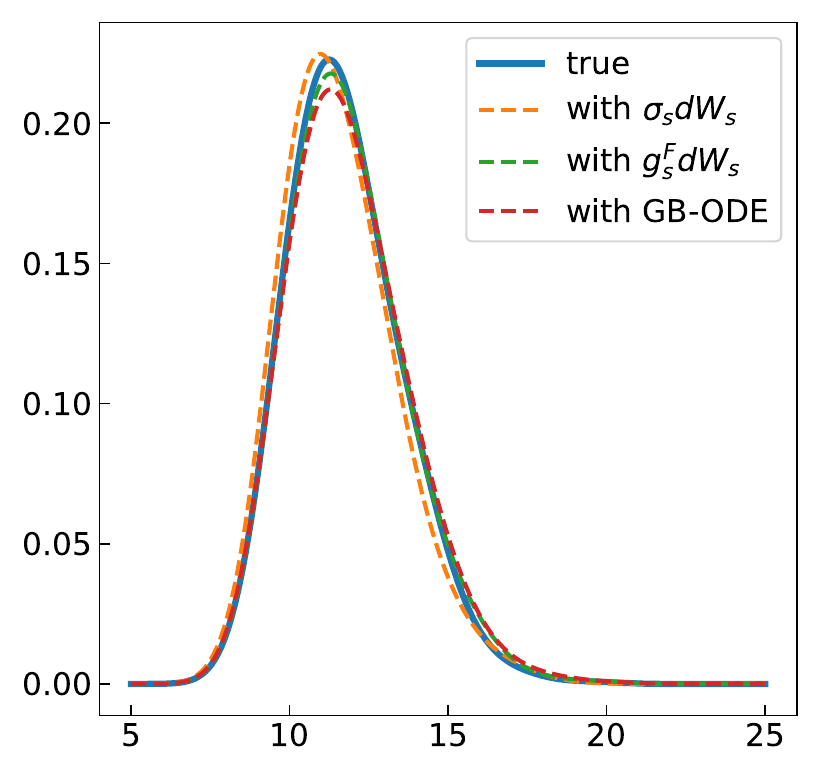}\put(4,-4){\begin{minipage}{\textwidth} density of total enstrophy of $\omega_{t+\tau}$ given $\omega_t$ \end{minipage}}
    \end{overpic}~~~~~~~~~~~~~~~~~~~~~~
    \begin{overpic}[width=0.35\linewidth]{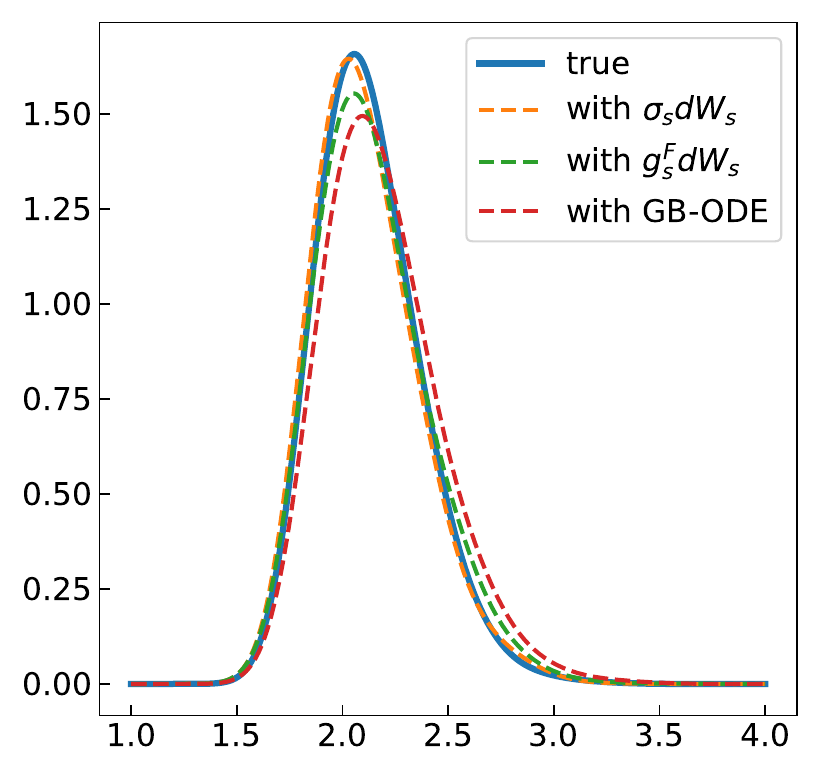}\put(4,-4){\begin{minipage}{\textwidth}density of total energy of $\omega_{t+\tau}$ given $\omega_t$\end{minipage}}
    \end{overpic}
    \vspace{2em}
    \caption{The 1D conditional distributions of total enstrophy and total energy of $\omega_{t+\tau}$, given a fixed initial vorticity field $\omega_t$ and $\tau=1$. Here we compare between the truth, generated samples via SDEs with $\sigma_s dW_s$, via SDE with $g^{\rm F}_s dW_s$ which corresponds to a F\"ollmer process, and via ODEs with Gaussian base \cite{lipman2022flow, liu2022flow,albergo2022building}. We use kernel density estimation based on $6000$ ensembles to compute the density. Here to construct the SDEs, we use the interpolant $\alpha_s =1-s, \beta_s =s^2,\sigma_s =1-s$. Quantitative KL divergence between these distributions with truth are reported in Table \ref{table: KL of different models}. We observe that SDEs lead to better density quality compared to ODEs. For the enstrophy, $g_s^{\rm F}dW_s$ behaves better while for the energy, $\sigma_s dW_s$ works better, for this fixed $\omega_t$. We leave as a future work to study the effect of the F\"ollmer $g_s^{\rm F}$ in more details.}
    \label{fig:conditional density 1D slices}
\end{figure}

\subsection{Video generation}
\label{app:video}

\begin{algorithm}[h]
  \caption{VideoEM: (Euler-Marayuma in Latent Space).}\label{alg:videoEM}
\begin{algorithmic}
\STATE{{\bfseries Input:}  Model $\hat b_s(y,y_0;y^t,t)$; previous frames $x^{1,\ldots, t-1}$; grid $s_0=0<s_1 \cdots< s_{N} =1$ with $N\in \N$; \textit{iid} $\eta_n \sim {\sf N}(0,\Id)$ for $n=0:N-1$.}
\STATE Set $\Delta s_n = s_{n+1}-s_n$ for $n=0:N-1$.
    \STATE{Set $Y_0 = \texttt{Encode}(x^{t-1})$}
    \FOR{$n=0:N-1$}
        \STATE{Draw $j \in \texttt{Unif}(2,\ldots,t-1)$}
        \STATE{$y^{t-j} = \texttt{Encode}(x^{t-j})$}
        \STATE{$Y_{n+1} = Y_n + \hat b_{s_n}(Y_n, Y_0; y^{t-j},t-j) \Delta s_n + \sigma_{s_n}  \sqrt{\Delta s_n} \eta_n$}
    \ENDFOR
    \STATE{$\hat{x}^t = \texttt{Decode}(Y_{N})$}
\STATE{\textbf{Return}: Generated frame $\hat{x}^t$}
\end{algorithmic}
\end{algorithm}
\paragraph{Training.} 
We train models for 250k gradient steps using AdamW starting at a learning rate of 2e-4. 
We use the UNet architecture popularized\footnote{\href{https://github.com/lucidrains/denoising-diffusion-pytorch/blob/main/denoising_diffusion_pytorch/classifier_free_guidance.py}{We use the lucidrains repository.}} in
\citet{ho2020denoising}.
We modify the architecture to condition on past frames by concatenating them along the channel dimension of the input. Each model is trained on four A100 GPUs for approximately 1-2 days.

\end{document}